\newcommand*{\rval}{x}
\newcommand*{\ps}[1][\rv]{\mathcal{#1}}
\newcommand*{\cf}[1][]{C_{#1}}
\newcommand*{\rf}[1][]{R_{#1}}
\newcommand*{\vs}{\mathcal{V}}
\newcommand*{\gambles}{\mathcal{L}}
\newcommand*{\gambleson}[1][\ps]{\mathcal{L}(#1)}
\newcommand*{\nonneggambles}{\gambles_{\geq0}}
\newcommand*{\posgambles}{\gambles_{>0}}
\newcommand*{\cfdom}{\mathcal{Q}}
\newcommand*{\cfdomo}{\mathcal{Q}_0}
\newcommand*{\reals}{\mathbb{R}}
\newcommand*{\posreals}{\reals_{>0}}
\newcommand*{\negreals}{\reals_{<0}}
\newcommand*{\nonnegreals}{\reals_{\geq0}}
\newcommand*{\nats}{\mathbb{N}}
\newcommand*{\cset}[3][]{\set[#1]{#2:#3}}
\newcommand*{\os}[1][]{A_{#1}}
\newcommand*{\rel}{\lhd}
\newcommand*{\crel}[1][]{\rel_{#1}}
\renewcommand*{\iff}{\Leftrightarrow}
\newcommand*{\then}{\Rightarrow}
\newcommand*{\vect}[1][u]{#1}
\newcommand*{\cone}{\mathcal{K}}
\newcommand*{\slo}[1][\mf]{\prec_{#1}}
\newcommand*{\nslo}[1][\mf]{{\not\prec}_{#1}}
\newcommand*{\rslo}[1][\mf]{\preceq_{#1}}
\newcommand*{\nrslo}[1][\mf]{\not\preceq_{#1}}
\newcommand*{\ilo}[1][\mf]{\parallel_{#1}}
\newcommand*{\vo}[1][]{\preceq_{#1}}
\newcommand*{\voi}[1][]{\succeq_{#1}}
\newcommand*{\svo}[1][]{\prec_{#1}}
\newcommand*{\svoi}[1][]{\succ_{#1}}
\newcommand*{\nsvoi}[1][]{\mathbin{{\nsucc}_{#1}}}
\newcommand*{\vspos}[1][\svoi]{\vs_{{#1}0}}
\newcommand*{\nmi}[1][]{\sqsubseteq_{#1}}
\newcommand*{\poset}[2]{\group{{#1};{#2}}}
\newcommand*{\allcfs}{\mathcal{C}}
\newcommand*{\allcohcfs}{\bar{\allcfs}}
\newcommand*{\cfs}[1][']{\allcfs{#1}}
\newcommand*{\hl}[1][]{H_{#1}}
\newcommand*{\hls}[1][]{\mathcal{H}_{#1}}
\newcommand*{\rew}{r}
\newcommand*{\rewtoo}{s}
\newcommand*{\rewards}[1][]{\mathcal{R}_{#1}}
\newcommand*{\bestr}{\top}
\newcommand*{\worstr}{\bot}
\newcommand*{\rewardsw}[1][\worstr]{\rewards[#1]}
\newcommand*{\poh}[1][\worstr]{\varphi_{#1}}
\newcommand*{\lift}[1][\worstr]{\tilde{\varphi}_{#1}}
\newcommand*{\RN}[1]{%
  \textup{\uppercase\expandafter{\romannumeral#1}}%
}
\newcommand*{\lexi}{\mathrm{L}}
\newcommand*{\sodv}[1][]{D_{#1}}
\newcommand*{\prel}[1][]{\closedprec_{#1}}
\newcommand*{\nprel}[1][]{\not\closedprec_{#1}}
\newcommand*{\allsodvs}{\mathcal{D}}
\newcommand*{\allcohsodvs}{\bar{\allsodvs}}
\newcommand*{\maxcohsodvs}{\hat{\allsodvs}}
\newcommand*{\maxcohsodvsdom}[1]{{\hat{\allsodvs}}_{#1}}
\newcommand*{\mf}[1][]{p_{#1}}
\newcommand*{\E}[1][\mf]{\mathrm{E}_{#1}}
\newcommand*{\ev}{B}
\newcommand*{\ind}[1][\ev]{\mathbb{I}_{#1}}
\newcommand*{\indset}[1]{\ind[\set{#1}]}
\newcommand*{\lo}[1][]{<_\mathrm{L}^{#1}}
\newcommand*{\rlo}[1][]{\leq_\mathrm{L}^{#1}}
\newcommand*{\cohlexisodvs}{\allcohsodvs_\mathrm{L}}
\newcommand*{\eqrel}[1][k]{\simeq_j}
\newcommandx{\ec}[2][1=f, 2=k]{{#1}/\ker\Lambda_{#2}}
\newcommandx{\qs}[2][1=\gambleson, 2=k]{{#1}/\ker\Lambda_{#2}}
\newcommand*{\lp}[1][]{\underline{P}_{#1}}
\newcommand*{\alllexicohcfs}{\allcohcfs_\mathrm{L}}
\newcommand*{\alllexicohsodvs}{\allcohsodvs_\lexi}
\DeclareMathOperator*{\Posi}{posi}
\DeclareMathOperator*{\Span}{span}
\DeclareMathOperator*{\ch}{CH}
\DeclareMathOperator*{\interior}{int}
\DeclareMathOperator*{\closure}{cl}
\DeclareMathOperator*{\relinterior}{ri}
\DeclareMathOperator*{\domain}{dom}
\DeclarePairedDelimiter{\group}{(}{)}
\DeclarePairedDelimiter{\set}{\{}{\}}
\DeclarePairedDelimiter{\abs}{\vert}{\vert}
\theoremstyle{plain}
\newtheorem{theorem}{Theorem}
\newtheorem{proposition}[theorem]{Proposition}
\newtheorem{lemma}[theorem]{Lemma}
\theoremstyle{remark}
\newtheorem{example}{Example}
\theoremstyle{definition}
\newtheorem{definition}{Definition}
\begin{document}
\title{Lexicographic choice functions}

 \author{Arthur Van Camp}
 \address{Ghent University\\IDLab\\Technologiepark--Zwijnaarde 914\\9052 Zwijnaarde, Belgium}
 \email{arthur.vancamp@ugent.be}
 \author{Gert de Cooman}
 \address{Ghent University\\IDLab\\Technologiepark--Zwijnaarde 914\\9052 Zwijnaarde, Belgium}
 \email{gert.decooman@ugent.be}
 \author{Enrique Miranda}
 \address{Department of Statistics and Operations Research\\University of Oviedo, Spain}
 \email{mirandaenrique@uniovi.es}

\begin{abstract}
We investigate a generalisation of the coherent choice functions considered by~\cite{seidenfeld2010}, by sticking to the convexity axiom but imposing no Archimedeanity condition.
We define our choice functions on vector spaces of options, which allows us to incorporate as special cases both \citeauthor{seidenfeld2010}'s \citeyearpar{seidenfeld2010} choice functions on horse lotteries and sets of desirable gambles \citep{quaeghebeur2012:itip}, and to investigate their connections.

We show that choice functions based on sets of desirable options (gambles) satisfy Seidenfeld's convexity axiom only for very particular types of sets of desirable options, which are in a one-to-one relationship with the lexicographic probabilities. 
We call them lexicographic choice functions. 
Finally, we prove that these choice functions can be used to determine the most conservative convex choice function associated with a given binary relation.
\end{abstract}

\keywords{Choice functions, coherence, lexicographic probabilities, horse lotteries, maximality, preference relations, convexity, sets of desirable gambles}

\maketitle

\section{Introduction}
Since the publication of the seminal work of \citet{arrow1951} and \citet{uzawa1956}, coherent choice functions have been used widely as a model of the rational behaviour of an individual or a group.
In particular, \citet{seidenfeld2010} established an axiomatisation of coherent choice functions, generalising \citeauthor{Rubin1987}'s \citeyearpar{Rubin1987} axioms to allow for incomparability.
Under this axiomatisation, they proved a representation theorem for coherent choice functions in terms of probability-utility pairs: a choice function $\cf$ satisfies their coherence axioms if and only if there is some non-empty set $S$ of probability-utility pairs such that $f\in\cf(\os)$ whenever the option $f$ maximises $p$-expected $u$-utility over the set of options $\os$ for some $(p,u)$ in $S$.

Allowing for incomparability between options may often be of crucial importance.
Faced with a choice between two options, a subject may not have enough information to establish a (strict or weak) preference of one over the other: the two options may be incomparable.
This will indeed typically be the case when the available information is too vague or limited.
It arises quite intuitively for group decisions, but also for decisions made by a single subject, as was discussed quite thoroughly by \citet{williams1975}, \citet{levi1980}, and \citet{walley1991}, amongst many others.
Allowing for incomparability lies at the basis of a generalising approach to probability theory that is often referred to by the term \emph{imprecise probabilities}.
It unifies a diversity of well-known uncertainty models, including typically non-linear (or non-additive) functionals, credal sets, and sets of desirable gambles; see the introductory book by \citet{augustin2011} for a recent overview.
Among these, coherent sets of desirable gambles, as discussed by \citet{quaeghebeur2012:itip}, are usually considered to constitute the most general and powerful type of model.
Such sets collect the gambles that a given subject considers strictly preferable to the status quo.

Nevertheless, choice functions clearly lead to a still more general model than sets of desirable gambles, because the former's preferences are not necessarily completely determined by the pair-wise comparisons between options that essentially constitute the latter.
This was of course already implicit in \citeauthor{seidenfeld2010}'s \citeyearpar{seidenfeld2010} work, but was investigated in detail in one of our recent papers~\citep{vancamp2017}, where we zoomed in on the connections between choice functions, sets of desirable gambles, and indifference.

In order to explore the connection between indifference and the strict preference expressed by choice functions, we extended the above-mentioned axiomatisation by \citet{seidenfeld2010} to choice functions defined on vector spaces of options, rather than convex sets of horse lotteries, and also let go of two of their axioms: (i) the Archi\-me\-dean one, because it prevents choice functions from modelling the typically non-Archimedean preferences captured by coherent sets of desirable gambles; and (ii) the convexity axiom, because it turns out to be hard to reconcile with Walley--Sen maximality as a decision rule, something that is closely tied in with coherent sets of desirable options~\citep{troffaes2007}.
Although our alternative axiomatisation allows for more leeway, and for an easy comparison with the existing theory of sets of desirable gambles, it also has the drawback of no longer forcing a Rubinesque representation theorem, or in other words, of not leading to a strong belief structure \Citep{cooman2003a}.
Such a representation is nevertheless interesting, because it allows choice functions to be constructed using basic building blocks.
In an earlier paper~\citep{vancamp2017}, we did discuss a few interesting examples of special `representable' choice functions, such as the ones from a coherent set of desirable gambles via maximality, or those determined by a set of probability measures via E-admissibility.

The goal of the present paper is twofold: to (i) further explore the connection of our definition of choice functions with \citeauthor{seidenfeld2010}'s \citeyearpar{seidenfeld2010}; and to (ii) investigate in detail the implications of \citeauthor{seidenfeld2010}'s \citeyearpar{seidenfeld2010} convexity axiom in our context.
We will prove that, perhaps somewhat surprisingly, for those choice functions that are uniquely determined by binary comparisons, convexity is equivalent to being representable by means of a lexicographic probability measure.
This is done by first establishing the implications of convexity in terms of the binary comparisons associated with a choice function, giving rise to what we will call \emph{lexicographic sets of desirable gambles}.
These sets include as particular cases the so-called \emph{maximal} and \emph{strictly desirable} sets of desirable gambles.
Although in the particular case of binary possibility spaces these are the only two possibilities, for more general spaces lexicographic sets of gambles allow for a greater level of generality, as one would expect considering the above-mentioned equivalence.

A consequence of our equivalence result is that we can consider infima of choice functions associated with lexicographic probability measures, and in this manner subsume the examples of E-admissibility and M-admissibility discussed in an earlier paper~\citep{vancamp2017}.
It will follow from the discussion that these infima also satisfy the convexity axiom.
As one particularly relevant application of these ideas, we prove that the most conservative convex choice function associated with a binary preference relation can be obtained as the infimum of its dominating lexicographic choice functions.

The paper is organised as follows.
In Section~\ref{sec:coherent-choice}, we recall the basics of coherent choice functions on vector spaces of options as introduced in our earlier work~\citep{vancamp2015}.
We motivate our definitions by showing in Section~\ref{sec:connection horse lotteries} that they include in particular coherent choice functions on horse lotteries, considered by \citeauthor{seidenfeld2010}'s \citeyearpar{seidenfeld2010}, and we discuss in some detail the connection between the rationality axioms considered by \citet{seidenfeld2010} and ours.

As a particularly useful example, we discuss in Section~\ref{sec:link with desirability} the choice functions that are determined by binary comparisons. 
We have already shown before~\citep{vancamp2017} that this leads to the model of coherent sets of desirable gambles; here we study the implications of including convexity as a rationality axiom.

In Section~\ref{sec:lexicographic}, we motivate our definition of lexicographic choice functions and study the properties of their associated binary preferences. 
We prove the connection with lexicographic probability systems and show that the infima of such choice functions can be used when we want to determine the implications of imposing convexity and maximality. 
We conclude with some additional discussion in Section~\ref{sec:conclusions}.

\section{Coherent choice functions on vector spaces}
\label{sec:coherent-choice}
Consider a real vector space $\vs$ provided with the vector addition $+$ and scalar multiplication.
We denote the additive identity by $0$.
For any subsets~$\os[1]$ and~$\os[2]$ of $\vs$ and any $\lambda$ in $\reals$, we let $\lambda\os[1]\coloneqq\cset{\lambda\vect}{\vect\in\os[1]}$ and $\os[1]+\os[2]\coloneqq\cset{\vect+\vect[v]}{\vect\in\os[1]\text{ and }\vect[v]\in\os[2]}$.

Elements of $\vs$ are intended as abstract representations of \emph{options} amongst which a subject can express his preferences, by specifying choice functions.
Often, options will be real-valued maps on some possibility space, interpreted as uncertain rewards---and therefore also called \emph{gambles}.
More generally, they can be \emph{vector-valued gambles}: vector-valued maps on the possibility space.
We will see further on that by using such vector-valued gambles, we are able to include as a special case \emph{horse lotteries}, the options considered for instance by~\citet{seidenfeld2010}.
Also, we have shown~\citep{vancamp2017} that indifference for choice functions can be studied efficiently by also allowing equivalence classes of indifferent gambles as options; these yet again constitute a vector space, where now the vectors cannot always be identified easily with maps on some possibility space, or gambles.
For these reasons, we allow in general any real vector space to serve as an our set of (abstract) possible options.
We will call such a real vector space an \emph{option space}.

We denote by $\cfdom\group{\vs}$ the set of all non-empty \emph{finite} subsets of $\vs$, a strict subset of the power set of $\vs$.
When it is clear what option space $\vs$ we are considering, we will also use the simpler notation $\cfdom$.
Elements $\os$ of $\cfdom$ are the option sets amongst which a subject can choose his preferred options.

\begin{definition}\label{def: choice function}
A \emph{choice function} $\cf$ on an option space $\vs$ is a map
\begin{equation*}
\cf\colon\cfdom\to\cfdom\cup\set{\emptyset}\colon\os\mapsto\cf(\os)\text{ such that $\cf(\os)\subseteq\os$.}
\end{equation*}
We collect all the choice functions on $\vs$ in $\allcfs\group{\vs}$, often denoted as $\allcfs$ when it is clear from the context what the option space is.
\end{definition}
\noindent
The idea underlying this simple definition is that a choice function~$\cf$ selects the set $\cf(\os)$ of `best' options in the \emph{option set}~$\os$.
Our definition resembles the one commonly used in the literature~\citep{aizerman1985,seidenfeld2010,sen1977}, except perhaps for an also not entirely unusual restriction to \emph{finite} option sets~\citep{He2012,Schwartz1972,Sen1971}.

Equivalently to a choice function $\cf$, we may consider its associated \emph{rejection function} $\rf$, defined by $\rf(\os)\coloneqq\os\setminus\cf(\os)$ for all $\os$ in $\cfdom$.
It returns the options $\rf(A)$ that are rejected---not selected---by $\cf$.

Another equivalent notion is that of a \emph{choice relation}.
Indeed, for any choice function $\cf$---and therefore for any rejection function $\rf$---the associated choice relation is the binary relation $\crel$ on $\cfdom$~\citep[Section~3]{seidenfeld2010}, defined by:
\begin{equation}\label{eq:choice relation}
\os[1]\crel\os[2]
\iff
\os[1]\subseteq\rf\group{\os[1]\cup\os[2]}
\text{ for all $\os[1]$ and $\os[2]$ in $\cfdom$.}
\end{equation}
The intuition behind $\crel$ is clear: $\os[1]\crel\os[2]$ whenever every option in $\os[1]$ is rejected when presented with the options in $\os[1]\cup\os[2]$.

\subsection{Useful basic definitions and notation}
We call $\nats$ the set of all (positive) integers, 
$\posreals$ the set of all (strictly) positive real numbers, and $\nonnegreals\coloneqq\posreals\cup\set{0}$.

Given any subset $\os$ of an option space $\vs$, we define its \emph{positive hull} $\Posi(\os)$ as the set of all positive finite linear combinations of elements of $\os$:
\begin{equation*}
\Posi\group{\os}\coloneqq\cset[\bigg]{\sum_{k=1}^n\lambda_k\vect[u_k]}{n\in\nats,\lambda_k\in\posreals,\vect[u_k]\in\os}\subseteq\vs,
\end{equation*}
and its \emph{convex hull} $\ch\group{\os}$ as the set of convex combinations of elements of $\os$:
\begin{equation*}
\ch\group{\os}\coloneqq\cset[\bigg]{\sum_{k=1}^n\alpha_k\vect[u_k]} {n\in\nats,\alpha_k\in\nonnegreals,\sum_{k=1}^n\alpha_k=1,\vect[u_k]\in\os}\subseteq\Posi\group{\os}\subseteq\vs.
\end{equation*}

A subset $\os$ of $\vs$ is called a \emph{convex cone} if it is closed under positive finite linear combinations, i.e.~if $\Posi\group{\os}=\os$.
A convex cone $\cone$ is called \emph{proper} if $\cone\cap-\cone=\set{0}$.

With any proper convex cone $\cone\subseteq\vs$, we can associate an ordering~$\vo[\cone]$ on~$\vs$, defined for all $\vect$ and $\vect[v]$ in~$\vs$ as follows:
\begin{equation*}
\vect\vo[\cone]\vect[v]
\iff\vect[v]-\vect\in\cone.
\end{equation*}
We also write $\vect\voi[\cone]\vect[v]$ for $\vect[v]\vo[\cone]\vect$.
The ordering $\vo[\cone]$ is actually a \emph{vector ordering}: it is a partial order---reflexive, antisymmetric and transitive---that satisfies the following two characteristic properties:
\begin{align}
&\vect[u_1]\vo[\cone]\vect[u_2]
\iff\vect[u_1]+\vect[v]\vo[\cone]\vect[u_2]+\vect[v]
\label{eq: vector ordering 1: add constant vector to both sides};\\
&\vect[u_1]\vo[\cone]\vect[u_2]
\iff\lambda\vect[u_1]\vo[\cone]\lambda\vect[u_2],
\label{eq: vector ordering 2: multiply with positive lambda},
\end{align}
for all $\vect[u_1]$, $\vect[u_2]$ and $\vect[v]$ in $\vs$, and all $\lambda$ in $\posreals$.
Observe, by the way, that as a consequence
\begin{equation*}
\vect\vo[\cone]\vect[v]
\iff0\vo[\cone]\vect[v]-\vect
\iff\vect-\vect[v]\vo[\cone]0
\end{equation*}
for all $\vect$ and $\vect[v]$ in $\vs$.

Conversely, given any vector ordering $\vo$, the proper convex cone $\cone$ from which it is derived can always be retrieved by $\cone=\cset{\vect\in\vs}{\vect\voi0}$.
When the abstract options are gambles, $\vo$ will typically be the point-wise order $\leq$, but it need not necessarily be.

Finally, with any vector ordering $\vo$, we associate the strict partial ordering $\svo$ as follows:
\begin{equation*}
\vect\svo\vect[v]
\iff\group{\vect\vo\vect[v]\text{ and }\vect\neq\vect[v]}\iff\vect[v]-\vect\in\cone\setminus\set{0}
\text{ for all $\vect$ and $\vect[v]$ in $\vs$.}
\end{equation*}
We call $\vect$ \emph{positive} if $\vect\svoi0$, and collect all positive options in the convex cone~$\vspos\coloneq\cone\setminus\set{0}$.

From now on, we assume that $\vs$ is an \emph{ordered vector space}, with a generic but fixed vector ordering $\vo[\cone]$. We will refrain from explicitly mentioning the actual proper convex cone~$\cone$ we are using, and simply write $\vs$ to mean the ordered vector space, and use $\vo$ as a generic notation for the associated vector ordering.

\subsection{Rationality axioms}\label{sec:rationality axioms}
We focus on a special class of choice functions, which we will call \emph{coherent}.

\begin{definition}\label{def: rationality axioms for choice functions}
We call a choice function~$\cf$ on~$\vs$ \emph{coherent} if for all $\os$, $\os[1]$ and $\os[2]$ in $\cfdom$, all $\vect$ and $\vect[v]$ in $\vs$, and all $\lambda$ in $\posreals$:
\begin{enumerate}[noitemsep,topsep=0pt,label=\upshape C$_{\arabic*}$.,ref=\upshape C$_{\arabic*}$,leftmargin=*]
\item\label{coh cf 1: irreflexivity}
  $\cf\group{\os}\neq\emptyset$;
\item\label{coh cf 2: non-triviality}
  if $\vect\svo\vect[v]$ then $\set{\vect[v]}=\cf\group{\set{\vect,\vect[v]}}$;
\item\label{coh cf 3}
  \begin{enumerate}[noitemsep,leftmargin=*,label=\upshape\alph*.,ref=\theenumi\upshape\alph*]
    \item\label{coh cf 3a: alpha}
      if~$\cf\group{\os[2]}\subseteq\os[2]\setminus\os[1]$ and $\os[1]\subseteq\os[2]\subseteq\os$
      then~$\cf\group{\os}\subseteq\os\setminus\os[1]$;
    \item\label{coh cf 3b: aizerman}
      if~$\cf\group{\os[2]}\subseteq\os[1]$ and $\os\subseteq\os[2]\setminus\os[1]$
      then~$\cf\group{\os[2]\setminus\os}\subseteq\os[1]$;
  \end{enumerate}
\item\label{coh cf 4}
  \begin{enumerate}[noitemsep,leftmargin=*,label=\upshape\alph*.,ref=\theenumi\upshape\alph*]
    \item\label{coh cf 4a: scaling}
      if~$\os[1]\subseteq\cf\group{\os[2]}$ then~$\lambda\os[1]\subseteq\cf\group{\lambda\os[2]}$;
    \item\label{coh cf 4b: independence}
      if~$\os[1]\subseteq\cf\group{\os[2]}$ then~$\os[1]+\set{\vect}\subseteq\cf\group{\os[2]+\set{\vect}}$;
  \end{enumerate}
\end{enumerate}
We collect all the coherent choice functions on $\vs$ in $\allcohcfs\group{\vs}$, often denoted as $\allcohcfs$ when it is clear from the context what the option space is.
\end{definition}
\noindent
Parts~\ref{coh cf 3a: alpha} and~\ref{coh cf 3b: aizerman} of Axiom~\ref{coh cf 3} are respectively known as \emph{Sen's condition $\alpha$} and \emph{Aizerman's condition}.
They are more commonly written in terms of the rejection function as, respectively:
\begin{equation}\label{eq:senalpha}
\group{\os[1]\subseteq\rf\group{\os[2]}\text{ and } \os[2]\subseteq\os} \then\os[1]\subseteq\rf\group{\os}
\text{, for all $\os,\os[1],\os[2]$ in $\cfdom$,}
\end{equation}
and
\begin{equation}\label{eq:aizerman}
\group{\os[1]\subseteq\rf\group{\os[2]} \text{ and }\os\subseteq\os[1]} \then\os[1]\setminus\os\subseteq\rf\group{\os[2]\setminus\os}
\text{, for all $\os,\os[1],\os[2]$ in $\cfdom$.}
\end{equation}

These axioms constitute a subset of the ones introduced by~\citet{seidenfeld2010}, duly translated from horse lotteries to our abstract options, which are more general as we will show in Section~\ref{sec:connection horse lotteries} further on.
In this respect, our notion of coherence is less restrictive than theirs. 
On the other hand, our Axiom~\ref{coh cf 2: non-triviality} is more restrictive the corresponding one in~\cite{seidenfeld2010}. 
This is necessary for the link between coherent choice functions and coherent sets of desirable gambles we will establish in Section~\ref{sec:link with desirability}. 

One axiom we omit from our coherence definition, is the Archimedean one.
Typically the preference associated with coherent sets of desirable gambles does not have the Archimedean property~\citep[Section~3]{zaffalon2015:birth}, so letting go of this axiom is necessary if we want to explore the connection with desirability.

The second axiom that we do not consider as necessary for coherence is what we will call the \emph{convexity axiom}:
\begin{enumerate}[noitemsep,topsep=0pt,label=\upshape C$_{\arabic*}$.,ref=\upshape C$_{\arabic*}$,leftmargin=*,start=5]
\item\label{coh cf 5: convexity} if $\os\subseteq\os[1]\subseteq\ch\group{\os}$ then $\cf\group{\os}\subseteq\cf\group{\os[1]}$, for all $\os$ and $\os[1]$ in $\cfdom$.
\end{enumerate}
As we will show in Section~\ref{sec:link with desirability}, it is incompatible with Walley--Sen maximality \citep{walley1991,troffaes2007} as a decision rule.
Nevertheless, we intend to investigate the connection with desirability for coherent choice functions that do satisfy the convexity axiom.


Two dominance properties are immediate consequences of coherence:

\begin{proposition}\label{prop:dominance}
Let $\cf$ be a coherent choice function on $\cfdom$.
Then for all $\vect[u_1]$ and $\vect[u_2]$ in $\vs$ such that $\vect[u_1]\vo\vect[u_2]$, all $\os$ in $\cfdom$ and all $\vect[v]$ in $\os\setminus\set{\vect[u_1],\vect[u_2]}$:
\begin{enumerate}[label=\upshape\alph*.,ref=\upshape\alph*,leftmargin=*,noitemsep]
\item\label{prop: consequences of coherence: item 4 a}
if\/ $\vect[u_2]\in\os$ and\/ $\vect[v]\notin\cf(\os\cup\set{\vect[u_1]})$
then $\vect[v]\notin\cf(\os)$;
\item\label{prop: consequences of coherence: item 4 b}
if\/ $\vect[u_1]\in\os$ and\/ $\vect[v]\notin\cf(\os)$ then
$\vect[v]\notin\cf(\set{\vect[u_2]}\cup\os\setminus\set{\vect[u_1]})$.
\end{enumerate}
\end{proposition}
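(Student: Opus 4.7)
My plan is to reduce both parts to Sen's condition~$\alpha$ (axiom~\ref{coh cf 3a: alpha}) and Aizerman's condition (axiom~\ref{coh cf 3b: aizerman}) in their rejection-function forms~\eqref{eq:senalpha}--\eqref{eq:aizerman}, while invoking axiom~\ref{coh cf 2: non-triviality} to anchor the rejection of the dominated option $\vect[u_1]$. The degenerate case $\vect[u_1]=\vect[u_2]$ is immediate: in~(\ref{prop: consequences of coherence: item 4 a}) the hypothesis $\vect[u_2]\in\os$ forces $\os\cup\set{\vect[u_1]}=\os$, and in~(\ref{prop: consequences of coherence: item 4 b}) the hypothesis $\vect[u_1]\in\os$ forces $\set{\vect[u_2]}\cup\os\setminus\set{\vect[u_1]}=\os$. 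I therefore assume $\vect[u_1]\svo\vect[u_2]$ throughout, so that~\ref{coh cf 2: non-triviality} yields $\vect[u_1]\in\rf(\set{\vect[u_1],\vect[u_2]})$.

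For part~(\ref{prop: consequences of coherence: item 4 a}), I further reduce to the case $\vect[u_1]\notin\os$, since otherwise $\os\cup\set{\vect[u_1]}=\os$ and there is nothing to show. Because $\vect[u_2]\in\os$, we have $\set{\vect[u_1],\vect[u_2]}\subseteq\os\cup\set{\vect[u_1]}$, so Sen's~$\alpha$ promotes the rejection of $\vect[u_1]$ to $\vect[u_1]\in\rf(\os\cup\set{\vect[u_1]})$. Aizerman's condition, applied with outer set $\os\cup\set{\vect[u_1]}$ and removed set $\set{\vect[u_1]}\subseteq\rf(\os\cup\set{\vect[u_1]})$, then gives $\rf(\os\cup\set{\vect[u_1]})\setminus\set{\vect[u_1]}\subseteq\rf(\os)$. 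The hypothesis $\vect[v]\in\rf(\os\cup\set{\vect[u_1]})$ together with $\vect[v]\neq\vect[u_1]$ places $\vect[v]$ on the left-hand side, so $\vect[v]\in\rf(\os)$, as required.

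Part~(\ref{prop: consequences of coherence: item 4 b}) is mirror-symmetric. Axiom~\ref{coh cf 2: non-triviality} together with Sen's~$\alpha$ (applied via $\set{\vect[u_1],\vect[u_2]}\subseteq\os\cup\set{\vect[u_2]}$) yields $\vect[u_1]\in\rf(\os\cup\set{\vect[u_2]})$, and a second use of Sen's~$\alpha$ lifts the hypothesis $\vect[v]\in\rf(\os)$ to $\vect[v]\in\rf(\os\cup\set{\vect[u_2]})$. Finally, Aizerman's condition with removed set $\set{\vect[u_1]}\subseteq\rf(\os\cup\set{\vect[u_2]})$ gives
\begin{equation*}
\rf(\os\cup\set{\vect[u_2]})\setminus\set{\vect[u_1]}\subseteq\rf((\os\cup\set{\vect[u_2]})\setminus\set{\vect[u_1]})=\rf(\set{\vect[u_2]}\cup\os\setminus\set{\vect[u_1]}),
\end{equation*}
and $\vect[v]\neq\vect[u_1]$ delivers the conclusion. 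I do not foresee a real obstacle; the only care needed is the bookkeeping over whether $\vect[u_1]$ or $\vect[u_2]$ already lies in $\os$, which governs the set identities used after each application of Aizerman's condition.
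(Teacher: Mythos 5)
Your proof is correct and follows essentially the same route as the paper's: reduce to $\vect[u_1]\svo\vect[u_2]$, use Axiom~\ref{coh cf 2: non-triviality} to reject $\vect[u_1]$ from the pair, promote that rejection with Sen's condition~$\alpha$, and then remove $\vect[u_1]$ via Aizerman's condition, with the same case bookkeeping for $\vect[u_1]=\vect[u_2]$ and $\vect[u_1]\in\os$. The only cosmetic difference is that you apply Aizerman with the full rejection set in the role of $\os[1]$, whereas the paper works with the specific rejected options, which changes nothing of substance.
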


\begin{proof}
The result is trivial when $\vect[u_1]=\vect[u_2]$, so let us assume that $\vect[u_1]\svo\vect[u_2]$.

The first statement is again trivial if $\vect[u_1]\in\os$.
When $\vect[u_1]\notin\os$, it follows from Axiom~\ref{coh cf 2: non-triviality} that $\vect[u_1]\notin\cf\group{\set{\vect[u_1],\vect[u_2]}}$.
By applying Axiom~\ref{coh cf 3a: alpha} in the form of Equation~\eqref{eq:senalpha}, we find that $\vect[u_1]\notin\cf(\os\cup\set{\vect[u_1]})$, and then applying Axiom~\ref{coh cf 3b: aizerman} in the form of Equation~\eqref{eq:aizerman}, together with the assumption that $\vect[v]\notin\cf\group{\os\cup\set{\vect[u_1]}}$, we conclude that $\vect[v]\notin\cf(\os\cup\set{\vect[u_1]}\setminus\set{\vect[u_1]})=\cf(\os)$.

For the second statement, it follows from Axiom~\ref{coh cf 2: non-triviality} that $\vect[u_1]\notin\cf(\set{\vect[u_1],\vect[u_2]})$. 
By applying Axiom~\ref{coh cf 3a: alpha} in the form of Equation~\eqref{eq:senalpha}, we find that both $\vect[u_1]\notin\cf(\os\cup\set{\vect[u_2]})$ and $\vect[v]\notin\cf(\os\cup\set{\vect[u_2]})$, so we can apply Axiom~\ref{coh cf 3b: aizerman}  in the form of Equation~\eqref{eq:aizerman} to conclude that $\vect[v]\notin\cf(\set{\vect[u_2]}\cup\os\setminus\set{\vect[u_1]})$.
\end{proof}

We are interested in conservative reasoning with choice functions.
We therefore introduce a binary relation $\nmi$ on the set $\allcfs$ of all choice functions, having the interpretation of `not more informative than', or, in other words, `at least as uninformative as'.

\begin{definition}\label{def: not more informative than relation}
Given two choice functions $\cf[1]$ and $\cf[2]$ in $\allcfs$, we call $\cf[1]$ \emph{not more informative than} $\cf[2]$---and we write $\cf[1]\nmi\cf[2]$---if $\group{\forall\os\in\cfdom}\cf[1](\os)\supseteq\cf[2](\os)$.
\end{definition}
\noindent
This intuitive way of ordering choice functions is also used by \citet[Section~2]{Bradley2015} and \citet[Definition~6]{vancamp2017}.
The underlying idea is that a choice function is more informative when it consistently chooses more specifically---or more restrictively---amongst the available options.

Since, by definition, $\nmi$ is a product ordering of set inclusions, the following result is immediate~\citep{davey1990}.

\begin{proposition}\label{prop: the not more informative relation leads to posets}
The structure $\poset{\allcfs}{\nmi}$ is a complete lattice:
\begin{enumerate}[leftmargin=*,noitemsep,topsep=0pt,label=\upshape(\roman*)]
\item it is a \emph{partially ordered set}, or \emph{poset}, meaning that the binary relation $\nmi$ on $\allcfs$ is \emph{reflexive}, \emph{antisymmetric} and \emph{transitive};
\item for any subset\/ $\cfs$ of\/ $\allcfs$, its infimum $\inf\cfs$ and its supremum $\sup\cfs$ with respect to the ordering\/ $\nmi$ exist in $\allcfs$, and are given by $\inf\cfs\group{\os}=\bigcup_{\cf\in\cfs}\cf\group{\os}$ and $\sup\cfs\group{\os}=\bigcap_{\cf\in\cfs}\cf\group{\os}$ for all $\os$ in $\cfdom$.
\end{enumerate}
\end{proposition}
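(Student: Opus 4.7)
The plan is to verify the two clauses separately, relying on the fact that $\nmi$ is essentially a pointwise reverse-inclusion ordering on maps, and then applying the standard fact that powerset lattices are complete.

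For clause (i), reflexivity, antisymmetry and transitivity of $\nmi$ follow immediately from the corresponding properties of set inclusion $\supseteq$: for every $\os\in\cfdom$, $\cf(\os)\supseteq\cf(\os)$; if $\cf[1](\os)\supseteq\cf[2](\os)$ and $\cf[2](\os)\supseteq\cf[1](\os)$ for every $\os$, then $\cf[1](\os)=\cf[2](\os)$ for every $\os$, so $\cf[1]=\cf[2]$; and the transitivity chain is similarly immediate. No use of finiteness or of any choice-function-specific structure is required here.

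For clause (ii), the main thing to check is that the pointwise union and intersection define bona fide elements of $\allcfs$, that is, maps from $\cfdom$ into $\cfdom\cup\set{\emptyset}$ satisfying $\cf(\os)\subseteq\os$. Fix any subset $\cfs\subseteq\allcfs$ and any $\os\in\cfdom$. Since each $\cf(\os)\subseteq\os$, both $\bigcup_{\cf\in\cfs}\cf(\os)$ and $\bigcap_{\cf\in\cfs}\cf(\os)$ are subsets of $\os$; as $\os$ is finite, so are they, so they lie in $\cfdom\cup\set{\emptyset}$. Hence both $\inf\cfs$ and $\sup\cfs$, as defined, indeed belong to $\allcfs$.

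It then remains to check the infimum and supremum properties. For the infimum: on the one hand, $\inf\cfs(\os)=\bigcup_{\cf\in\cfs}\cf(\os)\supseteq\cf(\os)$ for every $\cf\in\cfs$ and every $\os$, so $\inf\cfs\nmi\cf$ for all $\cf\in\cfs$, showing it is a lower bound. On the other hand, if $\cf[*]\in\allcfs$ satisfies $\cf[*]\nmi\cf$ for every $\cf\in\cfs$, then $\cf[*](\os)\supseteq\cf(\os)$ for every $\cf\in\cfs$ and every $\os$, whence $\cf[*](\os)\supseteq\bigcup_{\cf\in\cfs}\cf(\os)=\inf\cfs(\os)$, so $\cf[*]\nmi\inf\cfs$, confirming it is the greatest lower bound. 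The argument for $\sup\cfs$ is entirely dual, with intersections in place of unions.

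There is no real obstacle here: the statement is a straight transcription of the classical fact that a product of complete lattices (here, the powerset lattices of each $\os\in\cfdom$, ordered by $\supseteq$) is itself a complete lattice, and the only concrete verification is the trivial observation that finiteness of $\os$ is preserved under arbitrary unions and intersections of its subsets.
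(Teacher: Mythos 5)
Your proof is correct and follows essentially the same route as the paper, which simply observes that $\nmi$ is a product ordering of set inclusions and invokes the standard fact (citing Davey and Priestley) that such a product of complete lattices is a complete lattice; you merely spell out the routine pointwise verifications, including the (correct) observation that the pointwise union and intersection remain subsets of the finite set $\os$ and hence define genuine choice functions.
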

\noindent
The idea underlying these notions of infimum and supremum is that $\inf\cfs$ is the most informative model that is not more informative than any of the models in $\cfs$, and $\sup\cfs$ the least informative model that is not less informative than any of the models in~$\cfs$.

We have proved elsewhere~\citep[Proposition~3]{vancamp2017} that coherence is preserved under arbitrary non-empty infima. 
Because of our interest in the additional Axiom~\ref{coh cf 5: convexity},
we prove that it also is preserved under arbitrary non-empty infima.

\begin{proposition}\label{prop:inf of C5}
Given any non-empty collection $\cfs$ of choice functions that satisfy Axiom~\ref{coh cf 5: convexity}, its infimum $\inf\cfs$ satisfies Axiom~\ref{coh cf 5: convexity} as well.
\end{proposition}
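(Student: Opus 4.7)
The plan is to unpack the definition of the infimum given in Proposition~\ref{prop: the not more informative relation leads to posets}(ii) and then apply Axiom~\ref{coh cf 5: convexity} pointwise to each $\cf$ in $\cfs$.

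More concretely, fix $\os$ and $\os[1]$ in $\cfdom$ with $\os\subseteq\os[1]\subseteq\ch(\os)$; the goal is to show that $\inf\cfs(\os)\subseteq\inf\cfs(\os[1])$. Since $\inf\cfs(\os)=\bigcup_{\cf\in\cfs}\cf(\os)$, take an arbitrary $\vect[v]\in\inf\cfs(\os)$. Then there is some $\cf\in\cfs$ with $\vect[v]\in\cf(\os)$. Because $\cf$ itself satisfies Axiom~\ref{coh cf 5: convexity} and $\os\subseteq\os[1]\subseteq\ch(\os)$, we get $\cf(\os)\subseteq\cf(\os[1])$, so $\vect[v]\in\cf(\os[1])\subseteq\bigcup_{\cf'\in\cfs}\cf'(\os[1])=\inf\cfs(\os[1])$.

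There is essentially no obstacle here: the convexity axiom is a monotonicity-type condition that relates $\cf(\os)$ and $\cf(\os[1])$ for a fixed pair $(\os,\os[1])$, and the infimum is computed setwise as a union, so the preservation is immediate from the universal property of unions. The non-emptiness hypothesis on $\cfs$ is used only to ensure that $\inf\cfs$ is a well-defined choice function in $\allcfs$ (via Proposition~\ref{prop: the not more informative relation leads to posets}(ii)); it plays no role in the verification of the axiom itself.
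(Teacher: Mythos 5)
Your proof is correct and follows essentially the same route as the paper's: both use the characterisation $\inf\cfs(\os)=\bigcup_{\cf\in\cfs}\cf(\os)$ from Proposition~\ref{prop: the not more informative relation leads to posets} and apply Axiom~\ref{coh cf 5: convexity} to each $\cf$ in $\cfs$ separately, the only cosmetic difference being that you chase an element through the union while the paper states the inclusion of unions directly.
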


\begin{proof}
Denote $\cf'\coloneqq\inf\cfs$.
Consider any $\os$ and $\os[1]$ in $\cfdom$ such that $\os\subseteq\os[1]\subseteq\ch\group{\os}$.
Then $\cf\group{\os}\subseteq\cf\group{\os[1]}$ for all $\cf$ in $\cfs$, whence $\cf'\group{\os}=\bigcup_{\cf\in\cfs}\cf\group{\os}\subseteq\bigcup_{\cf\in\cfs}\cf\group{\os[1]}=\cf'\group{\os[1]}$.
\end{proof}

\section{The connection with other definitions of choice functions}
\label{sec:connection horse lotteries}
Before we go on with our exploration of choice functions, let us take some time here to explain why we have chosen to define them in the way we did.
\citet{seidenfeld2010} \citep[see also][]{Kadane2004} define choice functions on \emph{horse lotteries}, instead of options, as this helps them generalise the framework of \citet{anscombe1963} for binary preferences to non-binary ones.

One reason for our working with the more abstract notion of options---elements of some general vector space---is that they are better suited for dealing with indifference: this involves working with equivalence classes of options, which again constitute a vector space~\citep{vancamp2017}.
These equivalence classes can no longer be interpreted easily or directly as gambles, or horse lotteries for that matter.
Another reason for using options that are more general than real-valued gambles is that recent work by \citet{zaffalon2015:birth} has shown that a very general theory of binary preference can be constructed using vector-valued gambles, rather than horse lotteries.
Such vector-valued gambles again constitute a real vector, or option, space.
Here, we show that the conclusions of their work can be extended from binary preferences to choice functions.

We consider an arbitrary possibility space $\ps$ of mutually exclusive elementary events, one of which is guaranteed to occur.
Consider also a countable set $\rewards$ of prizes, or rewards.

\begin{definition}[Gambles]
Any bounded real-valued function on some domain $\ps$ is called a \emph{gamble} on $\ps$.
We collect all gambles on $\ps$ in $\gambleson$, often denoted as $\gambles$ when it is clear from the context what the domain $\ps$ is.

When the domain is of the type $\ps\times\rewards$, we call elements $f$ of $\gambleson[\ps\times\rewards]$ \emph{vector-valued gambles} on $\ps$.
Indeed, for each $x$ in $\ps$, the partial map $f(x,\cdot)$ is then an element of the vector space $\gambleson[\rewards]$.
\end{definition}
\noindent
The set $\gambles$, provided with the point-wise addition of gambles, the point-wise multiplication with real scalars, and the point-wise vector ordering $\leq$, constitutes an ordered vector space.
We call $\posgambles\coloneqq\cset{f\in\gambles}{f>0}=\cset{f\in\gambles}{f\geq0\text{ and }f\neq0}$ the set of all \emph{positive} (vector-valued) gambles.

Horse lotteries are special vector-valued gambles.

\begin{definition}[Horse lotteries]
We call \emph{horse lottery} $\hl$ any map from $\ps\times\rewards$ to $[0,1]$ such that for all $x$ in $\ps$, the partial map $\hl(\rval,\cdot)$ is a probability mass function over $\rewards$:
\begin{equation*}
\group{\forall\rval\in\ps}
\group[\bigg]{\sum_{\rew\in\rewards}\hl(\rval,\rew)=1\text{ and }\group{\forall\rew\in\rewards}\hl(\rval,\rew)\geq0}.
\end{equation*}
We collect all the horse lotteries on $\ps$ with reward set $\rewards$ in $\hls\group{\ps,\rewards}$, which is also denoted more simply by $\hls$ when it is clear from the context what the possibility space $\ps$ and reward set $\rewards$ are.
\end{definition}

Let us, for the remainder of this section, fix $\ps$ and $\rewards$.
It is clear that $\hls\subseteq\gambles\group{\ps\times\rewards}$.
\citet{seidenfeld2010} consider choice functions whose domain is $\cfdom\group{\hls}$, the set of all finite subsets of $\hls$---choice functions on horse lotteries.\footnote{Actually, \citet{seidenfeld2010} define choice functions on a larger domain: all possibly infinite but \emph{closed} sets of horse lotteries (non-closed sets may not have admissible options). This is a complication we see no need for in the present context.
}
We will call them \emph{choice functions on $\hls$}.
Because of the nature of $\hls$, their choice functions are different from ours: they require slightly different rationality axioms.
The most significant change is that for \citet{seidenfeld2010}, choice functions need not satisfy Axioms~\ref{coh cf 4a: scaling} and~\ref{coh cf 4b: independence}.
In fact, choice functions on $\hls$ cannot satisfy these axioms, since $\hls$ is no linear space: it is not closed under arbitrary linear combinations, only under \emph{convex combinations}.
Instead, on their approach a choice function $\cf^*$ on $\hls$ is required to satisfy
\begin{enumerate}[noitemsep,topsep=0pt,label=\upshape C$_{\arabic*}^*$.,ref=\upshape C$_{\arabic*}^*$,leftmargin=*,start=4]
\item\label{coh cf 4 Seidenfeld}
$\os[1]^*\crel[{\cf^*}]\os[2]^*
\iff
\alpha\os[1]^*+(1-\alpha)\set{\hl}\crel[{\cf^*}]\alpha\os[2]^*+(1-\alpha)\set{\hl}$
for all $\alpha$ in $(0,1]$, all $\os[1]^*$ and $\os[2]^*$ in $\cfdom\group{\hls}$ and all $\hl$ in~$\hls$.
\end{enumerate}
The binary relation $\crel[{\cf^*}]$ is the choice relation associated with $\cf^*$, defined by Equation~\eqref{eq:choice relation}.
Furthermore, for a choice function $\cf^*$ to be coherent, it needs to additionally satisfy~(see \citep{seidenfeld2010}):
\begin{enumerate}[noitemsep,topsep=0pt,label=\upshape C$_{\arabic*}^*$.,ref=\upshape C$_{\arabic*}^*$,leftmargin=*,start=1]
\item\label{coh cf 1 Seidenfeld} $\cf^*\group{\os^*}\neq\emptyset$ for all $\os^*$ in $\cfdom\group{\hls}$;
\item\label{coh cf 2 Seidenfeld} for all $\os^*$ in $\cfdom\group{\hls}$, all $\hl[1]$ and $\hl[2]$ in $\hls$ such that $\hl[1]\group{\cdot,\bestr}\vo\hl[2]\group{\cdot,\bestr}$ and $\hl[1]\group{\cdot,\rew}=\hl[2]\group{\cdot,\rew}=0$ for all $\rew$ in $\rewards\setminus\set{\worstr,\bestr}$, and all $\hl$ in $\hls\setminus\set{\hl[1],\hl[2]}$:
\begin{enumerate}[noitemsep,leftmargin=*,label=\upshape\alph*.,ref=\theenumi\upshape\alph*]
\item\label{coh cf 2a Seidenfeld}
if~$\hl[2]\in\os^*$ and~$\hl\in\rf^*\group{\set{\hl[1]}\cup\os^*}$ then $\hl\in\rf^*\group{\os^*}$;
\item\label{coh cf 2b Seidenfeld}
if~$\hl[1]\in\os^*$ and~$\hl\in\rf^*\group{\os^*}$ then $\hl\in\rf^*\group{\set{\hl[2]}\cup\os^*\setminus\set{\hl[1]}}$;
\end{enumerate}
\item\label{coh cf 3 Seidenfeld} for all $\os^*$, $\os[1]^*$ and $\os[2]^*$ in $\cfdom\group{\hls}$:
\begin{enumerate}[noitemsep,leftmargin=*,label=\upshape\alph*.,ref=\theenumi\upshape\alph*]
\item\label{coh cf 3a Seidenfeld}
if~$\os[1]^*\subseteq\rf^*\group{\os[2]^*}$ and $\os[2]^*\subseteq\os^*$ then~$\os[1]^*\subseteq\rf^*\group{\os}$;
\item\label{coh cf 3b Seidenfeld}
if~$\os[1]^*\subseteq\rf^*\group{\os[2]^*}$ and $\os^*\subseteq\os[1]^*$ then $\os[1]^*\setminus\os^*\subseteq\rf^*\group{\os[2]^*\setminus\os}$;
\end{enumerate}
\setcounter{enumi}{4}
\item\label{coh cf 5 Seidenfeld}
if $\os^*\subseteq\os[1]^*\subseteq\ch\group{\os}$ then $\cf^*\group{\os}\subseteq\cf^*\group{\os[1]^*}$, for all $\os^*$ and $\os[1]^*$ in $\cfdom\group{\hls}$;
\item\label{coh cf 6 Seidenfeld} for all $\os^*$, ${\os^*}'$, ${\os^*}''$ ${\os[i]^*}'$ and ${\os[i]^*}''$ (for $i$ in $\nats$) in $\cfdom\group{\hls}$ such that the sequence ${\os[i]^*}'$ converges point-wise to ${\os^*}'$ and the sequence ${\os[i]^*}''$ converges point-wise to ${\os^*}''$:
\begin{enumerate}[noitemsep,leftmargin=*,label=\upshape\alph*.,ref=\theenumi\upshape\alph*]
\item\label{coh cf 6a Seidenfeld}
If $\group{\forall i\in\nats}{\os[i]^*}''\crel[{\cf^*}]{\os[i]^*}'$ and ${\os^*}'\crel[{\cf^*}]\os^*$ then ${\os^*}''\crel[{\cf^*}]\os^*$;
\item\label{coh cf 6b Seidenfeld}
If $\group{\forall i\in\nats}{\os[i]^*}''\crel[{\cf^*}]{\os[i]^*}'$ and $\os^*\crel[{\cf^*}]{\os^*}''$ then $\os^*\crel[{\cf^*}]{\os^*}'$,
\end{enumerate}
\end{enumerate}
where \citet{seidenfeld2010} assume that there is a a unique worst reward $\worstr$ and a unique best reward $\bestr$ in $\rewards$.
This is a somewhat stronger assumption than we will make: further on in this section, we will only assume that there is a unique worst reward.
Axiom~\ref{coh cf 2 Seidenfeld} is the counterpart of Proposition~\ref{prop:dominance} for choice functions on horse lotteries, which is a result of our Axioms~\ref{coh cf 1: irreflexivity}--\ref{coh cf 4}.
\citet{seidenfeld2010} need to impose this property as an axiom, essentially because of the absence in their system of a counterpart for our Axiom~\ref{coh cf 2: non-triviality}.
Axioms~\ref{coh cf 6a Seidenfeld} and~\ref{coh cf 6b Seidenfeld} are Archimedean axioms, hard to reconcile with desirability \citep[see for instance][Section~4]{zaffalon2015:birth}, which is why will not enforce them here.

We now intend to show that under very weak conditions on the rewards set $\rewards$, choice functions on horse lotteries that satisfy \ref{coh cf 4 Seidenfeld} are in a one-to-one correspondence with choice functions on a suitably defined option space that satisfy Axioms~\ref{coh cf 4a: scaling} and~\ref{coh cf 4b: independence}.

Let us first study the impact of Axiom~\ref{coh cf 4 Seidenfeld}.
We begin by showing that an assessment of $\hl\in\cf\group{\os}$ for some $\os$ in $\cfdom\group{\hls}$ implies other assessments of this type.

\begin{proposition}\label{prop:C on horse lotteries}
Consider any choice function $\cf^*$ on $\cfdom\group{\hls}$ that satisfies Axiom~\ref{coh cf 4 Seidenfeld}, any option sets $\os^*$ and ${\os^*}'$ in $\cfdom\group{\hls}$, and any $\hl$ in $\os^*$ and $\hl'$ in ${\os^*}'$.
If there are $\lambda$ and $\lambda'$ in $\posreals$ such that $\lambda\group{\os^*-\set{\hl}}=\lambda'\group{{\os^*}'-\set{\hl'}}$, then
\begin{equation*}
\hl\in\cf^*\group{\os}
\iff
\hl'\in\cf^*\group{{\os^*}'}.
\end{equation*}
\end{proposition}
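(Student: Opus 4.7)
The plan is to exploit Axiom~\ref{coh cf 4 Seidenfeld} twice, applying it once to the pair $(\hl,\os^*)$ and once to $(\hl',{\os^*}')$, with parameters chosen so that the two applications land on a common transformed pair.

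First, I would translate the claim into the language of the choice relation. If $\os^*=\set{\hl}$, the hypothesis forces ${\os^*}'=\set{\hl'}$, and both sides are trivially true by~\ref{coh cf 1 Seidenfeld}. Otherwise $\hl\in\cf^*\group{\os^*}$ is equivalent to the statement that $\set{\hl}\crel[{\cf^*}]\group{\os^*\setminus\set{\hl}}$ fails. Applying~\ref{coh cf 4 Seidenfeld} with some $\alpha_1\in(0,1]$ and some $\hl^{*0}_1\in\hls$ rewrites this as the statement that $\set{\tilde\hl}\crel[{\cf^*}]\group{\tilde\os^*\setminus\set{\tilde\hl}}$ fails, where $\tilde\hl\coloneqq\alpha_1\hl+(1-\alpha_1)\hl^{*0}_1$ and $\tilde\os^*\coloneqq\alpha_1\os^*+(1-\alpha_1)\set{\hl^{*0}_1}$; an analogous rewriting with parameters $(\alpha_2,\hl^{*0}_2)$ applies to the primed side.

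The key choice is $\alpha_1\coloneqq\lambda/(\lambda+\lambda')$, $\alpha_2\coloneqq\lambda'/(\lambda+\lambda')$ (both in $(0,1]$ and summing to $1$), together with the auxiliary lotteries $\hl^{*0}_1\coloneqq\hl'$ and $\hl^{*0}_2\coloneqq\hl$, which manifestly lie in $\hls$. With these choices, the transformed centres coincide:
\begin{equation*}
\alpha_1\hl+(1-\alpha_1)\hl'=\frac{\lambda\hl+\lambda'\hl'}{\lambda+\lambda'}=\alpha_2\hl'+(1-\alpha_2)\hl.
\end{equation*}
Writing $T\coloneqq\lambda\group{\os^*-\set{\hl}}=\lambda'\group{{\os^*}'-\set{\hl'}}$ for the common ``shape'' provided by the hypothesis, a direct calculation yields
\begin{equation*}
\alpha_1\os^*+(1-\alpha_1)\set{\hl'}=\set{\tilde\hl}+\tfrac{1}{\lambda+\lambda'}T=\alpha_2{\os^*}'+(1-\alpha_2)\set{\hl},
\end{equation*}
so both applications of~\ref{coh cf 4 Seidenfeld} land on the very same pair $(\tilde\hl,\tilde\os^*)$, whence $\hl\in\cf^*\group{\os^*}\iff\tilde\hl\in\cf^*\group{\tilde\os^*}\iff\hl'\in\cf^*\group{{\os^*}'}$.

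The main obstacle to anticipate is ensuring that the auxiliary horse lotteries invoked in~\ref{coh cf 4 Seidenfeld} genuinely belong to $\hls$. A single-step approach transforming $(\hl,\os^*)$ directly into $(\hl',{\os^*}')$ would force the auxiliary lottery to be an affine but generally non-convex combination of $\hl$ and $\hl'$, which typically fails to be a probability mass function in each state. The symmetric two-step device above, which uses $\hl$ and $\hl'$ themselves as the auxiliary lotteries, keeps every combination convex and thereby inside $\hls$.
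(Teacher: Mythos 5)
Your argument is essentially the paper's own proof: both apply Axiom~\ref{coh cf 4 Seidenfeld} twice, mixing the pair $\group{\hl,\os^*}$ with the reference lottery $\hl'$ and the pair $\group{\hl',{\os^*}'}$ with the reference lottery $\hl$, using the weights $\nicefrac{\lambda}{\lambda+\lambda'}$ and $\nicefrac{\lambda'}{\lambda+\lambda'}$, so that the hypothesis $\lambda\group{\os^*-\set{\hl}}=\lambda'\group{{\os^*}'-\set{\hl'}}$ makes the two transformed pairs coincide; the chain of equivalences is then the same as in the paper.

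The one blemish is your treatment of the singleton case: the proposition assumes only Axiom~\ref{coh cf 4 Seidenfeld}, so you are not entitled to invoke Axiom~\ref{coh cf 1 Seidenfeld} to conclude that $\hl\in\cf^*\group{\set{\hl}}$ and $\hl'\in\cf^*\group{\set{\hl'}}$; without~\ref{coh cf 1 Seidenfeld} a choice function may return the empty set on a singleton. The case split is avoidable altogether: since $\os[1]\crel[{\cf^*}]\os[2]$ depends only on $\os[1]\cup\os[2]$, you can express the rewriting as ``$\set{\hl}\crel[{\cf^*}]\os^*$ fails'' instead of removing $\hl$ from $\os^*$---this is what the paper does, and then the same two applications of~\ref{coh cf 4 Seidenfeld} cover all option sets uniformly. (Alternatively, the singleton case itself follows from~\ref{coh cf 4 Seidenfeld} alone by the very same mixture device applied to $\set{\hl}\crel[{\cf^*}]\set{\hl}$ and $\set{\hl'}\crel[{\cf^*}]\set{\hl'}$, both being equivalent to the relation for the common mixture $\nicefrac{\group{\lambda\hl+\lambda'\hl'}}{\group{\lambda+\lambda'}}$.)
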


\begin{proof} 
Fix $\os^*$ and ${\os^*}'$ in $\cfdom\group{\hls}$, $\hl$ in $\os^*$ and $\hl'$ in ${\os^*}'$, $\lambda$ and $\lambda'$ in $\posreals$, and assume that $\lambda\group{\os^*-\set{\hl}}=\lambda'\group{{\os^*}'-\set{\hl'}}$.
We will show that $\hl\in\rf^*\group{\os^*}\iff\hl'\in\rf^*\group{{\os^*}'}$.
We infer from the assumption that
\begin{equation*}
\frac{\lambda}{\lambda+\lambda'}\os^*+\frac{\lambda'}{\lambda+\lambda'}\set{\hl'}=\frac{\lambda'}{\lambda+\lambda'}{\os^*}'+\frac{\lambda}{\lambda+\lambda'}\set{\hl}.
\end{equation*}
If we call $\alpha\coloneqq\frac{\lambda}{\lambda+\lambda'}$ to ease the notation along, then $1-\alpha=\frac{\lambda'}{\lambda+\lambda'}$ and $\alpha\in(0,1)$.
We now infer from the identity above that $\alpha\os^*+(1-\alpha)\set{\hl'}=(1-\alpha){\os^*}'+\alpha\set{\hl}$.
Consider the following chain of equivalences:
\begin{align*}
\hl\in\rf^*\group{\os^*}
&\iff
\set{\hl}\crel[{\cf^*}]\os^*&&\text{by Equation~\eqref{eq:choice relation}}\\
&\iff
\alpha\set{\hl}+(1-\alpha)\set{\hl'}\crel[{\cf^*}]\alpha\os^*+(1-\alpha)\set{\hl'}&&\text{using Axiom~\ref{coh cf 4 Seidenfeld}}\\
&\iff
\alpha\set{\hl}+(1-\alpha)\set{\hl'}\crel[{\cf^*}](1-\alpha){\os^*}'+\alpha\set{\hl}\\
&\iff
\set{\hl'}\crel[{\cf^*}]{\os^*}'&&\text{using Axiom~\ref{coh cf 4 Seidenfeld}}\\
&\iff
\hl'\in\rf^*\group{{\os^*}'}&&\text{by Equation~\eqref{eq:choice relation}.}
\qedhere
\end{align*}
\end{proof}

For any $\rew$ in $\rewards$, we now introduce $\rewardsw[\rew]\coloneqq\rewards\setminus\set{\rew}$, the set of all rewards without $\rew$.
For the connection between choice functions on $\hls$ and choice functions on some option space, we need to somehow be able to extend $\hls$ to a linear space.
The so-called gamblifier $\poh[\rew]$ will play a crucial role in this:

\begin{definition}[Gamblifier \protect{$\poh[\rew]$}]\label{def:projection operator}
Consider any $\rew$ in $\rewards$.
The \emph{gamblifier} $\poh[\rew]$ is the linear map
\begin{equation*}
\poh[\rew]\colon\gambleson[\ps\times\rewards]\to\gambleson[{\ps\times\rewardsw[\rew]}]
\colon f\mapsto\poh[\rew]f,
\end{equation*}
where $\poh[\rew]f\group{\rval,\rewtoo}\coloneqq f\group{\rval,\rewtoo}$ for all $\rval$ in $\ps$ and $\rewtoo$ in $\rewardsw[\rew]$.
\end{definition}
\noindent
In particular, the gamblifier $\poh[\rew]$ maps any horse lottery $\hl$ in $\hls\group{\ps,\rewards}$ to an element $\poh[\rew]\hl$ of $\gambleson[{\ps\times\rewardsw[\rew]}]$ that satisfies the following two conditions:
\begin{equation}\label{eq:property poh of hl}
\poh[\rew]\hl\group{\cdot,\cdot}\geq0
\text{ and }
\sum_{\rewtoo\in\rewardsw[\rew]}\poh[\rew]\hl\group{\cdot,\rewtoo}\leq1.
\end{equation}
Application of $\poh[\rew]$ to sets of the form $\lambda\group{\os^*-\set{\hl}}$ essentially leaves the `information' they contain unchanged:

\begin{lemma}\label{lemma:proj operator}
Consider any $\rew$ in $\rewards$.
Then the following two properties hold:
\begin{enumerate}[label=\upshape(\roman*),leftmargin=*]
\item The gamblifier $\poh[\rew]$ is one-to-one on $\hls$.
\item For any $\os^*$ and ${\os^*}'$ in $\cfdom\group{\hls}$, any $\hl$ in $\os^*$ and $\hl'$ in ${\os^*}'$ and any $\lambda$ and $\lambda'$ in\/ $\posreals$:
\begin{equation*}
\lambda\group{\os^*-\set{\hl}}=\lambda'\group{{\os^*}'-\set{\hl'}}
\iff
\poh[\rew]\group{\lambda\group{\os^*-\set{\hl}}}=\poh[\rew]\group{\lambda'\group{{\os^*}'-\set{\hl'}}}
\end{equation*}
\end{enumerate}
\end{lemma}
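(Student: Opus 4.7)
The plan is to exploit the defining property of horse lotteries---that each row $\hl(\rval,\cdot)$ is a probability mass function and hence sums to $1$. This constraint pins down the coordinate at the excluded reward $\rew$ from the remaining coordinates, and that is essentially all I need. For part~(i), I would assume $\hl[1],\hl[2]\in\hls$ satisfy $\poh[\rew]\hl[1]=\poh[\rew]\hl[2]$, so that $\hl[1](\rval,\rewtoo)=\hl[2](\rval,\rewtoo)$ for every $\rval\in\ps$ and every $\rewtoo\in\rewardsw[\rew]$. Subtracting the identities $\sum_{\rewtoo\in\rewards}\hl[i](\rval,\rewtoo)=1$ (for $i=1,2$) then forces $\hl[1](\rval,\rew)=\hl[2](\rval,\rew)$ as well, and hence $\hl[1]=\hl[2]$.

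For part~(ii), the $(\Rightarrow)$ direction is immediate because $\poh[\rew]$ is a function and therefore acts consistently on any two identical sets. For the $(\Leftarrow)$ direction I would introduce the linear subspace
\[
W\coloneqq\cset[\bigg]{g\in\gambleson[\ps\times\rewards]}{\sum_{\rewtoo\in\rewards}g(\rval,\rewtoo)=0\text{ for every }\rval\in\ps},
\]
and observe that every element of $\lambda\group{\os^*-\set{\hl}}$ has the form $\lambda(\hl[1]-\hl)$ for some $\hl[1]\in\os^*$, and therefore lies in $W$ because the two horse-lottery rows cancel to zero; the same holds for $\lambda'\group{{\os^*}'-\set{\hl'}}$. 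Restricted to $W$ the gamblifier $\poh[\rew]$ is injective, because any $g\in W$ is determined by $\poh[\rew]g$ via $g(\rval,\rew)=-\sum_{\rewtoo\in\rewardsw[\rew]}g(\rval,\rewtoo)=-\sum_{\rewtoo\in\rewardsw[\rew]}\group{\poh[\rew]g}(\rval,\rewtoo)$. Pairing this pointwise injectivity on $W$ with the set equality in the hypothesis immediately yields $\lambda\group{\os^*-\set{\hl}}=\lambda'\group{{\os^*}'-\set{\hl'}}$.

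The argument is not deep. The only mild subtlety is that $\poh[\rew]$ is being applied to \emph{sets} in~(ii), so elementwise injectivity on $W$ has to be paired with the identification of $W$ as the ambient subspace in which all the relevant elements actually live. Everything else is straightforward bookkeeping with the probability-row identity $\sum_{\rewtoo\in\rewards}\hl(\rval,\rewtoo)=1$.
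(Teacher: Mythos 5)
Your proof is correct and rests on exactly the same idea as the paper's: the row-sum constraint of horse lotteries forces the $\rew$-coordinate of any (scaled) difference of horse lotteries to be recoverable from the remaining coordinates, which is what makes $\poh[\rew]$ injective both on $\hls$ and on the sets $\lambda\group{\os^*-\set{\hl}}$. Your packaging of part~(ii) via the row-sum-zero subspace $W$ is just a tidier way of organising the element-by-element computation the paper carries out, so the two arguments are essentially identical.
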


\begin{proof}
We begin with the first statement.
Consider any $\hl$ and $\hl'$ in $\hls$, and assume that $\poh[\rew]\group{\hl}=\poh[\rew]\group{\hl'}$.
We infer from Definition~\ref{def:projection operator} that
\begin{equation*}
\hl\group{\rval,\rewtoo}=\hl'\group{\rval,\rewtoo}
\text{ for all $\rval$ in $\ps$ and $\rewtoo$ in $\rewardsw[\rew]$,}
\end{equation*}
and therefore also, since $\hl$ and $\hl'$ are horse lotteries,
\begin{equation*}
\hl\group{\rval,\rew}
=1-\sum_{\rewtoo\in\rewardsw[\rew]}\hl\group{\rval,\rewtoo}
=1-\sum_{\rewtoo\in\rewardsw[\rew]}\hl'\group{\rval,\rewtoo}
=\hl'\group{\rval,\rew}
\text{ for all $\rval$ in $\ps$.}
\end{equation*}
Hence indeed $\hl=\hl'$.

The direct implication in the second statement is trivial; let us prove the converse.
Assume that $\poh[\rew]\group{\lambda\group{\os^*-\set{\hl}}}=\poh[\rew]\group{\lambda'\group{{\os^*}'-\set{\hl'}}}$.
We may write, without loss of generality, that $\os=\set{\hl,\hl[1],\dots,\hl[n]}$ and ${\os^*}'=\set{\hl',\hl[1]',\dots,\hl[m]'}$ for some $n$ and $m$ in $\nats$. 
Now, consider any element $\hl[i]$ in $\os^*$, then $\poh[\rew]\group{\lambda\group{\hl[i]-\hl}}\in\poh[\rew]\group{\lambda\group{\os^*-\set{\hl}}}$.
Consider any $j$ in $\set{1,\dots,m}$ such that $\poh[\rew]\group{\lambda\group{\hl[i]-\hl}}=\poh[\rew]\group{\lambda'\group{\hl[j]'-\hl'}}$.
It follows from the assumption that there is at least one such $j$.
The proof is complete if we can show that $\lambda\group{\hl[i]-\hl}=\lambda'\group{\hl[j]'-\hl'}$.
By Definition~\ref{def:projection operator}, we already know that
\begin{equation*}
\lambda\group{\hl[i]\group{\cdot,\rewtoo}-\hl\group{\cdot,\rewtoo}}
=
\lambda'\group{\hl[j]'\group{\cdot,\rewtoo}-\hl'\group{\cdot,\rewtoo}}
\text{ for all $\rewtoo$ in $\rewardsw[\rew]$,}
\end{equation*}
and therefore, since $\hl$, $\hl'$, $\hl[i]$ and $\hl[j]'$ are horse lotteries, also
\begin{align*}
\lambda\group{\hl[i]\group{\cdot,\rew}-\hl\group{\cdot,\rew}}
&=\lambda\group[\bigg]{\sum_{\rewtoo\in\rewardsw[\rew]}\hl\group{\cdot,\rewtoo}-\sum_{\rewtoo\in\rewardsw[\rew]}\hl[i]\group{\cdot,\rewtoo}}
=\sum_{\rewtoo\in\rewardsw[\rew]}\lambda\group{\hl\group{\cdot,\rewtoo}-\hl[i]\group{\cdot,\rewtoo}}\\
&=\sum_{\rewtoo\in\rewardsw[\rew]}\lambda'\group{\hl'\group{\cdot,\rewtoo}-\hl[j]'\group{\cdot,\rewtoo}}
=\lambda'\group[\bigg]{\sum_{\rewtoo\in\rewardsw[\rew]}\hl'\group{\cdot,\rewtoo}-\sum_{\rewtoo\in\rewardsw[\rew]}\hl[j]'\group{\cdot,\rewtoo}}\\
&=\lambda'\group{\hl[j]'\group{\cdot,\rew}-\hl'\group{\cdot,\rew}},
\end{align*}
whence indeed $\lambda\group{\hl[i]-\hl}=\lambda'\group{\hl[j]'-\hl'}$.
\end{proof}
\noindent
We now lift the gamblifier $\poh[\rew]$ to a map $\lift[\rew]$ that turns choice functions on gambles into choice functions on horse lotteries:
\begin{equation}\label{eq:lift of poh}
\lift[\rew]
\colon\allcfs\group{\gambleson[{\ps\times\rewardsw[\rew]}]}\to\allcfs\group{\hls\group{\ps,\rewards}}
\colon\cf\mapsto\lift[\rew]\cf,
\end{equation}
where $\lift[\rew]\cf\group{\os^*}\coloneqq\poh[\rew]^{-1}\cf\group{\poh[\rew]\os^*}$ for every $\os^*$ in $\cfdom\group{\hls\group{\ps,\rewards}}$.
This definition makes sense because we have proved in Lemma~\ref{lemma:proj operator} that $\poh[\rew]$ is one-to-one on $\hls$, and therefore invertible on $\poh[\rew]\hls$.
The result of applying $\lift[\rew]$ to a choice function $\cf$ on $\gambleson[{\ps\times\rewardsw[\rew]}]$ is a choice function $\lift[\rew]\cf$ on $\hls\group{\ps,\rewards}$.
Observe that we can equally well make $\lift[\rew]$ apply to rejection functions $\rf$, and that for every $\os^*$ in $\cfdom\group{\hls\group{\ps,\rewards}}$:
\begin{equation*}
\lift[\rew]\rf\group{\os^*}
\coloneqq\poh[\rew]^{-1}\rf\group{\poh[\rew]\os^*}
=\poh[\rew]^{-1}\group{\poh[\rew]\os^*\setminus\cf\group{\poh[\rew]\os^*}}
=\os^*\setminus\poh[\rew]^{-1}\cf\group{\poh[\rew]\os^*}
=\os^*\setminus\lift[\rew]\cf\group{\os^*},
\end{equation*}
so $\lift[\rew]\rf$ is the rejection function associated with the choice function $\lift[\rew]\cf$, when $\rf$ is the rejection function for $\cf$.

One property of the transformation $\lift[\rew]$ that will be useful in our subsequent proofs is the following: 


\begin{lemma}\label{lem:rescaling}
Consider any $\rew$ in $\rewards$ and any $\os$ in $\cfdom\group{\gambles\group{\ps\times\rewardsw[\rew]}}$, and define $g$ by $g\group{\rval,\rewtoo}\coloneqq\sum_{f\in\os}\abs{f\group{\rval,\rewtoo}}$ for all $\rval$ in $\ps$ and $\rewtoo$ in $\rewardsw[\rew]$.
Consider any $\lambda$ in $\reals$ such that
\begin{equation*}
\lambda
>
\max\cset[\bigg]{\max_{\rval\in\ps}\sum_{\rewtoo\in\rewardsw[\rew]}h\group{\rval,\rewtoo}}{h\in A+\set{g}}
\geq0.
\end{equation*}
Then $\frac{1}{\lambda}\group{\os+\set{g}}=\poh[\rew]\os^*$ for some $\os^*$ in $\cfdom\group{\hls\group{\ps,\rewards}}$.
\end{lemma}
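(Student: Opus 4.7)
The plan is to build $\os^*$ pointwise: for each $f \in \os$, I will construct a horse lottery $\hl_f \in \hls(\ps,\rewards)$ so that $\poh[\rew]\hl_f = \frac{1}{\lambda}(f+g)$, and then set $\os^* \coloneqq \set{\hl_f : f \in \os}$. Since $\os$ is finite and non-empty, so is $\os^*$, giving $\os^* \in \cfdom(\hls)$, and by construction $\poh[\rew]\os^* = \frac{1}{\lambda}(\os + \set{g})$.

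To build $\hl_f$, I will use the fact that any $q \colon \ps \times \rewardsw[\rew] \to \reals$ satisfying the two properties in Equation~\eqref{eq:property poh of hl}, namely $q \geq 0$ and $\sum_{\rewtoo \in \rewardsw[\rew]} q(\rval,\rewtoo) \leq 1$ for all $\rval$, is of the form $\poh[\rew]\hl$ for a unique horse lottery $\hl$; the missing mass is assigned to $\rew$, i.e.\ $\hl(\rval,\rew) \coloneqq 1 - \sum_{\rewtoo \in \rewardsw[\rew]} q(\rval,\rewtoo)$. Uniqueness here is exactly the first half of Lemma~\ref{lemma:proj operator}.

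So it remains to show that $q_f \coloneqq \frac{1}{\lambda}(f+g)$ satisfies these two conditions for every $f \in \os$. For non-negativity, note that by definition of $g$ we have $g(\rval,\rewtoo) \geq \abs{f(\rval,\rewtoo)} \geq -f(\rval,\rewtoo)$, so $(f+g)(\rval,\rewtoo) \geq 0$, and since $\lambda > 0$ also $q_f \geq 0$. For the row-sum bound, $f + g \in \os + \set{g}$, so by the hypothesis on $\lambda$ we have $\sum_{\rewtoo \in \rewardsw[\rew]} (f+g)(\rval,\rewtoo) \leq \lambda$ for every $\rval \in \ps$, whence $\sum_{\rewtoo \in \rewardsw[\rew]} q_f(\rval,\rewtoo) \leq 1$, as required.

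There is no real obstacle here; the statement is essentially a routine verification that the chosen normalisation sends every $f+g$ into the image of $\poh[\rew]$ restricted to $\hls$. The only care needed is to ensure that the two defining inequalities of Equation~\eqref{eq:property poh of hl} hold simultaneously, which is exactly why the statement takes the maximum of the row sums over all $h \in \os + \set{g}$ rather than just over a single $h$: this guarantees a single scaling $\lambda$ works uniformly for every $f \in \os$.
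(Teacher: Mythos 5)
Your proof is correct and follows essentially the same route as the paper's: both verify that each element $\frac{1}{\lambda}(f+g)$ of $\frac{1}{\lambda}(\os+\set{g})$ satisfies the two conditions of Equation~\eqref{eq:property poh of hl} (non-negativity via $g\geq\abs{f}\geq-f$, and the row-sum bound via the choice of $\lambda$), so that it lies in $\poh[\rew]\hls$. Your explicit construction of the horse lottery by assigning the missing mass to $\rew$ is just the step the paper leaves implicit.
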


\begin{proof}
Consider any $h$ in $\os+\set{g}$, and let us show that $\frac{1}{\lambda}h$ satisfies the conditions in Equation~\eqref{eq:property poh of hl}.
The first one is satisfied because $\lambda>0$ and $h=f+g$ for some $f$ in $\os$, so $h=f+g=f+\sum_{f'\in\os}\abs{f'}\geq f+\abs{f}\geq0$ and therefore indeed $\frac{1}{\lambda}h\geq0$.
For the second condition, recall that $\lambda\geq\sum_{\rewtoo\in\rewardsw[\rew]}h\group{\cdot,\rewtoo}$ by construction and therefore indeed $\sum_{\rewtoo\in\rewardsw[\rew]}\frac{1}{\lambda}h\group{\cdot,\rewtoo}\leq1$.
\end{proof}

\begin{proposition}\label{prop:lift is one-to-one}
Consider any $\rew$ in $\rewards$.
The operator $\lift[\rew]$ is one-to-one on the choice functions on $\gambleson[{\ps\times\rewardsw[\rew]}]$ that satisfy Axioms~\ref{coh cf 4a: scaling} and~\ref{coh cf 4b: independence}.
\end{proposition}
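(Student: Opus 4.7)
The plan is to show that any choice function $\cf$ on $\gambleson[\ps\times\rewardsw[\rew]]$ satisfying Axioms~\ref{coh cf 4a: scaling} and~\ref{coh cf 4b: independence} can be reconstructed uniquely from its image $\lift[\rew]\cf$ on horse lotteries. Concretely, given $\lift[\rew]\cf[1] = \lift[\rew]\cf[2]$ I will derive an explicit formula for $\cf[i](\os)$ that involves only $\lift[\rew]\cf[i]$, thereby forcing $\cf[1](\os) = \cf[2](\os)$ for every $\os$ in $\cfdom(\gambleson[\ps\times\rewardsw[\rew]])$.

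The first step will be to upgrade the stated implications in Axioms~\ref{coh cf 4a: scaling} and~\ref{coh cf 4b: independence} into the equalities $\cf(\lambda\os') = \lambda\cf(\os')$ and $\cf(\os' + \set{\vect}) = \cf(\os') + \set{\vect}$; each direction follows by applying the corresponding axiom once forward and once inversely (with $1/\lambda$, respectively $-\vect$). The second step will be the normalisation: for a given $\os$, I will set $g \coloneqq \sum_{f \in \os}\abs{f}$ and choose $\lambda > 0$ large enough to invoke Lemma~\ref{lem:rescaling}, yielding $\os^* \in \cfdom(\hls)$ with $\tfrac{1}{\lambda}(\os + \set{g}) = \poh[\rew]\os^*$. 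Combining, and using that $\poh[\rew]$ is one-to-one on $\hls$ by Lemma~\ref{lemma:proj operator}, one obtains
\[
\cf[i](\os) = \cf[i](\os + \set{g}) - \set{g} = \lambda\,\cf[i](\poh[\rew]\os^*) - \set{g} = \lambda\,\poh[\rew]\lift[\rew]\cf[i](\os^*) - \set{g},
\]
where the final identity just inverts the definition $\lift[\rew]\cf[i](\os^*) = \poh[\rew]^{-1}\cf[i](\poh[\rew]\os^*)$.

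Since the right-hand side depends only on $\lift[\rew]\cf[i](\os^*)$, the hypothesis $\lift[\rew]\cf[1] = \lift[\rew]\cf[2]$ immediately forces $\cf[1](\os) = \cf[2](\os)$, completing the argument. I do not anticipate any serious obstacle here, because Lemma~\ref{lem:rescaling} does all the heavy lifting: the essential observation is simply that translation by a common gamble and rescaling by a positive scalar suffice to drag any finite set of gambles into $\poh[\rew]$ of some finite set of horse lotteries, and the scaling and independence axioms then transfer the value of $\cf[i]$ losslessly through these operations.
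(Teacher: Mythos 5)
Your proposal is correct and takes essentially the same route as the paper's own proof: both hinge on Lemma~\ref{lem:rescaling} together with Axioms~\ref{coh cf 4a: scaling} and~\ref{coh cf 4b: independence} (upgraded to equalities by applying them forward and inversely) and the injectivity of $\poh[\rew]$ on $\hls$ from Lemma~\ref{lemma:proj operator}, so as to transfer the value of $\cf$ at $\os$ to a value determined by $\lift[\rew]\cf$ at $\os^*$. The only difference is presentational—the paper argues \emph{ex absurdo} on a single option $f$ with $f\in\cf\group{\os}$ and $f\notin\cf'\group{\os}$, while you state the same transfer as an explicit reconstruction formula for $\cf\group{\os}$.
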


\begin{proof}
Assume \emph{ex absurdo} that $\lift[\rew]$ is not one-to-one, so there are choice functions $\cf$ and $\cf'$ on $\gambles\group{\ps\times\rewards[\rew]}$ that satisfy Axioms~\ref{coh cf 4a: scaling} and~\ref{coh cf 4b: independence}, such that $\lift[\rew]\cf=\lift[\rew]\cf'$ but nevertheless $\cf\neq\cf'$.
The latter means that there are $\os$ in $\cfdom\group{\gambleson[{\ps\times\rewardsw[\rew]}]}$ and $f$ in $\os$ such that $f\in\cf\group{\os}$ and $f\notin\cf'\group{\os}$.
Use Lemma~\ref{lem:rescaling} to find some $\lambda$ in $\posreals$ and $g$ in $\gambleson[{\ps\times\rewardsw[\rew]}]$ such that $\frac{1}{\lambda}(A+\set{g})=\poh[\rew]\os^*$ for some $\os^*$ in $\cfdom\group{\hls\group{\ps,\rewards}}$.
If we now apply Axioms~\ref{coh cf 4a: scaling} and~\ref{coh cf 4b: independence} we find that $\frac{f+g}{\lambda}\in\cf\group{\frac{1}{\lambda}\group{\os+\set{g}}}$, 
or equivalently, $\poh[\rew]^{-1}\group{\frac{f+g}{\lambda}}\in\lift[\rew]\cf\group{\os^*}$.
Similarly, we find that $\frac{f+g}{\lambda}\notin\cf'\group{\frac{1}{\lambda}\group{\os+\set{g}}}$, or equivalently, $\poh[\rew]^{-1}\group{\frac{f+g}{\lambda}}\notin\lift[\rew]\cf'\group{\os^*}$.
But this contradicts our assumption that $\lift[\rew]\cf=\lift[\rew]\cf'$.
\end{proof}

Specifying a choice function $\cf^*$ on $\hls$ induces a strict preference relation on the reward set, as follows. 
With any reward $\rew$ in $\rewards$ we can associate the constant and degenerate lottery $\hl[\rew]$ by letting
\begin{equation}\label{eq:degen-hl}
\hl[\rew](\rval,\rewtoo)
\coloneqq
\begin{cases}
1&\text{ if $\rewtoo=\rew$}\\
0&\text{ otherwise}
\end{cases}
\text{ for all $\rval$ in $\ps$ and $\rewtoo$ in $\rewards$}.
\end{equation}
This is the lottery that associates the certain reward $\rew$ with all states. Then a reward $\rew$ is strictly preferred to a reward $\rewtoo$ when $\hl[\rewtoo]\in\rf^*\group{\set{\hl[\rew],\hl[\rewtoo]}}$.

\begin{definition}[$\cf^*$ has worst reward $\rew$]
Consider any reward $\rew$ in $\rewards$, and any choice function $\cf^*$ on $\hls\group{\ps,\rewards}$.
We say that \emph{$\cf^*$ has worst reward $\rew$} if $\rew$ is the unique reward in $\rewards$ for which $\hl[\rew]\in\rf^*\group{\set{\hl,\hl[\rew]}}$ for all $\hl$ in $\hls\group{\ps,\rewards}\setminus\set{\hl[\rew]}$.
\end{definition}
\noindent
The notion of \emph{having worst reward} is closely related with what would be the natural translation of Axiom~\ref{coh cf 2: non-triviality} to choice functions $\cf^*$ on $\hls\group{\ps,\rewards}$: if $\cf^*$ satisfies
\begin{equation}\label{eq: cf 2 Seidenfeld alternative}
\group{\forall\hl[1],\hl[2]\in\hls}
\group[\Big]{
\group[\big]{
\hl[1]\neq\hl[2]
\text{ and }
\group{\forall\rewtoo\in\rewards[\rew]}
\group{\hl[1]\group{\cdot,\rewtoo}\leq\hl[2]\group{\cdot,\rewtoo}}}
\then\hl[1]\in\rf^*\group{\set{\hl[1],\hl[2]}}}
\end{equation}
for some $\rew$ in $\rewards$, then we say that \emph{$\cf^*$ satisfies the dominance relation for worst reward $\rew$}.

\begin{proposition}\label{prop:cf 2 and worst element}
Consider any $\rew$ in $\rewards$ and any choice function $\cf$ on $\gambleson[{\ps\times\rewardsw[\rew]}]$.
Then $\lift[\rew]\cf$ satisfies the dominance relation for worst reward $\rew$ (Equation~\eqref{eq: cf 2 Seidenfeld alternative}) if and only if $\lift[\rew]\cf$ has worst reward $\rew$.
\end{proposition}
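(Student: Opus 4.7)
The plan is to prove the two implications separately, with the forward direction being straightforward and the reverse direction requiring a rescaling trick that brings us into the range of the gamblifier.

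For the direct implication, assume Equation~\eqref{eq: cf 2 Seidenfeld alternative} holds. First I would establish \emph{existence}: fix any $\hl$ in $\hls\setminus\set{\hl[\rew]}$ and observe that $\hl[\rew]\group{\cdot,\rewtoo}=0\leq\hl\group{\cdot,\rewtoo}$ for every $\rewtoo$ in $\rewardsw[\rew]$, so applying Equation~\eqref{eq: cf 2 Seidenfeld alternative} with $\hl[1]=\hl[\rew]$ and $\hl[2]=\hl$ immediately yields $\hl[\rew]\in\rf^*\group{\set{\hl,\hl[\rew]}}$. For \emph{uniqueness}, I would suppose that some $\rew'\neq\rew$ also acts as a worst reward and derive a contradiction. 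Applied to $\hl=\hl[\rew]$ this would give $\hl[\rew']\in\rf^*\group{\set{\hl[\rew],\hl[\rew']}}$, while Equation~\eqref{eq: cf 2 Seidenfeld alternative} already furnishes $\hl[\rew]\in\rf^*\group{\set{\hl[\rew],\hl[\rew']}}$. Translating by $\poh[\rew]$ gives $\cf\group{\set{0,\poh[\rew]\hl[\rew']}}=\emptyset$, but since $\poh[\rew]\hl[\rew']\in\posgambles\group{\ps\times\rewardsw[\rew]}$ this conflicts with Axiom~\ref{coh cf 2: non-triviality}, which forces $\cf\group{\set{0,\poh[\rew]\hl[\rew']}}=\set{\poh[\rew]\hl[\rew']}$.

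For the reverse implication, I would start with $\hl[1]\neq\hl[2]$ such that $\hl[1]\group{\cdot,\rewtoo}\leq\hl[2]\group{\cdot,\rewtoo}$ for every $\rewtoo$ in $\rewardsw[\rew]$, and set $f_1\coloneqq\poh[\rew]\hl[1]$, $f_2\coloneqq\poh[\rew]\hl[2]$, and $g\coloneqq f_2-f_1$. By Lemma~\ref{lemma:proj operator}(i) we have $f_1\neq f_2$, so $g\in\posgambles$. The objective is to reach $f_1\in\rf\group{\set{f_1,f_2}}$, which through $\lift[\rew]$ is exactly the desired conclusion.

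The key move is to invoke the hypothesis on the worst reward only on one gamble, namely $g$ itself after a suitable positive rescaling. Specifically, I would choose $\lambda$ in $\posreals$ small enough that $\lambda g\in\poh[\rew]\hls$---this is an application of the same idea as Lemma~\ref{lem:rescaling}, since $g\geq0$ and scaling down by a sufficiently small $\lambda$ forces $\sum_{\rewtoo\in\rewardsw[\rew]}\lambda g\group{\cdot,\rewtoo}\leq1$, so $\lambda g=\poh[\rew]\hl$ for some $\hl$ in $\hls\setminus\set{\hl[\rew]}$. The worst-reward assumption then yields $\hl[\rew]\in\rf^*\group{\set{\hl,\hl[\rew]}}$, which by the definition of $\lift[\rew]$ translates into $0\in\rf\group{\set{\lambda g,0}}$. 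Applying Axiom~\ref{coh cf 4a: scaling} with coefficient $1/\lambda$ gives $0\in\rf\group{\set{g,0}}$, and then Axiom~\ref{coh cf 4b: independence} with translation by $f_1$ gives $f_1\in\rf\group{\set{f_1,f_1+g}}=\rf\group{\set{f_1,f_2}}$, as required.

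The main obstacle is the reverse direction: we must transport a one-parameter family of hypotheses (worst reward against every horse lottery $\hl$) into a conclusion about arbitrary dominating pairs $\set{\hl[1],\hl[2]}$. The linearity of the gamblifier and the scaling/independence axioms on $\cf$ do the heavy lifting, but the step where one must guarantee that $\lambda g$ lies inside $\poh[\rew]\hls$ is where care is needed; it is the analogue for the present setting of the rescaling trick already exploited in Lemma~\ref{lem:rescaling} and in the proof of Proposition~\ref{prop:lift is one-to-one}.
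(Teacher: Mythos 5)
Your proof follows essentially the same route as the paper's: the forward direction specialises the dominance relation~\eqref{eq: cf 2 Seidenfeld alternative} to pairs $\set{\hl,\hl[\rew]}$ exactly as the paper does, and the converse transports the worst-reward rejection through $\poh[\rew]$ to the statement $0\in\rf\group{\set{0,g}}$ with $g\coloneqq\poh[\rew]\group{\hl[2]-\hl[1]}$, finishing with Axiom~\ref{coh cf 4b: independence}. The one technical difference lies in how you realise $g$ inside $\poh[\rew]\hls$: you rescale by a small $\lambda>0$ and undo the scaling with Axiom~\ref{coh cf 4a: scaling}, whereas the paper observes that no rescaling is needed, since $0\leq\hl[2]\group{\cdot,\rewtoo}-\hl[1]\group{\cdot,\rewtoo}$ for all $\rewtoo$ in $\rewardsw[\rew]$ and these differences sum to at most $1$, so one may define a horse lottery $\hl'$ with precisely these off-$\rew$ components and $\poh[\rew]\hl'=g$. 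Your variant is correct, but it costs the extra Axiom~\ref{coh cf 4a: scaling}; this is harmless in the enclosing theorem, where both~\ref{coh cf 4a: scaling} and~\ref{coh cf 4b: independence} are assumed, but the paper's construction gets by with~\ref{coh cf 4b: independence} alone, which matters because the proposition is stated for an arbitrary choice function $\cf$.

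The step to be careful about is your uniqueness argument in the forward direction. The instinct is sound---the definition of ``has worst reward $\rew$'' does ask for uniqueness, and the paper's own proof silently ignores this---but your patch invokes Axiom~\ref{coh cf 2: non-triviality} for $\cf$, which is not among the proposition's hypotheses; indeed, the surrounding theorem uses precisely this proposition to establish the equivalence between Axiom~\ref{coh cf 2: non-triviality} and the worst-reward property, so importing it here is not licensed by the statement as given. Moreover, uniqueness genuinely cannot be derived from the dominance relation alone: the choice function with $\cf\group{\os}=\emptyset$ for every option set satisfies~\eqref{eq: cf 2 Seidenfeld alternative} trivially, yet under $\lift[\rew]$ it gives \emph{every} reward the worst-reward property. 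So either leave the uniqueness clause aside, as the paper implicitly does, or flag explicitly that your argument for it borrows an axiom that is only available where the proposition is actually applied.
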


\begin{proof}
For the direct implication, consider any $\hl$ in $\hls\group{\ps,\rewards}\setminus\set{\hl[\rew]}$.
Then $\hl[\rew]\group{\cdot,\rewtoo}=0\leq\hl\group{\cdot,\rewtoo}$ for all $\rewtoo$ in $\rewardsw[\rew]$, and also $\hl\neq\hl[\rew]$, whence indeed $\hl[\rew]\in\lift[\rew]\rf\group{\set{\hl,\hl[\rew]}}$, because by assumption $\lift[\rew]\cf$ satisfies Equation~\eqref{eq: cf 2 Seidenfeld alternative} for $\rew$.

For the converse implication, consider any $\hl[1]$ and $\hl[2]$ in $\hls\group{\ps,\rewards}$ such that $\hl[1]\neq\hl[2]$ and  $\hl[1]\group{\cdot,\rewtoo}\leq\hl[2]\group{\cdot,\rewtoo}$ for all $\rewtoo$ in $\rewardsw[\rew]$.
Then $\poh[\rew]\hl[1]<\poh[\rew]\hl[2]$, whence $0<\poh[\rew]\group{\hl[2]-\hl[1]}$.
Observe that for the horse lottery $\hl'$ in $\hls\group{\ps,\rewards}$ defined by
\begin{equation*}
\hl'\group{\cdot,\rewtoo}
\coloneqq
\begin{cases}
\hl[2]\group{\cdot,\rewtoo}-\hl[1]\group{\cdot,\rewtoo}
&\text{ if $\rewtoo\in\rewardsw[\rew]$}\\
1-\sum_{\rewtoo\in\rewardsw[\rew]}\group[\big]{\hl[2]\group{\cdot,\rewtoo}-\hl[1]\group{\cdot,\rewtoo}}
&\text{ if $\rewtoo=\rew$},
\end{cases}
\end{equation*}
we have that $\poh[\rew]\hl'=\poh[\rew]\group{\hl[2]-\hl[1]}$.
Because $\lift[\rew]\cf$ is assumed to have worst reward $\rew$, we know that in particular $\hl[\rew]\in\lift[\rew]\rf\group{\set{\hl',\hl[\rew]}}$, so we infer from Equation~\eqref{eq:lift of poh} that $0=\poh[\rew]\hl[\rew]\in\rf\group{\set{\poh[\rew]\hl[\rew],\poh[\rew]\hl'}}=\rf\group{\set{0,\poh[\rew]\hl[2]-\poh[\rew]\hl[1]}}$.
Now use Axiom~\ref{coh cf 4b: independence} to infer that $\poh[\rew]\hl[1]\in\rf\group{\set{\poh[\rew]\hl[1],\poh[\rew]\hl[2]}}$, whence indeed $\hl[1]\in\lift[\rew]\rf\group{\set{\hl[1],\hl[2]}}$, by Equation~\eqref{eq:lift of poh}.
\end{proof}

Applying the lifting $\lift[\rew]$ furthermore preserves coherence:

\begin{theorem}
Consider any reward $\rew$ in $\rewards$, and any choice function $\cf$ on $\gambles\group{\ps\times\rewardsw[\rew]}$ that satisfies Axioms~\ref{coh cf 4a: scaling} and\/~\ref{coh cf 4b: independence}.
Then the following statements hold:
\begin{enumerate}[label=\upshape(\roman*),leftmargin=*]
\item\label{it:equivalence coherence 1} $\cf$ satisfies Axiom~\ref{coh cf 1: irreflexivity} if and only if\/ $\lift[\rew]\cf$ satisfies Axiom~\ref{coh cf 1 Seidenfeld};
\item\label{it:equivalence coherence 2} $\cf$ satisfies Axiom~\ref{coh cf 2: non-triviality}\ if and only if\/ $\lift[\rew]\cf$ has worst reward $\rew$;
\item\label{it:equivalence coherence 3} $\cf$ satisfies Axiom~\ref{coh cf 3a: alpha} if and only if\/ $\lift[\rew]\cf$ satisfies Axiom~\ref{coh cf 3a Seidenfeld};
\item\label{it:equivalence coherence 4} $\cf$ satisfies Axiom~\ref{coh cf 3b: aizerman} if and only if\/ $\lift[\rew]\cf$ satisfies Axiom~\ref{coh cf 3b Seidenfeld};
\item\label{it:equivalence coherence 5} $\lift[\rew]\cf$ satisfies Axiom~\ref{coh cf 4 Seidenfeld};
\item\label{it:equivalence coherence 6} $\cf$ satisfies Axiom~\ref{coh cf 5: convexity} if and only if\/ $\lift[\rew]\cf$ satisfies Axiom~\ref{coh cf 5 Seidenfeld}.
\end{enumerate}
\end{theorem}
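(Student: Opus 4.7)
My plan is to handle all six items through a single unifying strategy based on the defining identity $\lift[\rew]\cf\group{\os^*} = \poh[\rew]^{-1}\cf\group{\poh[\rew]\os^*}$ together with the linearity and injectivity of the gamblifier $\poh[\rew]$ established in Lemma~\ref{lemma:proj operator}. For each of items~\ref{it:equivalence coherence 1},~\ref{it:equivalence coherence 3},~\ref{it:equivalence coherence 4}, and~\ref{it:equivalence coherence 6}, the forward direction (assuming the axiom on $\cf$) amounts to specialising to option sets of the form $\poh[\rew]\os^*$ for $\os^*\in\cfdom\group{\hls}$, and then translating the conclusion back through $\poh[\rew]^{-1}$; linearity of $\poh[\rew]$ ensures that inclusions, scalar multiples, translations, and convex hulls (needed in item~\ref{it:equivalence coherence 6}) are preserved.

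For the converse directions, the technical difficulty is that $\poh[\rew]\hls$ is only a strict subset of $\gambleson[\ps\times\rewardsw[\rew]]$, so an axiom stated on $\lift[\rew]\cf$ only directly restricts $\cf$ on $\poh[\rew]\hls$. To extend the conclusion to arbitrary finite sets of gambles I would follow the template of Proposition~\ref{prop:lift is one-to-one}: use Axiom~\ref{coh cf 4b: independence} to translate the relevant option sets by a common gamble $g$ chosen as in Lemma~\ref{lem:rescaling}, then Axiom~\ref{coh cf 4a: scaling} to rescale by $\frac{1}{\lambda}$, which places them simultaneously into $\poh[\rew]\hls$; apply the axiom on $\lift[\rew]\cf$ there; and reverse the scaling and translation to conclude.

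Item~\ref{it:equivalence coherence 5} is the odd one out: it asserts only one direction and follows directly by expanding the equivalence in Axiom~\ref{coh cf 4 Seidenfeld} through $\poh[\rew]$ and applying Axioms~\ref{coh cf 4a: scaling} and~\ref{coh cf 4b: independence} with parameters $\alpha$ and $(1-\alpha)\poh[\rew]\hl$, using linearity of $\poh[\rew]$ to match the convex combinations on both sides. Item~\ref{it:equivalence coherence 2} is handled by combining Proposition~\ref{prop:cf 2 and worst element}, which identifies "having worst reward $\rew$" with the dominance relation~\eqref{eq: cf 2 Seidenfeld alternative}, with an additional equivalence between Axiom~\ref{coh cf 2: non-triviality} on $\cf$ and this dominance relation on $\lift[\rew]\cf$: the direct implication uses that the pointwise inequality on horse lotteries together with $\hl[1]\neq\hl[2]$ yields $\poh[\rew]\hl[1]\svo\poh[\rew]\hl[2]$ via injectivity of $\poh[\rew]$, while the converse rescales an arbitrary pair $\vect\svo\vect[v]$ into $\poh[\rew]\hls$ and applies the dominance relation there.

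The main obstacle I anticipate is ensuring the rescaling argument behaves uniformly across items~\ref{it:equivalence coherence 3} and~\ref{it:equivalence coherence 4}, where three option sets $\os$, $\os[1]$, $\os[2]$ appear together in the hypothesis and must all be transported into $\poh[\rew]\hls$ simultaneously without disturbing the inclusions relating them. This is handled by choosing the gamble $g$ in Lemma~\ref{lem:rescaling} to dominate $\os\cup\os[1]\cup\os[2]$ at once, so that a single common translation and rescaling preserves inclusions and all the axiom's hypotheses transfer through $\poh[\rew]^{-1}$ en bloc.
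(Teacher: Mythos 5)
Your plan is correct and follows essentially the same route as the paper's proof: forward implications by specialising to option sets of the form $\poh[\rew]\os^*$ and pulling back through $\poh[\rew]^{-1}$, converse implications by the translation-and-rescaling device of Lemma~\ref{lem:rescaling} combined with Axioms~\ref{coh cf 4a: scaling} and~\ref{coh cf 4b: independence}, item~\ref{it:equivalence coherence 5} by the chain of equivalences through Equations~\eqref{eq:choice relation} and~\eqref{eq:lift of poh}, and item~\ref{it:equivalence coherence 2} via Proposition~\ref{prop:cf 2 and worst element}. The only detail worth spelling out when writing it up is, in the converse of item~\ref{it:equivalence coherence 6}, that $\poh[\rew]\os[1]^*\subseteq\ch\group{\poh[\rew]\os^*}$ indeed yields $\os[1]^*\subseteq\ch\group{\os^*}$ (the paper checks the $\rew$-component explicitly; injectivity of $\poh[\rew]$ on the convex set $\hls$ also does it), which your appeal to linearity and injectivity covers in spirit.
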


\begin{proof}
For the direct implication of~\ref{it:equivalence coherence 1}, assume that $\cf$ satisfies Axiom~\ref{coh cf 1: irreflexivity}.
Consider any $\os^*$ in $\cfdom\group{\hls\group{\ps,\rewards}}$.
Then $\lift[\rew]\cf\group{\os^*}=\poh[\rew]^{-1}\cf\group{\poh[\rew]\os}\neq\emptyset$.

For the converse implication, assume that $\lift[\rew]\cf$ satisfies Axiom~\ref{coh cf 1 Seidenfeld}.
Consider any $\os$ in $\cfdom\group{\gambles\group{\ps\times\rewardsw[\rew]}}$.
By Lemma~\ref{lem:rescaling}, there are $\lambda$ in $\posreals$ and $g$ in $\gambleson[{\ps\times\rewardsw[\rew]}]$ such that $\frac{1}{\lambda}(A+\set{g})=\poh[\rew]\os^*$ for some $\os^*$ in $\cfdom\group{\hls\group{\ps,\rewards}}$.
Applying Axioms~\ref{coh cf 4a: scaling} and~\ref{coh cf 4b: independence} and the definition of~$\lift$ [Equation~\eqref{eq:lift of poh}], we infer that indeed
\begin{equation*}
\cf\group{\os}
=\lambda\cf\group[\big]{\frac{1}{\lambda}\group{\os+\set{g}}}-\set{g}
=\lambda\cf\group{\poh[\rew]\os^*}-\set{g}
=\lambda\poh[\rew]\lift[\rew]\cf\group{\os^*}-\set{g}
\neq\emptyset.
\end{equation*}

For the direct implication of~\ref{it:equivalence coherence 2}, assume that $\cf$ satisfies Axiom~\ref{coh cf 2: non-triviality}.
Consider any $\hl[1]$ and $\hl[2]$ in $\hls\group{\ps,\rewards}$ such that $\hl[1]\neq\hl[2]$ and $\hl[1]\group{\cdot,\rewtoo}\leq\hl[2]\group{\cdot,\rewtoo}$ for all $\rewtoo$ in $\rewardsw[\rew]$.
Then $\poh[\rew]\hl[1]<\poh[\rew]\hl[2]$, so Axiom~\ref{coh cf 2: non-triviality} guarantees that $\poh[\rew]\hl[1]\in\rf\group{\set{\poh[\rew]\hl[1],\poh[\rew]\hl[2]}}$.
Equation~\eqref{eq:lift of poh} now turns this into $\hl[1]\in\lift[\rew]\rf\group{\set{\hl[1],\hl[2]}}$.
Proposition~\ref{prop:cf 2 and worst element} now tells us that $\lift[\rew]\cf$ has worst reward~$\rew$.

For the converse implication, assume that $\lift[\rew]\cf$ has worst reward $\rew$.
Consider any $f_1$ and $f_2$ in $\gambles\group{\ps\times\rewardsw[\rew]}$ such that $f_1<f_2$.
Let
\begin{equation*}
\lambda\coloneqq\max_{\rval\in\ps}\sum_{\rewtoo\in\rewardsw[\rew]}\group{f_2\group{\rval,\rewtoo}-f_1\group{\rval,\rewtoo}}>0.
\end{equation*}
Then clearly $\frac{1}{\lambda}\group{f_2-f_1}=\poh[\rew]\hl$ for some $\hl$ in $\hls\group{\ps,\rewards}$.
Also, $\hl\neq\hl[\rew]$ because $f_1\neq f_2$.
Using the assumption that $\lift[\rew]\cf$ has worst reward $\rew$, we find that then $\hl[\rew]\in\lift[\rew]\rf\group{\set{\hl[\rew],\hl}}$.
As a consequence, by Equation~\eqref{eq:lift of poh}, we find that $0=\poh[\rew]\hl[\rew]\in\rf\group{\set{0,\poh[\rew]\hl}}=\rf\group{0,\frac{1}{\lambda}\group{f_2-f_1}}$.
Using Axiom~\ref{coh cf 4a: scaling} we infer that $0\in\rf\group{\set{0,f_2-f_1}}$, and using Axiom~\ref{coh cf 4b: independence} that indeed $f_1\in\rf\group{\set{f_1,f_2}}$.

For the direct implication of~\ref{it:equivalence coherence 3}, assume that $\cf$ satisfies Axiom~\ref{coh cf 3a: alpha}.
Consider any $\os^*$, $\os[1]^*$ and $\os[2]^*$ in $\cfdom\group{\hls\group{\ps,\rewards}}$ and assume that $\os[1]^*\subseteq\lift[\rew]\rf\group{\os[2^*]}$ and $\os[2]^*\subseteq\os^*$.
Then $\poh[\rew]\os[1]^*\subseteq\rf\group{\poh[\rew]\os[2]^*}$ by Equation~\eqref{eq:lift of poh}, and $\poh[\rew]\os[1]^*\subseteq\poh[\rew]\os^*$.
Use version~\eqref{eq:senalpha} of Axiom~\ref{coh cf 3a: alpha} to infer that then $\poh[\rew]\os[1]^*\subseteq\rf\group{\poh[\rew]\os^*}$, whence indeed $\os[1]^*\subseteq\lift[\rew]\rf\group{\os^*}$ by Equation~\eqref{eq:lift of poh}.

For the converse implication, assume that $\lift[\rew]\cf$ satisfies Axiom~\ref{coh cf 3a Seidenfeld}.
Consider any $\os$, $\os[1]$ and $\os[2]$ in $\gambles\group{\ps\times\rewardsw[\rew]}$ and assume that $\os[1]\subseteq\rf\group{\os[2]}$ and $\os[2]\subseteq\os$.
Use Lemma~\ref{lem:rescaling} to find $\lambda$ in $\posreals$ and $g$ in $\gambleson[{\ps\times\rewardsw[\rew]}]$ such that $\frac{1}{\lambda}(A+\set{g})=\poh[\rew]\os^*$ for some $\os^*$ in $\cfdom\group{\hls\group{\ps,\rewards}}$.
Analogously, we find that $\frac{1}{\lambda}\group{\os[2]+\set{g}}=\poh[\rew]\group{\os[2]^*}$ for some $\os[2]^*\subseteq\os^*$.
$\os[1]\subseteq\rf\group{\os[2]}$ implies $\os[1]\subseteq\os[2]$, so also $\frac{1}{\lambda}\group{\os[1]+\set{g}}=\poh[\rew]\group{\os[1]^*}$ for some $\os[1]^*\subseteq\os[2]^*$.
Using Axioms~\ref{coh cf 4a: scaling} and~\ref{coh cf 4b: independence}, we infer from the assumptions that $\frac{1}{\lambda}\group{\os[1]+\set{g}}\subseteq\rf\group[\big]{\frac{1}{\lambda}\group{\os[2]+\set{g}}}$, or in other words, $\poh[\rew]\os[1]^*\subseteq\rf\group{\poh[\rew]\os[2]^*}$.
Equation~\eqref{eq:lift of poh} then yields that $\os[1]^*\subseteq\lift[\rew]\rf\group{\os[2]^*}$.
As a result, using Axiom~\ref{coh cf 3a Seidenfeld}, $\os[1]^*\subseteq\lift[\rew]\rf\group{\os^*}$, which, again applying Equation~\eqref{eq:lift of poh}, results in $\frac{1}{\lambda}\group{\os[1]+\set{g}}=\poh[\rew]\os[1]^*\subseteq\rf\group{\poh[\rew]\os^*}=\rf\group[\big]{\frac{1}{\lambda}\group{\os+\set{g}}}$, and as a consequence, by Axioms~\ref{coh cf 4a: scaling} and~\ref{coh cf 4b: independence}, we find eventually that indeed $\os[1]\subseteq\rf\group{\os}$.

For the direct implication of~\ref{it:equivalence coherence 4}, assume that $\cf$ satisfies Axiom~\ref{coh cf 3b: aizerman}.
Consider any $\os^*$, $\os[1]^*$ and $\os[2]^*$ in $\cfdom\group{\hls\group{\ps,\rewards}}$ and assume that $\os[1]^*\subseteq\lift[\rew]\rf\group{\os[2]^*}$ and $\os^*\subseteq\os[1]^*$.
Then $\poh[\rew]\os[1]^*\subseteq\rf\group{\poh[\rew]\os[2]^*}$ by Equation~\eqref{eq:lift of poh}, and $\poh[\rew]\os^*\subseteq\poh[\rew]\os[1]^*$.
Use version~\eqref{eq:aizerman} of Axiom~\ref{coh cf 3b: aizerman} to infer that then $\poh[\rew]\group{\os[1]^*\setminus\os^*}=\group{\poh[\rew]\os[1]^*}\setminus\group{\poh[\rew]\os}\subseteq\rf\group{\group{\poh[\rew]\os[2]^*}\setminus\group{\poh[\rew]\os^*}}=\rf\group{\poh[\rew]\group{\os[2]^*\setminus\os^*}}$, whence indeed $\os[1]^*\setminus\os^*\subseteq\lift[\rew]\rf\group{\os[2]^*\setminus\os^*}$.

For the converse implication, assume that $\lift[\rew]\cf$ satisfies Axiom~\ref{coh cf 3b Seidenfeld}.
Consider any $\os$, $\os[1]$ and $\os[2]$ in $\cfdom\group{\gambles\group{\ps\times\rewards[\rew]}}$ and assume that $\os[1]\subseteq\rf\group{\os[2]}$ and $\os\subseteq\os[1]$.
Use Lemma~\ref{lem:rescaling} to find $\lambda$ in $\posreals$ and $g$ in $\gambleson[{\ps\times\rewardsw[\rew]}]$ such that $\frac{1}{\lambda}(\os[2]+\set{g})=\poh[\rew]\os[2]^*$ for some $\os[2]^*$ in $\cfdom\group{\hls\group{\ps,\rewards}}$.
$\os[1]\subseteq\rf\group{\os[2]}$ implies $\os[1]\subseteq\os[2]$, whence $\frac{1}{\lambda}\group{\os[1]+\set{g}}=\poh[\rew]\group{\os[1]^*}$ for some $\os[1]^*\subseteq\os[2]^*$, and 
analogously, $\frac{1}{\lambda}\group{\os+\set{g}}=\poh[\rew]\group{\os^*}$ for some $\os^*\subseteq\os[1]^*$.
Using Axioms~\ref{coh cf 4a: scaling} and~\ref{coh cf 4b: independence} we find that $\frac{1}{\lambda}\group{\os[1]+\set{g}}\subseteq\rf\group[\big]{\frac{1}{\lambda}\group{\os[2]+\set{g}}}$, or in other words, $\poh[\rew]\os[1]^*\subseteq\rf\group{\poh[\rew]\os[2]^*}$.
Equation~\eqref{eq:lift of poh} then tells us that $\os[1]^*\subseteq\lift[\rew]\rf\group{\os[2]^*}$, which, using Axiom~\ref{coh cf 3b Seidenfeld}, results in $\os[1]^*\setminus\os^*\subseteq\lift[\rew]\rf\group{\os[2]^*\setminus\os^*}$.
Again applying Equation~\eqref{eq:lift of poh} results in
\begin{align*}
\frac{1}{\lambda}\group{\group{\os[1]\setminus\os}+\set{g}}
&=\frac{1}{\lambda}\group{\os[1]+\set{g}}\setminus\frac{1}{\lambda}\group{\os+\set{g}}
=\group{\poh[\rew]\os[1]^*}\setminus\group{\poh[\rew]\os^*}
=\poh[\rew]\group{\os[1]^*\setminus\os^*}\\
&\subseteq\rf\group{\poh[\rew]\group{\os[2]^*\setminus\os^*}}
=\rf\group{\group{\poh[\rew]\os[2]^*}\setminus\group{\poh[\rew]\os^*}}\\
&=\rf\group{\frac{1}{\lambda}\group{\os[2]+\set{g}}\setminus\frac{1}{\lambda}\group{\os+\set{g}}}
=\rf\group{\frac{1}{\lambda}\group{\group{\os[2]\setminus\os}+\set{g}}},
\end{align*}
and as a consequence, by Axioms~\ref{coh cf 4a: scaling} and~\ref{coh cf 4b: independence}, we find eventually that indeed $\os[1]\setminus\os\subseteq\rf\group{\os[2]\setminus\os}$.

For~\ref{it:equivalence coherence 5}, consider any $\os[1]^*$ and $\os[2]^*$ in $\cfdom\group{\hls\group{\ps,\rewards}}$, any $\hl$ in $\hls\group{\ps,\rewards}$, and any $\alpha$ in $(0,1]$.
Consider the following chain of equivalences
\begin{align*}
&\os[1]^*\crel[{\lift[\rew]\cf}]\os[2]^*\\
&\quad\iff
\os[1]^*\subseteq\lift[\rew]\rf\group{\os[1]^*\cup\os[2]^*}&&\text{by Equation~\eqref{eq:choice relation}}\\
&\quad\iff
\poh[\rew]\os[1]^*\subseteq\rf\group{\poh[\rew]\group{\os[1]^*\cup\os[2]^*}}&&\text{by Equation~\eqref{eq:lift of poh}}\\
&\quad\iff
\poh[\rew]\alpha\os[1]^*\subseteq\rf\group{\poh[\rew]\alpha\group{\os[1]^*\cup\os[2]^*}}&&\text{by Axiom~\ref{coh cf 4a: scaling}}\\
&\quad\iff
\poh[\rew]\group{\alpha\os[1]^*+(1-\alpha)\set{\hl}}\subseteq\rf\group{\poh[\rew]\group{\alpha\group{\os[1]^*\cup\os[2]^*}+(1-\alpha)\set{\hl}}}&&\text{by Axiom~\ref{coh cf 4b: independence}}\\
&\quad\iff
\alpha\os[1]^*+(1-\alpha)\set{\hl}\subseteq\lift[\rew]\rf\group{\alpha\group{\os[1]^*\cup\os[2]^*}+(1-\alpha)\set{\hl}}&&\text{by Equation~\eqref{eq:lift of poh}}\\
&\quad\iff
\group{\alpha\os[1]^*+(1-\alpha)\set{\hl}}\crel[{\lift[\rew]\cf}]\group{\alpha\os[2]^*+(1-\alpha)\set{\hl}}&&\text{by Equation~\eqref{eq:choice relation}},
\end{align*}
which tells us that $\lift[\rew]\cf$ satisfies Axiom~\ref{coh cf 4 Seidenfeld}.

For the direct implication of~\ref{it:equivalence coherence 6}, assume that $\cf$ satisfies Axiom~\ref{coh cf 5: convexity}.
Consider any $\os^*$ and $\os[1]^*$ in $\cfdom\group{\hls\group{\ps,\rewards}}$ and assume that $\os^*\subseteq\os[1]^*\subseteq\ch\group{\os^*}$.
Then $\poh[\rew]\os^*\subseteq\poh[\rew]\os[1]^*\subseteq\ch\group{\poh[\rew]\os^*}$, whence $\cf\group{\poh[\rew]\os^*}\subseteq\cf\group{\poh[\rew]\os[1]^*}$ by Axiom~\ref{coh cf 5: convexity}.
Use Equation~\eqref{eq:lift of poh} to infer that then indeed $\lift[\rew]\cf\group{\os^*}\subseteq\lift[\rew]\cf\group{\os[1]^*}$.

For the converse implication, assume that $\lift[\rew]\cf$ satisfies Axiom~\ref{coh cf 5 Seidenfeld}.
Consider any $\os$ and $\os[1]$ in $\cfdom\group{\gambles\group{\ps\times\rewards[\rew]}}$ and assume that $\os\subseteq\os[1]\subseteq\ch\group{\os}$.
Use Lemma~\ref{lem:rescaling} to find $\lambda$ in $\posreals$ and $g$ in $\gambleson[{\ps\times\rewardsw[\rew]}]$ such that $\frac{1}{\lambda}(\os[1]+\set{g})=\poh[\rew]\os[1]^*$ for some $\os[1]^*$ in $\cfdom\group{\hls\group{\ps,\rewards}}$
, and analogously, $\frac{1}{\lambda}\group{\os+\set{g}}=\poh[\rew]\group{\os^*}$ for some $\os^*\subseteq\os[1]^*$.
From $\os[1]\subseteq\ch\group{\os}$ infer that $\frac{1}{\lambda}\group{\os[1]+\set{g}}\subseteq\ch\group{\frac{1}{\lambda}\group{\os+\set{g}}}$, or in other words, $\poh[\rew]\os[1]^*\subseteq\ch\group{\poh[\rew]\os^*}$.
Then we claim that $\os[1]^*\subseteq\ch\group{\os^*}$.
To prove this, consider any $\hl$ in $\os[1]^*$.
Then there are $n$ in $\nats$, $\hl[i]$ in $\os$, and $\alpha_i\geq0$ such that $\sum_{i=1}^n\alpha_i=1$ and $\hl\group{\cdot,\rewtoo}=\sum_{i=1}^n\alpha_i\hl[i]\group{\cdot,\rewtoo}$ for all $\rewtoo$ in $\rewardsw[\rew]$.
Moreover,
\begin{align*}
\hl\group{\cdot,\rew}
&=1-\sum_{\rewtoo\in\rewardsw[\rew]}\hl\group{\cdot,\rewtoo}
=1-\sum_{\rew\in\rewardsw[\rew]}\sum_{i=1}^n\alpha_i\hl[i]\group{\cdot,\rewtoo}\\
&=\sum_{i=1}^n\alpha_i-\sum_{i=1}^n\alpha_i\sum_{\rew\in\rewardsw[\rew]}\hl[i]\group{\cdot,\rewtoo}
=\sum_{i=1}^n\alpha_i\group[\big]{1-\sum_{\rew\in\rewardsw[\rew]}\hl[i]\group{\cdot,\rewtoo}}
=\sum_{i=1}^n\alpha_i\hl[i]\group{\cdot,\rew},
\end{align*}
so indeed $\hl\in\ch\group{\os^*}$.
Use Axiom~\ref{coh cf 5 Seidenfeld} to infer that then $\lift[\rew]\cf\group{\os^*}\subseteq\lift[\rew]\cf\group{\os[1]^*}$.
Equation~\eqref{eq:lift of poh} turns this into $\cf\group[\big]{\frac{1}{\lambda}\group{\os+\set{g}}}=\cf\group{\poh[\rew]\os^*}\subseteq\cf\group{\poh[\rew]\os[1]^*}=\cf\group[\big]{\frac{1}{\lambda}\group{\os[1]+\set{g}}}$, which by Axioms~\ref{coh cf 4a: scaling} and~\ref{coh cf 4b: independence}, results in $\cf\group{\os}\subseteq\cf\group{\os[1]}$.
\end{proof}

We conclude that our discussion of choice functions on linear spaces subsumes the treatment of choice functions on horse lotteries satisfying Axiom~\ref{coh cf 4 Seidenfeld}.
Using the connections established above, all the results that we will prove later on are also applicable to choice functions on horse lotteries that satisfy the corresponding rationality axioms.

\section{The link with desirability}\label{sec:link with desirability}
\citet{vancamp2017} have studied in some detail how the coherent choice functions in the sense of Definition~\ref{def: rationality axioms for choice functions} can be related to coherent sets of desirable options (gambles).
As an example, given a coherent set of desirable options $\sodv$, the choice function that identifies the undominated---under the preference relation induced by $\sodv$---options, is coherent.
This choice rule is called \emph{maximality} (see Equation~\eqref{eq:CD} further on).
There are other rules that induce coherent choice functions, such as E-admissibility---those choice functions identify the options whose (precise) expectation is maximal for at least one probability mass function in the credal set induced by $\sodv$.
Since we have shown in earlier work~\citep[Proposition~13]{vancamp2017} that maximality leads to the most conservative coherent choice function that reflects the binary choices represented by $\sodv$ \citep[see also][Theorem~3]{Bradley2015}, we focus on maximality as the connection between desirability and choice functions.
Here, we investigate what remains of this connection when we require in addition that our choice functions should satisfy Axiom~\ref{coh cf 5: convexity}.

We recall that a set of desirable options is simply a subset of the vector space $\vs$.
The underlying idea is that a subject strictly prefers each option in this set to the status quo~$0$.
As for choice functions, we pay special attention to \emph{coherent} sets of desirable options.

\begin{definition}\label{def: rationality axioms for sets of desirable options}
A set of desirable options $\sodv$ is called \emph{coherent} if for all $\vect$ and $\vect[v]$ in $\vs$, and all $\lambda$ in $\posreals$:
\begin{enumerate}[label=\upshape D$_{\arabic*}$.,ref=\upshape D$_{\arabic*}$,leftmargin=*,noitemsep,topsep=0pt]
\item\label{coh sodv 1: 0 is not desirable}
  $0\notin\sodv$;
\item\label{coh sodv 2: positive vectors are desirable}
  $\vspos\subseteq\sodv$;
\item\label{coh sodv 3: positive scaling is desirable}
  if $\vect\in\sodv$ then $\lambda\vect\in\sodv$;
\item\label{coh sodv 4: sum is desirable}
  if $\vect,\vect[v]\in\sodv$ then $\vect+\vect[v]\in\sodv$.
\end{enumerate}
We collect all coherent sets of desirable options in the set~$\allcohsodvs$.
\end{definition}
\noindent More details can be found in a number of papers and books~\citep{walley1991,walley2000,moral2005b,miranda2010,couso2011,cooman2010,cooman2011b,quaeghebeur2012:itip,quaeghebeur2012:statements,debock2015}.

Axioms~\ref{coh sodv 3: positive scaling is desirable} and~\ref{coh sodv 4: sum is desirable} guarantee that a coherent $\sodv$ is a convex cone.
This convex cone induces a strict partial order $\prel[\sodv]$ on~$\vs$, by letting
\begin{equation}\label{eq:order-desirs}
\vect\prel[\sodv]\vect[v]\iff0\prel[\sodv]\vect[v]-\vect\iff\vect[v]-\vect\in\sodv,
\end{equation}
so $\sodv=\cset{\vect\in\vs}{0\prel[\sodv]\vect}$ \citep{cooman2010,quaeghebeur2012:itip}.
$\sodv$ and $\prel[\sodv]$ are mathematically equivalent: given one of $\sodv$ or $\prel[\sodv]$, we can determine the other unequivocally using the formulas above.
When it is clear from the context which set of desirable options $\sodv$ we are working with, we often refrain from mentioning the explicit reference to $\sodv$ in $\prel[\sodv]$ and then we simply write $\prel$.
Coherence for sets of desirable options transfers to binary relations $\prel$ as follows: $\prel$ must be a strict partial order---meaning that it is irreflexive and transitive---such that ${\svo}\subseteq{\prel}$, and must satisfy the two characteristic properties of Equations~\eqref{eq: vector ordering 1: add constant vector to both sides} and~\eqref{eq: vector ordering 2: multiply with positive lambda}.

What is the relationship between choice functions and sets of desirable options?
Since we have just seen that sets of desirable options represent binary preferences, we see that we can associate a set of desirable options $\sodv[\cf]$ with every given choice function $\cf$ by focusing on its binary choices:
\begin{equation}\label{eq:binary:behaviour}
\vect\prel[{\sodv[\cf]}]\vect[v] 
\iff\vect[v]-\vect[u]\in\sodv[\cf]
\iff\vect[u]\in\rf\group{\set{\vect,\vect[v]}}
\text{ for all $\vect,\vect[v]$ in $\vs$}.
\end{equation}
$\sodv[\cf]$ is a coherent set of desirable options if $\cf$ is a coherent choice function \citep[Proposition~12]{vancamp2017}.
Conversely \citep[Proposition~13]{vancamp2017}, if we start out with a coherent set of desirable options $\sodv$ then the set $\cset{\cf\in\allcohcfs}{\sodv[\cf]=\sodv}$ of all coherent choice functions whose binary choices are represented by $\sodv$, is non-empty, and its smallest, or least informative, element $\cf[\sodv]\coloneqq\inf\cset{\cf\in\allcohcfs}{\sodv[\cf]=\sodv}$ is given by:
\begin{equation}\label{eq:CD}
\cf[\sodv]\group{\os}
\coloneqq\cset{\vect\in\os}{\group{\forall\vect[v]\in\os}\vect[v]-\vect\notin\sodv}
=\cset{\vect\in\os}{\group{\forall\vect[v]\in\os}\vect\nprel\vect[v]}
\text{ for all $\os$ in $\cfdom$.}
\end{equation}
It selects all options from $\os$ that are undominated, or maximal, under the ordering $\prel[\sodv]$, or in other words, it is the corresponding choice function based on Walley--Sen maximality.
This $\cf[\sodv]$ is easy to characterise:

\begin{proposition}\label{prop: CD in terms of intersection}
Given any coherent set of desirable options $\sodv$, then
\begin{equation*}
0\in\cf[\sodv]\group{\set{0}\cup\os}
\iff
\sodv\cap\os=\emptyset
\text{ for all $\os$ in $\cfdom$.}
\end{equation*}
\end{proposition}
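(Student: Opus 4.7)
The plan is to unfold the definition of $\cf[\sodv]$ from Equation~\eqref{eq:CD} at the option set $\set{0}\cup\os$ with the distinguished option $\vect=0$, and then use the coherence of $\sodv$ to remove a redundant conjunct.

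More concretely, applying Equation~\eqref{eq:CD} with $\vect=0$ gives
\begin{equation*}
0\in\cf[\sodv]\group{\set{0}\cup\os}
\iff
\group{\forall\vect[v]\in\set{0}\cup\os}\vect[v]-0\notin\sodv
\iff
\group{\forall\vect[v]\in\set{0}\cup\os}\vect[v]\notin\sodv.
\end{equation*}
Splitting the universal quantifier over the two pieces of $\set{0}\cup\os$, this is equivalent to the conjunction of $0\notin\sodv$ and $\sodv\cap\os=\emptyset$. Since $\sodv$ is assumed coherent, Axiom~\ref{coh sodv 1: 0 is not desirable} guarantees that $0\notin\sodv$ holds automatically, so the conjunction collapses to the single condition $\sodv\cap\os=\emptyset$, which is the desired equivalence.

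The argument is essentially a one-line unfolding of the defining formula, so there is no genuine obstacle; the only subtle point is remembering to invoke Axiom~\ref{coh sodv 1: 0 is not desirable} to discharge the tautological side condition $0\notin\sodv$. No appeal to Axioms~\ref{coh sodv 2: positive vectors are desirable}--\ref{coh sodv 4: sum is desirable} is needed, and the statement in fact holds for any set of desirable options that merely satisfies~\ref{coh sodv 1: 0 is not desirable}.
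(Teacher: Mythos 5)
Your proposal is correct and matches the paper's own proof essentially verbatim: both unfold Equation~\eqref{eq:CD} at $\vect=0$, observe the condition becomes $\group{\set{0}\cup\os}\cap\sodv=\emptyset$, and discharge the conjunct $0\notin\sodv$ via Axiom~\ref{coh sodv 1: 0 is not desirable}. Nothing further is needed.
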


\begin{proof}
By Equation~\eqref{eq:CD}, $0\in\cf[\sodv]\group{\set{0}\cup\os}\iff\group{\forall\vect[v]\in\set{0}\cup\os}\vect[v]\notin\sodv\iff\group{\set{0}\cup\os}\cap\sodv=\emptyset$, which is equivalent to $\os\cap\sodv=\emptyset$, because $0\notin\sodv$ for any coherent $\sodv$.
\end{proof}
\noindent
Although $\cf[\sodv]$ is coherent when $\sodv$ is, it does not necessarily satisfy the additional Axiom~\ref{coh cf 5: convexity}, as the following counterexample shows.

\begin{example}\label{ex:CD not convex}
Consider the two-dimensional vector space $\vs=\reals^2$. 
We provide it with the component-wise vector ordering $\vo$, and consider the vacuous set of desirable options $\sodv=\cset{\vect\in\vs}{\vect\svoi0}=\vspos$, which is coherent.
By Proposition~\ref{prop: CD in terms of intersection}, $0\in\cf[\sodv]\group{\set{0}\cup\os}\iff\os\cap\vspos=\emptyset$ for all $\os$ in $\cfdom$. 
To show that $\cf[\sodv]$ does not satisfy Axiom~\ref{coh cf 5: convexity}, consider $\os=\set{0,\vect,\vect[v]}$, where $\vect\coloneqq(-1,2)$ and $\vect[v]\coloneqq(2,-1)$.
We find that $0\in\cf[\sodv]\group{\os}$ because $\set{\vect,\vect[v]}\cap\vspos=\emptyset$, since $\vect\nsvoi0$ and $\vect[v]\nsvoi0$.

However, for the option set $\os[1]=\os\cup\set{\frac{\vect+\vect[v]}{2}}\subseteq\ch\group{\os}$, we find that $\frac{\vect+\vect[v]}{2}=(\nicefrac{1}{2},\nicefrac{1}{2})\svoi0$ and therefore $0\notin\cf[\sodv]\group{\os[1]}$, meaning that Axiom~\ref{coh cf 5: convexity} is not satisfied.
\hfill$\lozenge$
\end{example}
\noindent For the specific coherent set of desirable options $\sodv$ considered in Example~\ref{ex:CD not convex}, the corresponding choice function $\cf[\sodv]$ fails to satisfy \ref{coh cf 5: convexity}. 
However, there are other sets of desirable options $\sodv$ for which $\cf[\sodv]$ does satisfy the convexity axiom. 
They are identified in the next proposition.

\begin{proposition}\label{prop:CD coherent iff D lexicographic} 
Consider any coherent set of desirable options $\sodv$, then the corresponding coherent choice function $\cf[\sodv]$ satisfies Axiom~\ref{coh cf 5: convexity} if and only if $\sodv^c$ is a convex cone, or in other words, if and only if\/ $\Posi\group{\sodv^c}=\sodv^c$, or equivalently, $\Posi\group{\sodv^c}\cap\sodv=\emptyset$.
\end{proposition}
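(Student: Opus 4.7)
The plan is to reduce the three right-hand formulations to the single condition $\Posi\group{\sodv^c}\cap\sodv=\emptyset$ and then prove the equivalence with Axiom~\ref{coh cf 5: convexity} in both directions. The reduction is easy: $\sodv^c\subseteq\Posi\group{\sodv^c}$ always holds (take a single term with coefficient $1$), so $\Posi\group{\sodv^c}=\sodv^c$ is equivalent to $\Posi\group{\sodv^c}\subseteq\sodv^c$, which in view of the partition $\vs=\sodv\cup\sodv^c$ is the same as $\Posi\group{\sodv^c}\cap\sodv=\emptyset$. Moreover, $\sodv^c$ is a convex cone precisely when it is closed under positive finite linear combinations, i.e.~when $\Posi\group{\sodv^c}=\sodv^c$.

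For sufficiency, I would assume $\Posi\group{\sodv^c}\cap\sodv=\emptyset$ and verify Axiom~\ref{coh cf 5: convexity} for $\cf[\sodv]$ directly from Equation~\eqref{eq:CD}. Given $\os\subseteq\os[1]\subseteq\ch\group{\os}$ in $\cfdom$ and $\vect[u]\in\cf[\sodv]\group{\os}$, I pick any $\vect[v]\in\os[1]$, expand it as a convex combination $\vect[v]=\sum_{i=1}^n\alpha_i\vect[u_i]$ with $\vect[u_i]\in\os$, and use $\sum_i\alpha_i=1$ to rewrite $\vect[v]-\vect[u]=\sum_{i=1}^n\alpha_i\group{\vect[u_i]-\vect[u]}$. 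By definition of $\cf[\sodv]$, each $\vect[u_i]-\vect[u]$ lies in $\sodv^c$; dropping the indices with $\alpha_i=0$ (at least one positive index remains since $\sum_i\alpha_i=1$) exhibits $\vect[v]-\vect[u]$ as an element of $\Posi\group{\sodv^c}$, which by assumption does not meet $\sodv$. Hence $\vect[v]-\vect[u]\notin\sodv$ for every $\vect[v]\in\os[1]$, so $\vect[u]\in\cf[\sodv]\group{\os[1]}$ by Equation~\eqref{eq:CD}.

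For necessity, I would argue contrapositively. Suppose $\Posi\group{\sodv^c}\cap\sodv\neq\emptyset$, so there exist $\vect[u_1],\dots,\vect[u_n]\in\sodv^c$ and positive scalars $\lambda_1,\dots,\lambda_n$ with $\sum_i\lambda_i\vect[u_i]\in\sodv$. Rescaling by $1/\sum_i\lambda_i$ and invoking~\ref{coh sodv 3: positive scaling is desirable}, I obtain a convex combination $\vect[w]=\sum_i\alpha_i\vect[u_i]\in\sodv$. Now $\vect[w]\neq0$ by~\ref{coh sodv 1: 0 is not desirable}, and $\vect[w]\notin\set{\vect[u_1],\dots,\vect[u_n]}$ because the $\vect[u_i]$ all lie in $\sodv^c$. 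Setting $\os\coloneqq\set{0,\vect[u_1],\dots,\vect[u_n]}$ and $\os[1]\coloneqq\os\cup\set{\vect[w]}$, Proposition~\ref{prop: CD in terms of intersection} gives $0\in\cf[\sodv]\group{\os}$ (since $\set{\vect[u_1],\dots,\vect[u_n]}\cap\sodv=\emptyset$), whereas $0\notin\cf[\sodv]\group{\os[1]}$ (since $\vect[w]\in\os[1]\cap\sodv$). As $\os\subseteq\os[1]\subseteq\ch\group{\os}$, this violates Axiom~\ref{coh cf 5: convexity}.

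The only real obstacle is bookkeeping in the necessity step: I must ensure that the counterexample $\vect[w]$ is genuinely a new element of $\os[1]$ (so that Axiom~\ref{coh cf 5: convexity} is actually tested on a strict enlargement) and that the rescaling from a generic positive combination to a convex combination preserves membership in $\sodv$. Both points are handled by~\ref{coh sodv 1: 0 is not desirable} and~\ref{coh sodv 3: positive scaling is desirable} respectively.
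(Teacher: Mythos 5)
Your proposal is correct and follows essentially the same route as the paper: for necessity you rescale the positive combination into a convex one and test convexity on $\os=\set{0,\vect[u_1],\dots,\vect[u_n]}$ versus $\os\cup\set{\vect[w]}$ via Proposition~\ref{prop: CD in terms of intersection}, exactly as the paper does. For sufficiency you argue directly from Equation~\eqref{eq:CD} by writing $\vect[v]-\vect[u]=\sum_i\alpha_i(\vect[u_i]-\vect[u])$ with each difference in $\sodv^c$, whereas the paper proves the contrapositive using Axiom~\ref{coh cf 4b: independence} and the inclusion $\os[1]-\set{\vect}\subseteq\Posi\group{\sodv^c}$; these are the same computation, so only the packaging differs.
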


\begin{proof}
\citet[Proposition~13]{vancamp2017} have already shown that $\cf[\sodv]$ is a coherent choice function.

For necessity, assume that $\Posi\group{\sodv^c}\neq\sodv^c$, or equivalently, that $\Posi\group{\sodv^c}\cap\sodv\neq\emptyset$.
Then there is some option $\vect$ in $\sodv$ such that $\vect\in\Posi\group{\sodv^c}$, meaning that there are $n$ in $\nats$, $\lambda_k$ in $\posreals$ and $\vect[u_k]$ in $\sodv^c$ such that $\vect=\sum_{k=1}^n\lambda_k\vect[u_k]$.
Let $\os\coloneqq\set{0,\vect[u_1],\dots,\vect[u_n]}$ and $\os[1]\coloneqq\os\cup\set{\vect}$.
Due to the coherence of $\sodv$ [more precisely Axiom~\ref{coh sodv 3: positive scaling is desirable}], we can rescale $\vect\in\sodv$ while keeping the $\vect[u_k]$ fixed, in such a way that we achieve that $\sum_{k=1}^n\lambda_k=1$, whence $\os\subseteq\os[1]\subseteq\ch\group{\os}$.
We find that $0\in\cf[\sodv]\group{\os}$ by Proposition~\ref{prop: CD in terms of intersection}, because $\os\cap\sodv=\emptyset$, but $0\notin\cf[\sodv]\group{\os[1]}$ because $\vect\in\sodv$, so $\os[1]\cap\sodv\neq\emptyset$.
This tells us that $\cf[\sodv]$ does not satisfy Axiom~\ref{coh cf 5: convexity}, because clearly $\cf[\sodv]\group{\os}\nsubseteq\cf[\sodv]\group{\os[1]}$.

For sufficiency, assume that $\cf[\sodv]$ does not satisfy Axiom~\ref{coh cf 5: convexity}.
Consider any $\os$ and $\os[1]$ in $\cfdom$ for which $\os\subseteq\os[1]\subseteq\ch\group{\os}$.
Then there is some $\vect$ in $\os$ such that $\vect\in\cf[\sodv]\group{\os}$ and $\vect\notin\cf[\sodv]\group{\os[1]}$.
Due to Axiom~\ref{coh cf 4b: independence}, we find that $0\in\cf[\sodv](\os-\set{\vect})$ and $0\notin\cf[\sodv]\group{\os[1]-\set{\vect}}$, or equivalently, by Proposition~\ref{prop: CD in terms of intersection}, that $\os-\set{\vect}\subseteq\sodv^c$ and $\os[1]-\set{\vect}\cap\sodv\neq\emptyset$.
But $\os[1]-\set{\vect}\subseteq\ch\group{\os}-\set{\vect}=\ch\group{\os-\set{\vect}}\subseteq\Posi\group{\os-\set{\vect}}\subseteq\Posi\group{\sodv^c}$, so $\Posi\group{\sodv^c}\cap\sodv\neq\emptyset$.
\end{proof}
\noindent
This proposition seems to indicate that there is something special about coherent sets of desirable options whose complement is a convex cone too.
We give them a special name that will be motivated and explained in the next section.

\begin{definition}
A coherent set of desirable options $\sodv$ is called \emph{lexicographic} if
\begin{equation*}
\Posi(\sodv^c)=\sodv^c
\text{, or, equivalently, if }
\Posi(\sodv^c)\cap\sodv=\emptyset.
\end{equation*}
We collect all the lexicographic coherent sets of desirable options in~$\cohlexisodvs$.
\end{definition}

Another important subclass $\maxcohsodvs$ of coherent sets of desirable options collects all the maximally informative, or \emph{maximal}, ones:
\begin{equation*}
\maxcohsodvs
\coloneqq\cset{\sodv\in\allcohsodvs}
{\group{\forall\sodv'\in\allcohsodvs}\sodv\subseteq\sodv'\then\sodv=\sodv'}.
\end{equation*}
The sets of desirable options in $\maxcohsodvs$ are the undominated elements of the complete infimum-semilattice $(\allcohsodvs,\subseteq)$.
\citet{couso2011} have proved the following elegant and useful characterisation of these maximal elements:

\begin{proposition}\label{prop: maximal sets of desirable options}
Given any coherent set of desirable options $\sodv$ and any non-zero option $\vect\notin\sodv$, then $\Posi\group{\sodv\cup\set{-\vect}}$ is a coherent set of desirable options.
As a consequence, a coherent set of desirable options $\sodv$ is maximal if and only if
\begin{equation*}
\group{\forall\vect\in\vs\setminus\set{0}}\group{\vect\in\sodv\text{ or }-\vect\in\sodv}.
\end{equation*}
\end{proposition}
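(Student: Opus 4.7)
The plan is to verify the two parts in order: first establish that $\sodv' \coloneqq \Posi\group{\sodv\cup\set{-\vect}}$ satisfies all four axioms of Definition~\ref{def: rationality axioms for sets of desirable options}, and then use this construction to derive the maximality characterisation.

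For the first part, Axioms~\ref{coh sodv 3: positive scaling is desirable} and~\ref{coh sodv 4: sum is desirable} come for free since $\Posi$ of any set is a convex cone by construction. Axiom~\ref{coh sodv 2: positive vectors are desirable} is also immediate, since $\sodv\subseteq\sodv'$ (every $\vect[v]\in\sodv$ equals $1\cdot\vect[v]\in\Posi\group{\sodv\cup\set{-\vect}}$) and $\vspos\subseteq\sodv$ by the coherence of $\sodv$. The only axiom requiring real work is~\ref{coh sodv 1: 0 is not desirable}: I would assume \emph{ex absurdo} that $0\in\sodv'$, write $0=\sum_{k=1}^n\lambda_k\vect[u_k]$ with $\lambda_k\in\posreals$ and $\vect[u_k]\in\sodv\cup\set{-\vect}$, and split the sum into its $\sodv$-part and the $-\vect$-part with total coefficient $\mu\geq0$. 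If $\mu=0$, then $0$ is a positive combination of elements of $\sodv$, which by~\ref{coh sodv 3: positive scaling is desirable} and~\ref{coh sodv 4: sum is desirable} would place $0$ in $\sodv$, contradicting~\ref{coh sodv 1: 0 is not desirable} for $\sodv$. If $\mu>0$, I rearrange the equality as $\mu\vect=\sum'\lambda_k\vect[u_k]$ with the right-hand side a (possibly empty) positive combination in $\sodv$; emptiness would force $\vect=0$, contradicting the hypothesis, while non-emptiness would place $\mu\vect$ in $\sodv$ and hence, by~\ref{coh sodv 3: positive scaling is desirable}, $\vect\in\sodv$, again contradicting the hypothesis.

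For the second part, sufficiency is the easier direction. I would assume the disjunction $\group{\forall\vect\in\vs\setminus\set{0}}\group{\vect\in\sodv\text{ or }-\vect\in\sodv}$ and suppose, for contradiction, that there is some coherent $\sodv'$ strictly containing $\sodv$. Pick any $\vect\in\sodv'\setminus\sodv$; then $\vect\neq0$ by~\ref{coh sodv 1: 0 is not desirable} for $\sodv'$, so the hypothesis forces $-\vect\in\sodv\subseteq\sodv'$. Axiom~\ref{coh sodv 4: sum is desirable} applied inside $\sodv'$ then yields $0=\vect+(-\vect)\in\sodv'$, contradicting~\ref{coh sodv 1: 0 is not desirable} for $\sodv'$.

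Necessity is where part one does its work. Assuming $\sodv$ is maximal, I would take any $\vect\in\vs\setminus\set{0}$ with $\vect\notin\sodv$ and apply the first part to conclude that $\Posi\group{\sodv\cup\set{-\vect}}$ is a coherent set of desirable options containing $\sodv$. By maximality, this positive hull must equal $\sodv$ itself, and since $-\vect$ belongs to the hull, we get $-\vect\in\sodv$, as required. The main delicate point of the whole argument is the case analysis for~\ref{coh sodv 1: 0 is not desirable} in part one—everything else is essentially bookkeeping—but once that is done, the maximality characterisation falls out immediately.
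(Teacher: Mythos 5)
Your proof is correct. Note that the paper does not actually prove this proposition—it is quoted as a known result of \citet{couso2011}—but your argument is the standard one for it: the only non-trivial point is indeed Axiom~\ref{coh sodv 1: 0 is not desirable} for $\Posi\group{\sodv\cup\set{-\vect}}$, and your case split on the total coefficient $\mu$ of the $-\vect$-terms (using $\vect\neq0$ when the $\sodv$-part is empty and $\vect\notin\sodv$ when it is not) handles it correctly, after which both directions of the maximality characterisation follow exactly as you describe.
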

\noindent
\Citet{cooman2010} have proved that the set of all coherent sets of desirable options is \emph{dually atomic}, meaning that that any coherent set of desirable options $\sodv$ is the infimum of its non-empty set of dominating maximal coherent sets of desirable options:

\begin{proposition}\label{prop: sodvs are dually atomic}
For any coherent set of desirable options~$\sodv$, its set of dominating maximal coherent sets of desirable options $\maxcohsodvsdom{\sodv}\coloneq\cset{\hat{\sodv}\in\maxcohsodvs}{\sodv\subseteq\hat{\sodv}}$ is non-empty, and $\sodv=\bigcap\maxcohsodvsdom{\sodv}$.
\end{proposition}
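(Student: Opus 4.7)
The plan is to establish the statement in two stages: first, produce at least one maximal coherent extension of $\sodv$ by a Zorn-type argument; second, use Proposition~\ref{prop: maximal sets of desirable options} to construct, for every option outside $\sodv$, a maximal coherent extension that still avoids that option.

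For the first stage, I would consider the poset of all coherent sets of desirable options containing $\sodv$, ordered by inclusion; it is non-empty since $\sodv$ itself is in it. Given any chain $\mathcal{F}$ in this poset, I would verify that its union $\sodv^\star \coloneqq \bigcup \mathcal{F}$ is again coherent: axiom~\ref{coh sodv 1: 0 is not desirable} holds because $0$ is excluded from every member of the chain; \ref{coh sodv 2: positive vectors are desirable} is inherited from $\sodv$; and \ref{coh sodv 3: positive scaling is desirable} together with \ref{coh sodv 4: sum is desirable} each involves only one or two elements of $\sodv^\star$, which by linearity of the chain lie inside a common member of $\mathcal{F}$. Zorn's lemma then delivers a maximal element of this poset, and such an element is automatically maximal in the whole of $\allcohsodvs$ (any coherent extension of it also extends $\sodv$, hence is in the poset), so it belongs to $\maxcohsodvsdom{\sodv}$, proving non-emptiness.

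For the equality $\sodv = \bigcap \maxcohsodvsdom{\sodv}$, the inclusion $\sodv \subseteq \bigcap \maxcohsodvsdom{\sodv}$ is immediate from the definition of $\maxcohsodvsdom{\sodv}$. For the converse, fix any $\vect \in \vs \setminus \sodv$. If $\vect = 0$, every $\hat{\sodv} \in \maxcohsodvsdom{\sodv}$ excludes $\vect$ by axiom~\ref{coh sodv 1: 0 is not desirable}, and $\maxcohsodvsdom{\sodv}$ has just been shown to be non-empty. If $\vect \neq 0$, Proposition~\ref{prop: maximal sets of desirable options} guarantees that $\sodv' \coloneqq \Posi(\sodv \cup \set{-\vect})$ is a coherent extension of $\sodv$. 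Applying the Zorn argument from the first stage to $\sodv'$ produces a maximal coherent $\hat{\sodv}$ with $\sodv \subseteq \sodv' \subseteq \hat{\sodv}$, so $\hat{\sodv} \in \maxcohsodvsdom{\sodv}$. Since $-\vect \in \hat{\sodv}$, axioms~\ref{coh sodv 4: sum is desirable} and~\ref{coh sodv 1: 0 is not desirable} together forbid $\vect \in \hat{\sodv}$, for otherwise $\vect + (-\vect) = 0$ would be desirable. Hence $\vect \notin \bigcap \maxcohsodvsdom{\sodv}$, completing the converse inclusion.

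The only non-routine point is the verification that the union of a chain of coherent sets of desirable options is coherent; this works precisely because the four axioms in Definition~\ref{def: rationality axioms for sets of desirable options} are finitary, each involving at most two elements at a time, so the standard Zorn's lemma recipe goes through. Everything else amounts to bookkeeping, with Proposition~\ref{prop: maximal sets of desirable options} supplying the crucial constructive lever for peeling off any unwanted option.
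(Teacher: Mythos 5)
Your proof is correct. Note that the paper does not actually prove this proposition: it invokes it as a known result of de Cooman and Quaeghebeur, so there is no in-paper argument to compare against; your combination of a Zorn's-lemma extension step (with the routine check that the union of a chain of coherent sets satisfies Axioms~\ref{coh sodv 1: 0 is not desirable}--\ref{coh sodv 4: sum is desirable}) and the use of Proposition~\ref{prop: maximal sets of desirable options} to exclude any fixed $\vect\notin\sodv$ from some dominating maximal set is exactly the standard argument behind the cited result, and all the steps you give are sound.
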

\noindent
Any maximal coherent set of desirable options is also a lexicographic one: $\maxcohsodvs\subseteq\cohlexisodvs$.
To see this, consider a maximal $\sodv$ and arbitrary $n$ in $\nats$, $\vect[u_k]$ in $\sodv^c$ and $\lambda_k\in\posreals$ for $k\in\set{1,\dots,n}$.
Then since all $-\vect[u_k]\in\sodv\cup\set{0}$ by Proposition~\ref{prop: maximal sets of desirable options}, we infer that $-\sum_{k=1}^n\lambda_k\vect[u_k]\in\sodv\cup\set{0}$, because the coherent $\sodv$ is in particular a convex cone.
If $\sum_{k=1}^n\lambda_k\vect[u_k]=0$, then $\sum_{k=1}^n\lambda_k\vect[u_k]\in\sodv^c$ by Axiom~\ref{coh sodv 1: 0 is not desirable}.
If $\sum_{k=1}^n\lambda_k\vect[u_k]\neq0$, then $-\sum_{k=1}^n\lambda_k\vect[u_k]\in\sodv$, and coherence then guarantees that, here too, $\sum_{k=1}^n\lambda_k\vect[u_k]\in\sodv^c$. 
We conclude that $\sodv^c$ is indeed a convex cone.

\section{Lexicographic choice functions}\label{sec:lexicographic}
In this section, we embark on a more detailed study of lexicographic sets of desirable options, and amongst other things, explain where their name comes from.
We will restrict ourselves here to the special case where $\vs$ is the linear space $\gambleson$ of all gambles on a \emph{finite} possibility space $\ps$, provided with the component-wise order $\leq$ as its vector ordering.

We first show that the lower expectation functional associated with a lexicographic $\sodv$ is actually a linear prevision \Citep{walley1991,troffaes2013:lp}.

\begin{proposition}\label{prop:lexicographic D induces linear prevision}
For any $\sodv$ in $\cohlexisodvs$, the coherent lower prevision $\lp[\sodv]$ on $\gambleson$ defined by
\begin{equation*}
\lp[\sodv]\group{f}
\coloneqq\sup\cset{\mu\in\reals}{f-\mu\in\sodv}
\text{ for all $f$ in $\gambleson$}
\end{equation*}
is a \emph{linear prevision}: a real linear functional that is positive---so $\group{\forall f\geq0}\lp[\sodv](f)\geq0$---and normalised---meaning that $\lp[\sodv](1)=1$.
\end{proposition}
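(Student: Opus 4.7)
The plan is to start from the standard fact that, for any coherent set of desirable gambles $\sodv$, the functional $\lp[\sodv]$ is a coherent lower prevision on $\gambleson$: it is monotone, positively homogeneous, super-additive, and satisfies $\lp[\sodv](\mu)=\mu$ for every real constant $\mu$ (this uses only Axioms~\ref{coh sodv 1: 0 is not desirable}--\ref{coh sodv 4: sum is desirable} together with the finiteness of $\ps$, which ensures that $\lp[\sodv](f)$ is a finite real number for every $f$ in $\gambleson$). Granted coherence, positivity and normalisation of $\lp[\sodv]$ are immediate, so the only substantive item remaining is additivity. I would phrase this in the equivalent form $\lp[\sodv]=\up[\sodv]$, where
\begin{equation*}
\up[\sodv](f)\coloneqq-\lp[\sodv](-f)=\inf\cset{\nu\in\reals}{\nu-f\in\sodv}
\end{equation*}
is the conjugate upper prevision, since a coherent lower prevision is additive precisely when it is self-conjugate.

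The hard part will be establishing this self-conjugacy, and I intend to argue it by contradiction, pressing the lexicographic hypothesis into service. Suppose there were some $f$ in $\gambleson$ with $\lp[\sodv](f)<\up[\sodv](f)$, and pick $\mu_1,\mu_2$ in $\reals$ with $\lp[\sodv](f)<\mu_1<\mu_2<\up[\sodv](f)$. The supremum in the definition of $\lp[\sodv](f)$ then forces $f-\mu_1\notin\sodv$, so $f-\mu_1\in\sodv^c$, while the infimum in the definition of $\up[\sodv](f)$ forces $\mu_2-f\notin\sodv$, so $\mu_2-f\in\sodv^c$. Now the lexicographic assumption $\Posi(\sodv^c)=\sodv^c$ enters crucially: $\sodv^c$ is closed under positive finite linear combinations, so the sum $(f-\mu_1)+(\mu_2-f)=\mu_2-\mu_1$ belongs to $\sodv^c$. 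But this is a strictly positive constant gamble, hence an element of $\vspos\subseteq\sodv$ by Axiom~\ref{coh sodv 2: positive vectors are desirable}, contradicting $\sodv\cap\sodv^c=\emptyset$.

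Once self-conjugacy is in place, upgrading super-additivity to additivity is routine---self-conjugacy combined with super-additivity applied to both $f+g$ and $-f-g$ sandwiches $\lp[\sodv](f+g)$ between $\lp[\sodv](f)+\lp[\sodv](g)$ and itself---after which $\lp[\sodv]$ is a real linear, positive, normalised functional on $\gambleson$, i.e., a linear prevision. I do not foresee any other obstacles: all the essential mileage is extracted from the convex-cone structure of $\sodv^c$ supplied by lexicographicity, which lets two elements of $\sodv^c$ sum to a strictly positive constant and so produces the required contradiction.
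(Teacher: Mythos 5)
Your proposal is correct and rests on the same key idea as the paper's proof: both exploit the lexicographic hypothesis $\Posi(\sodv^c)=\sodv^c$ together with Axiom~\ref{coh sodv 2: positive vectors are desirable} by summing two non-desirable gambles to a strictly positive constant and deriving a contradiction (the paper does this with $f$ and $\epsilon-f$, you with $f-\mu_1$ and $\mu_2-f$, which is the same argument up to a translation). The only difference is in the bookkeeping afterwards: the paper invokes Proposition~6 of \citet{miranda2010} to pass from the resulting near-completeness property to linearity, whereas you carry out that step explicitly via self-conjugacy of the coherent lower prevision, which is a perfectly sound (and self-contained) way to finish.
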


\begin{proof}
Consider any $f$ in $\gambles$ and $\epsilon$ in $\posreals$, then we first prove that $f\in\sodv$ or $\epsilon-f\in\sodv$. 
Assume \emph{ex absurdo} that $f\notin\sodv$ and $\epsilon-f\notin\sodv$. 
Then, because by assumption $\Posi(\sodv^c)=\sodv^c$ is a convex cone, we also have that $f+\epsilon-f=\epsilon\notin\sodv$, which contradicts Axiom~\ref{coh sodv 2: positive vectors are desirable}.
Now, Proposition~6 by~\citet{miranda2010} guarantees that for any such $\sodv$, the corresponding functional $\lp[\sodv]$ is indeed a linear prevision.
\end{proof}

To get some feeling for what these lexicographic models represent, we first look at the special case of binary possibility spaces $\set{a,b}$, leading to a two-dimensional option space $\vs=\gambleson[\set{a,b}]$ provided with the point-wise order. 
It turns out that lexicographic sets of desirable options (gambles) are easy to characterise there, so we have a simple expression for $\cohlexisodvs$.

\begin{proposition}\label{prop:lexic-binary}
All lexicographic coherent sets of desirable gambles on the binary possibility space $\set{a,b}$ are given by (see also Figure~\ref{fig:lexicopgraphic:two:dimensions}):
\begin{equation*}
\cohlexisodvs
\coloneq
\cset{\sodv[\rho],\sodv[\rho]^a,\sodv[\rho]^b}{\rho\in(0,1)}
\cup
\set{\sodv[0],\sodv[1]}
=\cset{\sodv[\rho]}{\rho\in(0,1)}
\cup
\maxcohsodvs,
\end{equation*}
where
\begin{align*}
&\sodv[\rho]
\coloneq
\cset{\lambda\group{\rho-\indset{a}}}{\lambda\in\reals}+\vspos
=\Span\group{\set{\rho-\indset{a}}}+\vspos
&&\text{for all $\rho$ in $(0,1)$}\\
&\sodv[\rho]^a
\coloneq
\sodv[\rho]\cup\cset{\lambda\group{\rho-\indset{a}}}{\lambda\in\negreals}
=\sodv[\rho]\cup\Posi\group{\set{\indset{a}-\rho}}
&&\text{for all $\rho$ in $(0,1)$}\\
&\sodv[\rho]^b
\coloneq
\sodv[\rho]\cup\cset{\lambda\group{\rho-\indset{a}}}{\lambda\in\posreals}
=\sodv[\rho]\cup\Posi\group{\set{\rho-\indset{a}}}
&&\text{for all $\rho$ in $(0,1)$.}\\
&\sodv[0]\coloneq\set{f\in\vs: f(a)>0}\cup\vspos \\
&\sodv[1]\coloneq\set{f\in\vs: f(b)>0}\cup\vspos.
\end{align*}
\end{proposition}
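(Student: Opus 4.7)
The plan is to verify the easy inclusion by direct checking, and to establish the reverse inclusion by invoking Proposition~\ref{prop:lexicographic D induces linear prevision} and analysing the resulting linear prevision on a binary space. Throughout, I would identify each gamble $f$ on $\set{a, b}$ with its pair of values $(x, y) = (f(a), f(b)) \in \reals^2$, so that $\vs = \reals^2$ with the coordinate-wise order.

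For the easy inclusion, each candidate $\sodv[\rho]$ with $\rho \in (0, 1)$ coincides with the open half-plane $\cset{(x, y)}{\rho x + (1-\rho) y > 0}$; its complement is the closed half-plane on the other side of the line $\ell_\rho \coloneqq \cset{(x, y)}{\rho x + (1-\rho) y = 0}$, so both $\sodv[\rho]$ and $\sodv[\rho]^c$ are convex cones, and the axioms \ref{coh sodv 1: 0 is not desirable}--\ref{coh sodv 4: sum is desirable} are immediate. The sets $\sodv[\rho]^a$ and $\sodv[\rho]^b$ adjoin to this open half-plane one of the two boundary rays of $\ell_\rho$, while $\sodv[0]$ and $\sodv[1]$ arise when the dividing line is a coordinate axis; in each case one verifies directly that both the enlarged set and its complement remain convex cones.

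For the converse, let $\sodv$ be any lexicographic coherent set of desirable gambles. By Proposition~\ref{prop:lexicographic D induces linear prevision}, $P \coloneqq \lp[\sodv]$ is a linear prevision; on the binary space $\set{a, b}$ it is fully determined by $\rho \coloneqq P(\indset{a}) \in [0, 1]$, yielding $P(f) = \rho x + (1 - \rho) y$. A standard desirability argument then gives the sandwich
\begin{equation*}
\cset{f \in \vs}{P(f) > 0} \subseteq \sodv
\quad\text{and}\quad
\sodv \cap \cset{f \in \vs}{P(f) < 0} = \emptyset,
\end{equation*}
so the problem reduces to describing which nonzero gambles on the boundary line $\ell_\rho \coloneqq \cset{f \in \vs\setminus\set{0}}{P(f) = 0}$ belong to $\sodv$.

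A case analysis on $\rho$ then closes the argument. For $\rho \in (0, 1)$, the line $\ell_\rho$ consists of two open rays, one in the second and one in the fourth quadrant of $\reals^2$. Since $\sodv$ is a convex cone, each ray lies entirely inside or entirely outside $\sodv$; and since having both rays inside $\sodv$ would produce $v + (-v) = 0 \in \sodv$, contradicting \ref{coh sodv 1: 0 is not desirable}, at most one of the two rays can belong to $\sodv$. The three surviving configurations match exactly $\sodv[\rho]$ (neither ray), $\sodv[\rho]^a$ (the fourth-quadrant ray) and $\sodv[\rho]^b$ (the second-quadrant ray). When $\rho \in \set{0, 1}$, the line $\ell_\rho$ is a coordinate axis, and the inclusions $\vspos \subseteq \sodv$ and $-\vspos \subseteq \sodv^c$ force its positive half into $\sodv$ and its negative half into $\sodv^c$, yielding uniquely $\sodv[1]$ and $\sodv[0]$ respectively. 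The main obstacle is precisely this boundary analysis: ruling out that both rays of $\ell_\rho$ lie in $\sodv$, and cleanly matching each surviving configuration with one of the listed sets. The final equivalent description in terms of $\maxcohsodvs$ then drops out from Proposition~\ref{prop: maximal sets of desirable options} by noting that $\sodv[\rho]^a$, $\sodv[\rho]^b$, $\sodv[0]$ and $\sodv[1]$ are precisely those lexicographic sets for which $v \in \sodv$ or $-v \in \sodv$ holds for every nonzero $v$, whereas the $\sodv[\rho]$ for $\rho \in (0, 1)$ contain neither of $v, -v$ when $v \neq 0$ lies on $\ell_\rho$.
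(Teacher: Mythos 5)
Your proposal is correct and follows essentially the same route as the paper: both reduce the problem via Proposition~\ref{prop:lexicographic D induces linear prevision} to the fact that $\lp[\sodv]$ is a linear prevision on the binary space and then classify according to its mass function $(\rho,1-\rho)$. The only difference is one of detail: where the paper simply asserts which coherent sets of desirable gambles induce a given linear prevision (strictly desirable or maximal), you actually prove it through the sandwich $\cset{f}{P(f)>0}\subseteq\sodv\subseteq\cset{f}{P(f)\geq0}$ and the analysis of the two boundary rays, which fills in the uniqueness claims the paper leaves implicit.
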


\begin{proof}
We first observe that every set of desirable options in $\cohlexisodvs$ is coherent.
Indeed, for any $\rho$ in $(0,1)$, $\sodv[\rho]$ is the smallest coherent set of desirable gambles corresponding to the linear prevision $\E$, with $\mf\coloneq(\rho,1-\rho)$, while $\sodv[\rho]^a$, $\sodv[\rho]^b$ are maximal coherent sets of desirable gambles corresponding to the same linear prevision $\E$.
Finally, $\sodv[0]$ is the maximal (and only) coherent set of desirable gambles corresponding to $\E$ with $\mf\coloneq(0,1)$, while $\sodv[1]$ is the maximal (and only) coherent set of desirable gambles corresponding to $\E$ with $\mf\coloneq(1,0)$.

We now prove that we recover all lexicographic coherent sets of desirable gambles in this way.
Consider any lexicographic coherent set of desirable gambles $\sodv^*$.
Then $\lp[\sodv^*]$ is a linear prevision, by Proposition~\ref{prop:lexicographic D induces linear prevision}, so $\lp[\sodv^*]$ is characterised (i) by the mass function $(1,0)$, (ii) by the mass function $(0,1)$, or (iii) by the mass function $(\rho^*,1-\rho^*)$ for some $\rho^*$ in $(0,1)$.
If (i), the only coherent set of desirable gambles that induces the linear prevision with mass function $(1,0)$ is $\sodv[1]\in\cohlexisodvs$.
If (ii), the only coherent set of desirable gambles that induces the linear prevision with mass function $(0,1)$ is $\sodv[0]\in\cohlexisodvs$.
If (iii), there are only three coherent sets of desirable gambles that induce the linear prevision with mass function $(\rho^*,1-\rho^*)$: $\sodv[\rho^*]$, $\sodv[\rho^*]^a$ and $\sodv[\rho^*]^b$, and all are elements of $\cohlexisodvs$.
\end{proof}
\noindent
In the language of sets of desirable gambles \citep[see for instance][]{quaeghebeur2012:itip}, this means that in the binary case lexicographic sets of desirable gambles are either \emph{maximal} or \emph{strictly desirable} with respect to a linear prevision.

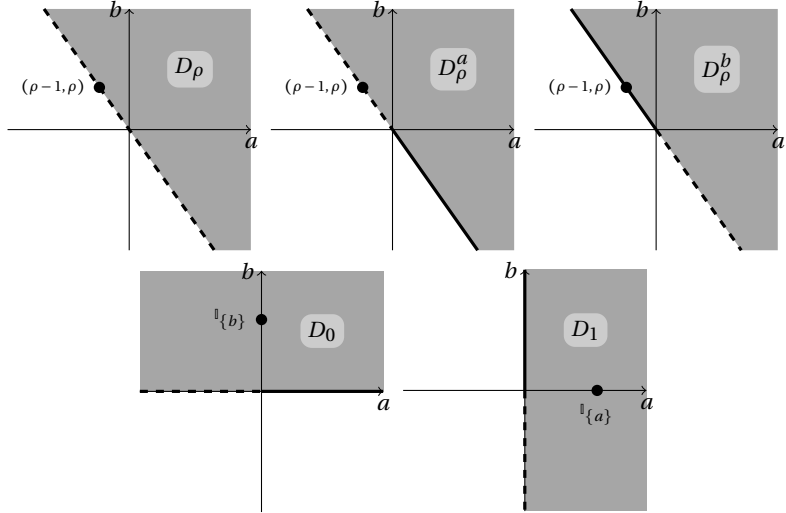
\begin{figure}[ht]
\centering
\begin{tikzpicture}[scale=0.8]\small
\draw[fill=gray,draw=none,opacity=0.7] (-1.4,2) -- (2,2) -- (2,-2) -- (1.4,-2) -- cycle;
\draw[->] (-2,0) -- (2,0) node[below] {$a$};
\draw[->] (0,-2) -- (0,2) node[left] {$b$};
\coordinate[circle,inner sep=1.5pt,fill=black,label={left:\tiny$(\rho-1,\rho)$}] (l) at (-0.49,0.7);
\draw[very thick,dashed] (-1.4,2) -- (1.4,-2);
\node[rounded corners=0.15cm,fill=gray!40!white] at (1,1) {$\sodv[\rho]$};
\end{tikzpicture}
\begin{tikzpicture}[scale=0.8]\small
\draw[fill=gray,draw=none,opacity=0.7] (-1.4,2) -- (2,2) -- (2,-2) -- (1.4,-2) -- cycle;
\draw[->] (-2,0) -- (2,0) node[below] {$a$};
\draw[->] (0,-2) -- (0,2) node[left] {$b$};
\coordinate[circle,inner sep=1.5pt,fill=black,label={left:\tiny$(\rho-1,\rho)$}] (l) at (-0.49,0.7);
\draw[very thick,dashed] (-1.4,2) -- (0,0);
\draw[very thick] (0,0) -- (1.4,-2);
\node[rounded corners=0.15cm,fill=gray!40!white] at (1,1) {$\sodv[\rho]^a$};
\end{tikzpicture}
\begin{tikzpicture}[scale=0.8]\small
\draw[fill=gray,draw=none,opacity=0.7] (-1.4,2) -- (2,2) -- (2,-2) -- (1.4,-2) -- cycle;
\draw[->] (-2,0) -- (2,0) node[below] {$a$};
\draw[->] (0,-2) -- (0,2) node[left] {$b$};
\coordinate[circle,inner sep=1.5pt,fill=black,label={left:\tiny$(\rho-1,\rho)$}] (l) at (-0.49,0.7);
\draw[very thick] (-1.4,2) -- (0,0);
\draw[very thick,dashed] (0,0) -- (1.4,-2);
\node[rounded corners=0.15cm,fill=gray!40!white] at (1,1) {$\sodv[\rho]^b$};
\end{tikzpicture}
\begin{tikzpicture}[scale=0.8]\small
\draw[fill=gray,draw=none,opacity=0.7] (-2,0) -- (-2,2) -- (2,2) -- (2,0) -- cycle;
\draw[->] (-2,0) -- (2,0) node[below] {$a$};
\draw[->] (0,-2) -- (0,2) node[left] {$b$};
\coordinate[circle,inner sep=1.5pt,fill=black,label={left:\tiny$\ind[\set{b}]$}] (l) at (0,1.19);
\draw[very thick,dashed] (-2,0) -- (0,0);
\draw[very thick] (0,0) -- (2,0);
\node[rounded corners=0.15cm,fill=gray!40!white] at (1,1) {$\sodv[0]$};
\end{tikzpicture}
\begin{tikzpicture}[scale=0.8]\small
\draw[fill=gray,draw=none,opacity=0.7] (0,2) -- (2,2) -- (2,-2) -- (0,-2) -- cycle;
\draw[->] (-2,0) -- (2,0) node[below] {$a$};
\draw[->] (0,-2) -- (0,2) node[left] {$b$};
\coordinate[circle,inner sep=1.5pt,fill=black,label={below:\tiny$\ind[\set{a}]$}] (l) at (1.19,0);
\draw[very thick] (0,0) -- (0,2);
\draw[very thick,dashed] (0,0) -- (0,-2);
\node[rounded corners=0.15cm,fill=gray!40!white] at (1,1) {$\sodv[1]$};
\end{tikzpicture}
\caption{The lexicographic coherent sets of desirable gambles on the binary possibility space $\set{a,b}$, with $\rho\in\group{0,1}$.}
\label{fig:lexicopgraphic:two:dimensions}
\end{figure}

We now turn to the more general finite-dimensional case.
Recall that a \emph{lexicographic order} $\lo$ with $\ell\in\nats$ \emph{layers} on a vector space $\vs$ of finite dimension $n$ is defined by
\begin{equation}\label{eq:lo-def}
\vect\lo\vect[v]
\iff
\group{\exists k\in\set{1,\dots,\ell}}
\group{\vect_k<\vect[v]_k
\text{ and }
\group{\forall j\in\set{1,\dots,k-1}}
\vect_j=\vect[v]_j},
\end{equation}
and denote, as usual, its reflexive version $\rlo$ as $\vect\rlo\vect[v]\iff\group{\vect\lo\vect[v]\text{ or }\vect=\vect[v]}$ for any two vectors $\vect=(\vect_1,\dots,\vect_n)$ and $\vect[v]=(\vect[v]_1,\dots,\vect[v]_n)$ in $\vs$.
A \emph{lexicographic probability system} is an $\ell$-tuple $\mf\coloneqq(\mf[1],\dots,\mf[\ell])$ of probability mass functions on a possibility space $\ps$ of cardinality $n$.
We associate with this tuple $\mf$ an expectation operator $\E[\mf]\coloneqq(\E[{\mf[1]}],\dots,\E[{\mf[\ell]}])$, and a (strict) preference relation~$\slo$ on~$\gambleson$, defined by:
\begin{equation}\label{eq:order-mfs}
f\slo g
\iff\E[\mf]\group{f}\lo\E[\mf]\group{g},
\text{ for all $f,g$ in $\gambleson$.}
\end{equation}
We refer to work by~\citet{blume1991}, \citet{fishburn1982} and \citet{seidenfeld1990} for more details on generic lexicographic probability systems.
The connection between lexicographic probability systems and sets of desirable gambles has also been studied by \citet{cozman2015}, and the connection with full conditional measures by \citet{halpern2010} and \citet{hammond1994}. 
Below, we first recall a number of relevant basic properties of lexicographic orders in Propositions~\ref{prop:lexicographic is strict weak order} and~\ref{prop:vector order}.
We then provide a characterisation of lexicographic sets of desirable gambles in terms of lexicographic orders in Theorem~\ref{thm:lexico sodvs}.

Remark that the reflexive version $\rslo$ of $\slo$---defined by $f\rslo g\iff\E\group{f}\rlo\E\group{g}$ for all $f$ and $g$ in $\gambleson$---is a total order on $\gambleson$ \citep{blume1991}.

In what follows, we will restrict our attention to lexicographic probability systems $\mf$ that satisfy the following condition:
\begin{equation}\label{eq:null event lps}
\group{\forall\rval\in\ps}
\group{\exists k\in\set{1,\dots,\ell}}
\mf[k]\group{\rval}>0.
\end{equation}
This condition requires that there should be no possible outcome in $\ps$ that has zero probability in every layer.
It is closely related to the notion of a \emph{Savage-null event} \citep[Section~2.7]{savage1972}:
\begin{definition}\label{def:Savage-null events}
An event $\ev\subseteq\ps$ is called \emph{Savage-null}\/ if $\group{\forall f,g\in\gambleson}\ind f\rslo\ind g$.
The event $\emptyset$ is always Savage-null, and is called the \emph{trivial Savage-null event}.
\end{definition}

An important feature of preference relations $\slo$ based on lexicographic probability systems is the \emph{incomparability relation} $\ilo$, defined by: $f\ilo g$ if and only if $f\nslo g$ and $g\nslo f$ for all $f$ and $g$ in $\gambleson$. Since $\rslo$ is a total order, it follows that 
\begin{equation}\label{eq:indif-lexic}
f\ilo g \iff \E\group{f}=\E\group{g}.
\end{equation}
Finally, it also follows that
\begin{equation}\label{eq:non-dom-lexic}
{f}\nslo{g}
\iff
g\slo f\text{ or }g\ilo f
\iff
\E\group{g}\rlo\E\group{f}.
\end{equation}

\begin{proposition}\label{prop:Savage-null}
Consider any lexicographic probability system $\mf=(\mf[1],\dots,\mf[\ell])$.
Then Condition~\eqref{eq:null event lps} holds if and only if there are no non-trivial Savage-null events.
\end{proposition}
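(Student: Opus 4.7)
The plan is to prove both directions by contraposition, using the characterisation~\eqref{eq:indif-lexic} of $\ilo$ in terms of equality of the layered expectation vectors.

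For the ``only if'' direction, I would assume that some event $\ev\subseteq\ps$ is Savage-null and non-trivial, so there is some $\rval[0]\in\ev$. By the defining condition $\ind[\ev]f\rslo\ind[\ev]g$ for all $f,g\in\gambleson$, swapping $f$ and $g$ also yields $\ind[\ev]g\rslo\ind[\ev]f$, whence $\ind[\ev]f\ilo\ind[\ev]g$, so by~\eqref{eq:indif-lexic} we get $\E[\mf]\group{\ind[\ev]f}=\E[\mf]\group{\ind[\ev]g}$, i.e.\ $\sum_{\rval\in\ev}\mf[k]\group{\rval}f\group{\rval}=\sum_{\rval\in\ev}\mf[k]\group{\rval}g\group{\rval}$ for every $k\in\set{1,\dots,\ell}$ and every pair $f,g$. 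Specialising to $f=\ind[\set{\rval[0]}]$ and $g=0$ then forces $\mf[k]\group{\rval[0]}=0$ for all $k$, contradicting~\eqref{eq:null event lps}.

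For the ``if'' direction, I would again argue by contraposition: assume~\eqref{eq:null event lps} fails, so there exists some $\rval[0]\in\ps$ with $\mf[k]\group{\rval[0]}=0$ for all $k\in\set{1,\dots,\ell}$. Then for every $f,g\in\gambleson$ and every $k$, $\E[{\mf[k]}]\group{\ind[\set{\rval[0]}]f}=\mf[k]\group{\rval[0]}f\group{\rval[0]}=0=\E[{\mf[k]}]\group{\ind[\set{\rval[0]}]g}$, so the layered expectation vectors coincide and~\eqref{eq:indif-lexic} gives $\ind[\set{\rval[0]}]f\ilo\ind[\set{\rval[0]}]g$, which in particular implies $\ind[\set{\rval[0]}]f\rslo\ind[\set{\rval[0]}]g$. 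Hence $\ev\coloneqq\set{\rval[0]}$ is a non-trivial Savage-null event, completing the contraposition.

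There is essentially no obstacle here: the argument is a direct unfolding of the definitions, once one notices that the apparent asymmetry in the definition of Savage-nullness collapses (by swapping $f$ and $g$) into full indifference of all gambles supported on $\ev$, which via~\eqref{eq:indif-lexic} is exactly the vanishing of $\mf[k]$ on $\ev$ for every $k$. The only mild care needed is to pick the singleton $\ev=\set{\rval[0]}$ as the witness in the reverse direction, to ensure that the Savage-null event produced is non-trivial.
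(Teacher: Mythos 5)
Your proof is correct and takes essentially the same route as the paper: the converse direction is identical (an outcome with zero mass in every layer makes the corresponding singleton a non-trivial Savage-null event), and your contrapositive handling of the other direction—using the symmetry of the Savage-null condition together with the total-order characterisation~\eqref{eq:indif-lexic} and specialising to $f=\indset{\rval^*}$, $g=0$—is just a repackaging of the paper's direct argument, which instead exhibits the strict preference $0\slo\ind$ via Condition~\eqref{eq:null event lps} and monotonicity. No gaps: the step from mutual $\rslo$ to $\ilo$ is justified because $\rlo$ is antisymmetric, so mutual dominance of the expectation vectors forces their equality.
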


\begin{proof}
For the direct implication, consider any lexicographic probability system $\mf$ that satisfies Condition~\eqref{eq:null event lps}, and consider any non-empty event $\ev\subseteq\ps$.
Consider any $\rval$ in $\ev$, then $\ind\geq\indset{\rval}$ so $\E[{\mf[k]}]\group{\ind}\geq\E[{\mf[k]}]\group{\indset{\rval}}$ for every $k\in\set{1,\dots,\ell}$.
Also, $\E\group{0}\lo\E\group{\indset{\rval}}$ by Condition~\eqref{eq:null event lps}, so $0\ind\slo1\ind$ whence $1\ind\nrslo0\ind$ and hence, by Definition~\ref{def:Savage-null events}, $\ev$ is indeed no Savage-null event.


For the converse implication, consider any lexicographic probability system $\mf$ and assume that Condition~\eqref{eq:null event lps} does not hold.
Then there is some $\rval^*$ in $\ps$ such that $\mf[k]\group{\rval^*}=0$ for all $k$ in $\set{1,\dots,\ell}$, and therefore $\E[{\mf[k]}]\group{f\indset{\rval^*}}=0=\E[{\mf[k]}]\group{g\indset{\rval^*}}$ for all $f$ and $g$ in $\gambleson$ and $k$ in $\set{1,\dots,\ell}$, so $\E\group{f\indset{\rval^*}}=\E\group{g\indset{\rval^*}}$ for all $f$ and $g$ in $\gambleson$.
This implies that $f\indset{\rval^*}\rslo g\indset{\rval^*}$ for all $f$ and $g$ in $\gambleson$, so indeed there is a non-trivial Savage-null event $\set{\rval^*}$.
\end{proof}

\begin{proposition}\label{prop:lexicographic is strict weak order}
Consider any lexicographic probability system $\mf$ with $\ell$ layers.
Then $\slo$ is a \emph{strict weak order}, meaning that $\slo$ is irreflexive, and both $\slo$ and\/ $\ilo$ are transitive.
As a consequence, the relation $\nslo$ is transitive as well.
\end{proposition}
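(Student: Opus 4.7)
The plan is to reduce everything to the corresponding properties of the lexicographic order $\lo$ on $\reals^\ell$: this order is well-known to be a strict total order, and its reflexive version $\rlo$ is a total order. Once this is granted, $\slo$, $\ilo$ and $\nslo$ inherit the desired structural properties through the expectation map $\E[\mf]$.

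First I would dispatch \emph{irreflexivity} of $\slo$. If $f\slo f$, then by Equation~\eqref{eq:order-mfs} we would have $\E[\mf]\group{f}\lo\E[\mf]\group{f}$, which contradicts Equation~\eqref{eq:lo-def}, because that definition requires a coordinate in which strict inequality holds. Hence $\slo$ is irreflexive.

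Next I would show \emph{transitivity} of $\slo$. Given $f\slo g$ and $g\slo h$, by Equation~\eqref{eq:order-mfs} we have $\E[\mf]\group{f}\lo\E[\mf]\group{g}\lo\E[\mf]\group{h}$. The transitivity of $\lo$ on $\reals^\ell$ is a routine consequence of Equation~\eqref{eq:lo-def}: if $k$ is the first coordinate where $\E[\mf]\group{f}$ and $\E[\mf]\group{g}$ differ and $k'$ is the first where $\E[\mf]\group{g}$ and $\E[\mf]\group{h}$ differ, then at $\min\set{k,k'}$ the vectors $\E[\mf]\group{f}$ and $\E[\mf]\group{h}$ agree on earlier coordinates and strictly increase, so $\E[\mf]\group{f}\lo\E[\mf]\group{h}$ and therefore $f\slo h$. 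Transitivity of $\ilo$ is then immediate from Equation~\eqref{eq:indif-lexic}: $f\ilo g$ and $g\ilo h$ give $\E[\mf]\group{f}=\E[\mf]\group{g}=\E[\mf]\group{h}$, so $f\ilo h$.

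Finally, for the stated consequence I would use Equation~\eqref{eq:non-dom-lexic}. If $f\nslo g$ and $g\nslo h$, then $\E[\mf]\group{g}\rlo\E[\mf]\group{f}$ and $\E[\mf]\group{h}\rlo\E[\mf]\group{g}$; since $\rlo$ is a total order on $\reals^\ell$ and hence transitive, $\E[\mf]\group{h}\rlo\E[\mf]\group{f}$, so $f\nslo h$ by Equation~\eqref{eq:non-dom-lexic}. The only real work is in verifying the transitivity of $\lo$ (and $\rlo$) on $\reals^\ell$, which is elementary; everything else is a direct translation through $\E[\mf]$. So no serious obstacle is expected.
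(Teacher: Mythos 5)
Your proposal is correct and follows essentially the same route as the paper, which likewise derives irreflexivity and transitivity of $\slo$, $\ilo$ and $\nslo$ from Equations~\eqref{eq:order-mfs}, \eqref{eq:indif-lexic} and~\eqref{eq:non-dom-lexic} together with the irreflexivity and transitivity of $\lo$ and the transitivity of $\rlo$; you merely spell out the elementary first-differing-coordinate verification that the paper leaves implicit.
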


\begin{proof}
This is a consequence of Equations~\eqref{eq:order-mfs},~\eqref{eq:indif-lexic} and~\eqref{eq:non-dom-lexic}, taking into account that $\lo$ and $\rlo$ are transitive, and that $\lo$ is irreflexive.
\end{proof}

We now link the lexicographic orderings $\slo$ with the preference relation $\prel[\sodv]$ based on desirability, given by Equation~\eqref{eq:order-desirs}.
We begin with an auxiliary result:\footnote{Except for the second statement, most of the items in this propositions are well-known \citep[Section~1.4.1]{quaeghebeur2012:itip}; we include a simple proof for completeness.}

\begin{proposition}\label{prop:vector order}
Consider any lexicographic probability system $\mf$ with $\ell$ layers, and consider any coherent set of desirable gambles $\sodv$.
Then $\slo$ and $\prel[\sodv]$ are (strict) vector orders compatible with $\svo$: they are irreflexive, transitive and
\begin{enumerate}[label=\upshape(\roman*),leftmargin=*]
\item\label{it:prop:vector order:1} $f\slo g\iff f+h\slo g+h\iff\lambda f\slo\lambda g$;
\item\label{it:prop:vector order:2} if there are no non-trivial Savage-null events, then $f<g\then f\slo g$;
\item\label{it:prop:vector order:3} $f\prel[\sodv]g\iff f+h\prel[\sodv]g+h\iff\lambda f\prel[\sodv]\lambda g$;
\item\label{it:prop:vector order:4} $f<g\then f\prel[\sodv]g$,
\end{enumerate}
for all $f$, $g$ and $h$ in $\gambleson$ and $\lambda$ in $\posreals$.
\end{proposition}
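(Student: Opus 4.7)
The proof naturally splits along the two relations $\slo$ and $\prel[\sodv]$, and most parts reduce to routine applications of linearity, the definition of the lexicographic order, and the coherence axioms. I would begin by disposing of the meta-claims: irreflexivity and transitivity of $\slo$ have already been established in Proposition~\ref{prop:lexicographic is strict weak order}, while irreflexivity of $\prel[\sodv]$ is Axiom~\ref{coh sodv 1: 0 is not desirable} applied via Equation~\eqref{eq:order-desirs} (since $f\prel[\sodv]f$ would force $0\in\sodv$), and transitivity follows from Axiom~\ref{coh sodv 4: sum is desirable} because $(g-f),(h-g)\in\sodv$ implies $(h-f)\in\sodv$.

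For item~\ref{it:prop:vector order:1}, the strategy is to reduce it to the analogous vector-order property of $\lo$ on $\reals^\ell$. Each $\E[{\mf[k]}]$ is linear, so $\E[\mf]\group{f+h}=\E[\mf]\group{f}+\E[\mf]\group{h}$ componentwise and $\E[\mf]\group{\lambda f}=\lambda\E[\mf]\group{f}$; plugging into Equation~\eqref{eq:order-mfs} and using that translation by any vector and scaling by a positive scalar preserve and reflect the layered comparisons in Equation~\eqref{eq:lo-def} gives both equivalences. Item~\ref{it:prop:vector order:3} is similarly mechanical: from Equation~\eqref{eq:order-desirs}, $f\prel[\sodv]g\iff g-f\in\sodv$, so the translation equivalence is immediate (since $(g+h)-(f+h)=g-f$), and the scaling equivalence uses Axiom~\ref{coh sodv 3: positive scaling is desirable} applied in both directions (with $\lambda$ and with $\nicefrac{1}{\lambda}$). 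Item~\ref{it:prop:vector order:4} is a one-line consequence of Axiom~\ref{coh sodv 2: positive vectors are desirable}: $f<g$ means $g-f\in\posgambles\subseteq\sodv$.

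The only part that needs real thought is item~\ref{it:prop:vector order:2}. Here I would argue as follows. Assume $f<g$, so $g-f\geq0$ pointwise and there is some $\rval^*\in\ps$ with $g\group{\rval^*}>f\group{\rval^*}$. By the hypothesis and Proposition~\ref{prop:Savage-null}, Condition~\eqref{eq:null event lps} holds, hence there is some $k^*\in\set{1,\dots,\ell}$ with $\mf[k^*]\group{\rval^*}>0$. Because $g-f\geq0$ everywhere, the sum $\E[{\mf[k^*]}]\group{g-f}=\sum_\rval\mf[k^*]\group{\rval}\group{g\group{\rval}-f\group{\rval}}$ is bounded below by $\mf[k^*]\group{\rval^*}\group{g\group{\rval^*}-f\group{\rval^*}}>0$, so $\E[{\mf[k^*]}]\group{f}<\E[{\mf[k^*]}]\group{g}$. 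Let $k$ be the smallest index in $\set{1,\dots,\ell}$ for which $\E[{\mf[k]}]\group{f}\neq\E[{\mf[k]}]\group{g}$; the argument just given shows such a $k$ exists. For every $j<k$ we have $\E[{\mf[j]}]\group{f}=\E[{\mf[j]}]\group{g}$, and at layer $k$ the non-negativity of $g-f$ again forces $\E[{\mf[k]}]\group{g-f}\geq0$, which together with the inequality gives $\E[{\mf[k]}]\group{f}<\E[{\mf[k]}]\group{g}$. By Equation~\eqref{eq:lo-def} this is exactly $\E[\mf]\group{f}\lo\E[\mf]\group{g}$, whence $f\slo g$ by Equation~\eqref{eq:order-mfs}.

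The main obstacle is the bookkeeping in item~\ref{it:prop:vector order:2}: one has to combine the non-triviality of the support (via Proposition~\ref{prop:Savage-null}) with the fact that $g-f$ is non-negative everywhere to rule out cancellations in layers preceding the witnessing $k$. Once the correct witness $k^*$ is identified and the minimal discriminating layer $k$ is chosen, the matching with the definition of $\lo$ is immediate.
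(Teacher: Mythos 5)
Your proposal is correct and follows essentially the same route as the paper's proof: the meta-claims and items~\ref{it:prop:vector order:1}, \ref{it:prop:vector order:3} and~\ref{it:prop:vector order:4} are handled identically via linearity, Equation~\eqref{eq:order-desirs} and Axioms~\ref{coh sodv 1: 0 is not desirable}--\ref{coh sodv 4: sum is desirable}, while your treatment of item~\ref{it:prop:vector order:2} (nonnegativity of $g-f$ in every layer plus a strictly positive layer obtained from Condition~\eqref{eq:null event lps}, then the minimal discriminating layer) is the same argument the paper gives, merely phrased for $f<g$ rather than reduced to $0<f$.
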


\begin{proof}
It is clear from Proposition~\ref{prop:lexicographic is strict weak order} that $\slo$ is irreflexive and transitive.
To show that $\prel[\sodv]$ is irreflexive, infer from $f-f=0\notin\sodv$ [Axiom~\ref{coh sodv 1: 0 is not desirable}] that indeed $f\nprel[\sodv]f$ for all $f$ in $\gambleson$.
To show that $\prel[\sodv]$ is transitive, assume that $f\prel[\sodv]g$ and $g\prel[\sodv]h$.
Then $g-f\in\sodv$ and $h-g\in\sodv$, by Equation~\eqref{eq:order-desirs}, and hence $h-f=g-f+\group{h-g}\in\sodv$, by Axiom~\ref{coh sodv 4: sum is desirable}.
Using Equation~\eqref{eq:order-desirs} again, we find that then indeed $f\prel[\sodv]h$.
Let us now prove the remaining statements.
\begin{enumerate}[label=\upshape(\roman*),leftmargin=*]
\item This follows from the definition of $\prec$ and the linearity of the expectation operator. 
\item Assume that there are no non-trivial Savage-null events.
Use Proposition~\ref{prop:Savage-null} to infer that Condition~\eqref{eq:null event lps} holds.
Consider any $f$ in $\gambleson$ such that $0<f$.
Then $0\leq f$---so $0\leq\E[{\mf[k]}]\group{f}$ for every $k$ in $\set{1,\dots,\ell}$---and $0<f\group{\rval^*}$ for some $\rval^*$ in $\ps$.
Then $\mf[k]\group{\rval^*}>0$ for some $k$ in $\set{1,\dots,\ell}$ by Condition~\eqref{eq:null event lps}, so $0\lo\E\group{\indset{\rval^*}}$.
Use $f\group{\rval^*}\indset{\rval^*}\leq f$ to infer that then also $0\lo\E\group{f}$, whence indeed $0\slo f$.
\item The first equivalence follows immediately from Equation~\eqref{eq:order-desirs}, while the second is a consequence of the scaling axiom of coherent sets of desirable options. 
\item Assume that $f<g$.
Then $0<g-f$, whence $g-f\in\sodv$ by Axiom~\ref{coh sodv 2: positive vectors are desirable}.
Using Equation~\eqref{eq:order-desirs}, we find that then indeed $f\prel[\sodv]g$.\qedhere
\end{enumerate}
\end{proof}

Next we establish a link between lexicographic probability systems and preference relations associated with lexicographic sets of desirable gambles. 
We refer to papers by \citet[Section~2.1]{cozman2015} and \citet{seidenfeld1990} for other relevant discussion on the connection between lexicographic probabilities and partial preference relations. 
Our proof will make repeated use of the following separation theorem~\citep{holmes1975}, in the form stated by \citet[Appendix~E1]{walley1991}:

\begin{theorem}[Separating hyperplane theorem]\label{thm:separating}
Let\/ $\mathcal{W}_1$ and\/ $\mathcal{W}_2$ be two convex subsets of a finite-dimensional linear topological space $\mathcal{B}$.
If\/ $0\in\mathcal{W}_1\cap\mathcal{W}_2$ and\/ $\interior(\mathcal{W}_1)\cap\mathcal{W}_2=\emptyset$, then there is a non-zero continuous linear functional $\Lambda$ on $\mathcal{B}$ such that
\begin{equation*}
\Lambda(\vect[w])\geq0\text{ for all $\vect[w]$ in $\mathcal{W}_1$ and\/ }
\Lambda(\vect[w'])\leq0\text{ for all $\vect[w']$ in $\mathcal{W}_2$.} 
\end{equation*}
If\/ $\mathcal{W}_1$ and\/ $\mathcal{W}_2$ are finite, $\mathcal{W}_1$ non-empty, and $\sum_{i=1}^m\lambda_i\vect[w_i]-\sum_{k=1}^n\mu_k\vect[w_k']\neq0$ for all $m$ and $n$ in $\nats$, all $\lambda_1$, \dots, $\lambda_m$ in $\nonnegreals$ with $\lambda_i>0$ for at least one $i$ in $\set{1,\dots,m}$, all $\mu_1$, \dots, $\mu_n$ in $\nonnegreals$, all $\vect[w_1]$, \dots, $\vect[w_m]$ in $\mathcal{W}_1$, and all $\vect[w_1']$, \dots, $\vect[w_n']$ in $\mathcal{W}_2$, then there is a non-zero continuous linear functional $\Lambda$ on $\mathcal{B}$ such that
\begin{equation*}
\Lambda(\vect[w])>0\text{ for all $\vect[w]$ in $\mathcal{W}_1$ and\/ }
\Lambda(\vect[w'])\leq0\text{ for all $\vect[w']$ in $\mathcal{W}_2$.} 
\end{equation*}
\end{theorem}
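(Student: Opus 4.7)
The statement combines two separation results for convex sets in a finite-dimensional linear topological space, and the plan is to derive both from the geometric Hahn--Banach theorem in the first instance, and then strengthen to strict separation on one side via the conic structure arising from finite generation in the second.

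For the first assertion, I would apply the standard geometric Hahn--Banach theorem to the open convex set $\interior(\mathcal{W}_1)$ and the disjoint convex set $\mathcal{W}_2$: this produces a non-zero continuous linear functional $\Lambda$ on $\mathcal{B}$ and a scalar $c$ in $\reals$ with $\Lambda(\vect[w]) \geq c$ for all $\vect[w]$ in $\interior(\mathcal{W}_1)$ and $\Lambda(\vect[w']) \leq c$ for all $\vect[w']$ in $\mathcal{W}_2$. Taking limits and using $\mathcal{W}_1 \subseteq \closure(\interior(\mathcal{W}_1))$ in finite dimensions extends the first inequality to all of $\mathcal{W}_1$. Since $0$ lies in $\mathcal{W}_1 \cap \mathcal{W}_2$, we obtain $c \leq 0 \leq c$, so $c = 0$, yielding the desired separation. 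When $\interior(\mathcal{W}_1)$ is empty, I would first pass to the affine hull of $\mathcal{W}_1$ and argue with the relative interior.

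For the second assertion, the plan is to reformulate the hypothesis in terms of convex cones. Set $C_1 \coloneqq \Posi(\mathcal{W}_1)$ and $C_2 \coloneqq \set{0} \cup \Posi(\mathcal{W}_2)$. The stated non-vanishing condition is precisely equivalent to $C_1 \cap C_2 = \emptyset$; in particular $0 \notin C_1$ and hence $0 \notin \mathcal{W}_1$. Because $\mathcal{W}_1$ and $\mathcal{W}_2$ are finite, $C_1$ and $C_2$ are finitely generated, hence closed polyhedral cones. I would then invoke a strict separation result for two disjoint closed convex cones to obtain a non-zero linear functional $\Lambda$ with $\Lambda(\vect[w]) > 0$ for all $\vect[w]$ in $C_1$ and $\Lambda(\vect[w']) \leq 0$ for all $\vect[w']$ in $C_2$; restricting to $\mathcal{W}_1 \subseteq C_1$ and $\mathcal{W}_2 \subseteq C_2$ gives the claim.

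The hard part is obtaining the \emph{strict} inequality on every element of $\mathcal{W}_1$, not merely on some generic element: a direct appeal to the first assertion only yields the weak form $\Lambda \geq 0$ on $C_1$, which is compatible with $\Lambda(\vect[w]) = 0$ for some $\vect[w]$ in $\mathcal{W}_1$. Finiteness is indispensable here --- it makes the generated cones polyhedral (hence closed), and it permits an inductive argument that successively peels off any elements of $\mathcal{W}_1$ on which $\Lambda$ vanishes, applying the weak separation again to the remaining finite collection until strict positivity is achieved on all of $\mathcal{W}_1$.
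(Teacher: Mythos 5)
First, a point of reference: the paper does not prove this theorem at all --- it imports it from \citet{holmes1975} in the form stated by \citet[Appendix~E1]{walley1991} and uses it as a black box --- so your attempt has to be judged on its own merits rather than against an internal argument. For the first claim, your Hahn--Banach argument is the standard one and is fine whenever $\interior(\mathcal{W}_1)\neq\emptyset$. The gap is the degenerate case you dismiss in one sentence: separating inside the affine hull of $\mathcal{W}_1$ only produces a functional on that subspace, and no extension of it to $\mathcal{B}$ is guaranteed to remain non-positive on the part of $\mathcal{W}_2$ outside that subspace; the hypothesis $\interior(\mathcal{W}_1)\cap\mathcal{W}_2=\emptyset$ is vacuous there, so it gives you nothing to work with. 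In fact the conclusion can then fail outright as literally stated: in $\mathcal{B}=\reals^2$ take $\mathcal{W}_1=[-1,1]\times\set{0}$ and $\mathcal{W}_2$ the closed unit disc; both are convex, both contain $0$, and $\interior(\mathcal{W}_1)=\emptyset$, yet the only linear functional that is non-negative on $\mathcal{W}_1$ and non-positive on $\mathcal{W}_2$ is the zero functional. So this case cannot be reached by your route; it is precisely the delicate point the paper delegates to its citation of Holmes' Theorem~4B, and your sketch does not fill it.

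For the second claim, the reduction to $C_1=\Posi(\mathcal{W}_1)$ and $C_2=\set{0}\cup\Posi(\mathcal{W}_2)$, and the equivalence of the displayed hypothesis with $C_1\cap C_2=\emptyset$, are correct, and finiteness does make $\closure(C_1)=C_1\cup\set{0}$ and $C_2$ closed. But the step you lean on --- ``a strict separation result for two disjoint closed convex cones'' --- is not a correct principle as stated: the two closed cones meet at $0$, and for cones meeting only at $0$ a functional strictly positive on all of $C_1$ need not exist unless $C_1\cup\set{0}$ is pointed (take a closed half-plane through the origin against $K_2=\set{0}$). Pointedness does follow from your hypothesis (if $u$ and $-u$ were both in $C_1$ then $0\in\Posi(C_1)=C_1$), but you neither note this nor prove the separation statement, and that statement is the entire content of the second claim --- in this finite setting it is Motzkin's transposition theorem. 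Your fallback induction likewise stops short of a proof: after weak separation gives $\Lambda_1$, ``applying the weak separation again to the remaining finite collection'' produces further functionals, but you never say how they are recombined into a single $\Lambda$ that is strictly positive on every element of $\mathcal{W}_1$ while staying non-positive on every element of $\mathcal{W}_2$. The missing step is to work inside $\ker\Lambda_1$ with $Z=\cset{w\in\mathcal{W}_1}{\Lambda_1(w)=0}$ and $\mathcal{W}_2\cap\ker\Lambda_1$ (checking that the hypothesis is inherited), extend the functional obtained there arbitrarily to $\mathcal{B}$, and take $\Lambda=\Lambda_1+\epsilon\Lambda_2$ with $\epsilon>0$ small enough for the finitely many remaining points; note also that the weak separation invoked at each stage must be the proper (relative-interior) finite-dimensional version, since these cones may have empty interior. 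Without these ingredients, or a direct appeal to Motzkin/Stiemke, the second half remains a plan rather than a proof.
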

\noindent 
Two clarifications here are (i) that we will apply the theorem to linear subsets of $\gambleson$, which is a linear topological space~\cite[Appendix~D]{walley1991} that is finite-dimensional because $\ps$ is finite, and (ii) that when the linear topological space is finite-dimensional, the assumption $\interior(\mathcal{W}_1)\neq\emptyset$ that \citet[Appendix~E1]{walley1991} mentions is not necessary for the separating hyperplane theorem to hold, as shown by \citet[Theorem~4B]{holmes1975}.

\begin{theorem}\label{thm:lexico sodvs}
Given a lexicographic probability system $\mf=(\mf[1],\dots,\mf[\ell])$ that has no non-trivial Savage-null events, the set of desirable gambles $\sodv[\mf]\coloneqq\cset{f\in\gambleson}{0\slo f}$ corresponding with the preference relation $\slo$, is an element of\/ $\cohlexisodvs$---a coherent and lexicographic set of desirable gambles.
Conversely, given a lexicographic set of desirable gambles $\sodv$ in $\cohlexisodvs$, its corresponding preference relation $\prel[\sodv]$ is a preference relation based on some lexicographic probability system $\mf=(\mf[1],\dots,\mf[\ell])$ that has no non-trivial Savage-null events.
\end{theorem}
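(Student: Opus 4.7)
The proof has two directions. For the direct part, assume $\mf = (\mf[1], \dots, \mf[\ell])$ has no non-trivial Savage-null events; the plan is to verify the four coherence axioms for $\sodv[\mf]$ and then lexicographicity. Axiom~\ref{coh sodv 1: 0 is not desirable} follows from irreflexivity of $\slo$ (Proposition~\ref{prop:lexicographic is strict weak order}); Axiom~\ref{coh sodv 2: positive vectors are desirable} follows from Proposition~\ref{prop:vector order}\ref{it:prop:vector order:2}, which uses the no-Savage-null hypothesis; and Axioms~\ref{coh sodv 3: positive scaling is desirable} and~\ref{coh sodv 4: sum is desirable} follow from the translation/scaling compatibility Proposition~\ref{prop:vector order}\ref{it:prop:vector order:1} combined with transitivity. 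For lexicographicity, Equation~\eqref{eq:non-dom-lexic} rewrites $\sodv[\mf]^c = \cset{f}{\E[\mf](f) \rlo 0}$, which is the preimage under the linear map $\E[\mf] \colon \gambleson \to \reals^\ell$ of the reflexive-lexicographic closed negative cone; preimages of convex cones under linear maps are themselves convex cones.

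For the converse, given $\sodv \in \cohlexisodvs$ I would build the layers iteratively. Proposition~\ref{prop:lexicographic D induces linear prevision} gives $\mf[1] \coloneqq \lp[\sodv]$, a probability mass function; a short argument using coherence of $\sodv$ (specifically Axiom~\ref{coh sodv 4: sum is desirable}) shows $\E[{\mf[1]}] \geq 0$ on $\sodv$ and $\E[{\mf[1]}] \leq 0$ on $\sodv^c$. Set $K_1 \coloneqq \ker \E[{\mf[1]}]$. If $\sodv \cap K_1 = \emptyset$ we stop with $\ell = 1$, since then $\sodv = \cset{f}{\E[{\mf[1]}](f) > 0}$. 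Otherwise, $\sodv \cap K_1$ and $\sodv^c \cap K_1$ are disjoint convex cones in $K_1$, and Theorem~\ref{thm:separating} with $\mathcal{W}_1 = (\sodv \cap K_1) \cup \set{0}$ and $\mathcal{W}_2 = \sodv^c \cap K_1$ yields a non-zero linear functional $\Lambda_2$ on $K_1$ with $\Lambda_2 \geq 0$ on $\sodv \cap K_1$ and $\Lambda_2 \leq 0$ on $\sodv^c \cap K_1$. Convert $\Lambda_2$ to a probability mass function $\mf[2]$ by parametrising its extensions to $\gambleson$ as $\Lambda_2 + \alpha \E[{\mf[1]}]$: on $\text{supp}(\mf[1])$ the mass-function coefficient is pushed non-negative by taking $\alpha$ large (using $\mf[1] > 0$ there), while on $\ps \setminus \text{supp}(\mf[1])$ the coefficient equals $\Lambda_2(\indset{\rval}) \geq 0$ since $\indset{\rval} \in \vspos \cap K_1 \subseteq \sodv \cap K_1$. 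Normalize and iterate inside $K_2 \coloneqq K_1 \cap \ker \E[{\mf[2]}]$; as $\Lambda_2 \not\equiv 0$ on $K_1$, the dimension strictly drops, so the process ends after at most $\abs{\ps}$ steps.

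Finally, one verifies $\prel[\sodv] = \slo$ by induction on the layer index, using that at each level the extended $\mf[i]$ coincides with the separating $\Lambda_i$ on $K_{i-1}$ and therefore correctly signs the still-undecided gambles at level $i$; the no-Savage-null property then comes for free, because $\indset{\rval} \in \vspos \subseteq \sodv$ forces $0 \slo \indset{\rval}$, so some $\mf[k](\rval)$ must be positive. The main obstacle is the conversion step: realising the abstract separating functional $\Lambda_i$ as an honest probability mass function on $\ps$. The argument above for $\Lambda_2$---automatic non-negativity on indicators outside $\text{supp}(\mf[1])$ combined with the one-dimensional slack afforded by $\mf[1]$---is the template, but it needs iterating carefully at every layer, keeping track of the supports already fixed by preceding layers.
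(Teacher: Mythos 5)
Your proposal is correct and follows the same global architecture as the paper's proof---for the direct part, verifying Axioms~\ref{coh sodv 1: 0 is not desirable}--\ref{coh sodv 4: sum is desirable} via Propositions~\ref{prop:lexicographic is strict weak order} and~\ref{prop:vector order} and then noting that $\cset{f}{\E[\mf]\group{f}\rlo0}$ is a convex cone by linearity, exactly as the paper does; for the converse, an iterated separating-hyperplane construction of the layers---but it replaces the paper's key technical device by a genuinely different one. Where the paper proves Lemma~\ref{lem:from:kernel:to:allgambles} (a second, quotient-space application of Theorem~\ref{thm:separating}, with a delicate verification of the strict-separation hypothesis) in order to extend each separating functional $\Lambda_k$, defined only on $K_{k-1}$, to a positive functional on all of $\gambleson$, you extend directly by setting $\Gamma_k\coloneqq\tilde{\Lambda}_k+\sum_{j<k}\alpha_j\E[{\mf[j]}]$ with $\alpha_j\geq0$ large. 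This template does iterate to every layer, not just the second: the correction terms vanish on $K_{k-1}$, so $\Gamma_k$ agrees \emph{exactly} with $\Lambda_k$ there (stronger than the sign-agreement of the paper's lemma, which is what makes your final layer-by-layer verification work); any $\rval$ with $\mf[j]\group{\rval}>0$ for some $j<k$ is handled by taking that $\alpha_j$ large, with no interference since every added term is non-negative; and any $\rval$ with $\mf[j]\group{\rval}=0$ for all $j<k$ has $\indset{\rval}\in\vspos\cap K_{k-1}\subseteq\sodv\cap K_{k-1}$, so non-negativity there is automatic; positivity plus $\Gamma_k\neq0$ then gives $\Gamma_k(1)>0$ for the normalisation. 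So the bookkeeping you defer does close, and your route buys a more elementary argument that avoids both the quotient-space lemma and the strict form of the separation theorem. Two details you leave implicit and should write out: (i) the hypothesis $\interior\group{\mathcal{W}_1}\cap\mathcal{W}_2=\emptyset$ for your choice $\mathcal{W}_1=\group{\sodv\cap K_1}\cup\set{0}$---the paper instead takes $\mathcal{W}_1=\closure\group{\sodv\cap K_1}$ and invokes Lemma~\ref{lemma:int of cl is int}; for your choice it suffices to note that $\mathcal{W}_1\cap\mathcal{W}_2=\set{0}$ and that $0$ cannot be an interior point of $\mathcal{W}_1$, since otherwise some non-zero $f$ and $-f$ would both lie in $\sodv$, contradicting coherence; and (ii) in the concluding identification of $\prel[\sodv]$ with $\slo$, that $f\in\sodv$ with $\E[j]\group{f}=0$ for all $j<\ell$ forces $\E[\ell]\group{f}>0$ strictly, which follows from the stopping condition $\sodv\cap K_\ell=\emptyset$.
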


\begin{proof}
We begin with the first statement.
We first show that $\sodv[\mf]$ is coherent.
For Axiom~\ref{coh sodv 1: 0 is not desirable}, infer from $0\nslo 0$ by the irreflexivity of $\slo$ [see Proposition~\ref{prop:lexicographic is strict weak order}] that indeed $0\notin\sodv[\mf]$.
For Axiom~\ref{coh sodv 2: positive vectors are desirable}, consider any $f$ in $\posgambles$.
Use Proposition~\ref{prop:vector order} to infer that $0\slo f$, whence indeed $f\in\sodv[\mf]$.
For Axiom~\ref{coh sodv 3: positive scaling is desirable}, consider any $f$ in $\sodv[\mf]$ and $\lambda$ in $\posreals$.
Then $0\slo f$, and hence $0\slo\lambda f$ using Proposition~\ref{prop:vector order}.
Then indeed $\lambda f\in\sodv[\mf]$.
For Axiom~\ref{coh sodv 4: sum is desirable}, consider any $f$ and $g$ in $\sodv[\mf]$, whence $0\slo f$ and $0\slo g$.
From $0\slo g$ infer that $f\slo f+g$ by Proposition~\ref{prop:vector order}, and using $0\slo f$, that $0\slo f+g$ by the transitivity of $\slo$ [see Proposition~\ref{prop:lexicographic is strict weak order}].
Then indeed $f+g\in\sodv[\mf]$.

So it only remains to show that $\Posi(\sodv[\mf]^c)=\sodv[\mf]^c$.
Consider any $f$ and $g$ in $\sodv[\mf]^c$ and any $\lambda_1$ and $\lambda_2$ in $\posreals$, then we must prove that $\lambda_1f+\lambda_2g\in\sodv[\mf]^c$.
Since by assumption $0\nslo f$ and $0\nslo g$, Equation~\eqref{eq:non-dom-lexic} guarantees that 
\begin{equation*}
\E\group{f}\rlo\E\group{0}=0
\text{ and }
\E\group{g}\rlo\E\group{0}.
\end{equation*}
By the linearity of the expectation operator,
\begin{equation*}
\E\group{\lambda_1 f+\lambda_2 g}\rlo\E\group{0}=0, 
\end{equation*}
whence
$0\nslo\lambda_1f+\lambda_2g$.
Then indeed $\lambda_1f+\lambda_2g\in\sodv[\mf]^c$.

For the second statement, we consider any $\sodv$ in $\cohlexisodvs$, and we construct a lexicographic probability system $\mf$ with no non-trivial Savage-null events and such that $\slo$ equals $\prel[\sodv]$.
Define the real functional $\Lambda_1$ on $\gambleson$ by letting $\Lambda_1(f)\coloneqq\sup\cset{\alpha\in\reals}{f-\alpha\in\sodv}$ for all $f$ in $\gambleson$.
Proposition~\ref{prop:lexicographic D induces linear prevision} guarantees that $\Lambda_1$ is a linear functional.
Its kernel $\ker\Lambda_1$ is an $n-1$-dimensional linear space, where $n$ is the finite dimension of the real vector space $\gambleson$---the cardinality of $\ps$.
Since both $\sodv^c$ and $\ker\Lambda_1$ are convex cones, so is their intersection $\sodv^c\cap\ker\Lambda_1$, and it contains $0$ because $0\in\sodv^c$ and $0\in\ker\Lambda_1$.
Using similar arguments, we see that $\sodv\cap\ker\Lambda_1$ is either a convex cone or empty.
When $\sodv\cap\ker\Lambda_1=\emptyset$, let $\ell\coloneq1$, and stop.
When $\sodv\cap\ker\Lambda_1\neq\emptyset$, it follows from Theorem~\ref{thm:separating} that 
there is some non-zero (continuous) linear functional $\Lambda_2$ on $\ker\Lambda_1$ such that
\begin{equation*}
\Lambda_2\group{f}\leq0\text{ for all $f$ in $\sodv^c\cap\ker\Lambda_1$ and }
\Lambda_2\group{f}\geq0\text{ for all $f$ in $\sodv\cap\ker\Lambda_1$}.
\end{equation*}
[Apply Theorem~\ref{thm:separating} with $\mathcal{B}=\ker\Lambda_1$, $\mathcal{W}_2=\sodv^c\cap\ker\Lambda_1$ and $\mathcal{W}_1=\closure\group{\sodv\cap\ker\Lambda_1}$ (the topological closure of $\sodv\cap\ker\Lambda_1$ in $\ker\Lambda_1$); then $\interior(\mathcal{W}_1)\cap\mathcal{W}_2=\emptyset$ by Lemma~\ref{lemma:int of cl is int}, and $0\in\mathcal{W}_1\cap\mathcal{W}_2$] $\ker\Lambda_2$ is a $n-2$-dimensional linear space.
Also, $\sodv\cap\ker\Lambda_2$ is either empty or a non-empty convex cone.
If it is empty, let $\ell\coloneqq2$; otherwise, we repeat the same procedure again: it follows from Theorem~\ref{thm:separating} that there is some non-zero (continuous) linear functional $\Lambda_3$ on $\ker\Lambda_2$ such that
\begin{equation*}
\Lambda_3\group{f}\leq0\text{ for all $f$ in $\sodv^c\cap\ker\Lambda_2$ and }
\Lambda_3\group{f}\geq0\text{ for all $f$ in $\sodv\cap\ker\Lambda_2$}.
\end{equation*}
[Apply Theorem~\ref{thm:separating} with $\mathcal{B}=\ker\Lambda_2$, $\mathcal{W}_2=\sodv^c\cap\ker\Lambda_2$ and $\mathcal{W}_1=\closure\group{\sodv\cap\ker\Lambda_2}$ (the topological closure of $\sodv\cap\ker\Lambda_2$ in $\ker\Lambda_2$); then $\interior(\mathcal{W}_1)\cap\mathcal{W}_2=\emptyset$ by Lemma~\ref{lemma:int of cl is int}, and $0\in\mathcal{W}_1\cap\mathcal{W}_2$] $\ker\Lambda_3$ is a $n-3$-dimensional linear space.
Also, $\sodv\cap\ker\Lambda_3$ is either empty or a non-empty convex cone.
If it is empty, let $\ell\coloneqq3$; if not, continue in the same vein.
This leads to successive linear functionals $\Lambda_k$ defined on the $n-k+1$-dimenional linear spaces $\ker\Lambda_{k-1}$ such that 
\begin{equation}\label{eq: separating hyperplane 1}
\Lambda_k\group{f}\leq0\text{ for all $f$ in $\sodv^c\cap\ker\Lambda_{k-1}$ and }
\Lambda_k\group{f}\geq0\text{ for all $f$ in $\sodv\cap\ker\Lambda_{k-1}$}.
\end{equation}
This sequence stops as soon as $\sodv\cap\ker\Lambda_k=\emptyset$, and we then let $\ell\coloneqq k$.
Because the finite dimensions of the successive $\ker\Lambda_k$ decrease with $1$ at each step, we are guaranteed to stop after at most $n$ repetitions: should $\sodv\cap\ker\Lambda_k\neq\emptyset$ for all $k\in\set{1,\dots,n-1}$ then $\ker\Lambda_n$ will be the $0$-dimensional linear space $\set{0}$, and then necessarily $\sodv\cap\ker\Lambda_n=\emptyset$.
For the last functional $\Lambda_\ell$, we have moreover that
\begin{equation}\label{eq:guaranteed:to:stop}
\Lambda_\ell\group{f}>0\text{ for all $f$ in $\sodv\cap\ker\Lambda_{\ell-1}$.}
\end{equation}
To see this, recall that by construction $\Lambda_\ell\group{f}\geq0$ for all $f$ in $\sodv\cap\ker\Lambda_{\ell-1}$, and that $\sodv\cap\ker\Lambda_\ell=\emptyset$.

In this fashion we obtain $\ell$ linear functionals $\Lambda_1$, \dots, $\Lambda_\ell$, each defined on the kernel of the previous functional---except for the domain $\gambleson$ of $\Lambda_1$.
We now show that we can turn the $\Lambda_2$, \dots, $\Lambda_\ell$ into expectation operators: positive and normalised linear functionals on the linear space $\gambleson$.
Indeed, consider their respective extensions $\Gamma_2$, \dots, $\Gamma_\ell$ to $\gambleson$ from Lemma~\ref{lem:from:kernel:to:allgambles} below, and let $\Gamma_1\coloneqq\Lambda_1$.
They satisfy $\Gamma_k(1)>0$ for all $k\in\set{1,\dots,\ell}$; see Proposition~\ref{prop:lexicographic D induces linear prevision} and Lemma~\ref{lem:from:kernel:to:allgambles}\ref{it:from:kernel:to:allgambles:nonzero}.
Now consider the real linear functionals on $\gambleson$ defined by $\E[1]\coloneqq\Gamma_1$, and $\E[k]\group{f}\coloneqq\Gamma_k\group{f}/\Gamma_k(1)$ for all $k$ in $\set{2,\dots,\ell}$ and $f$ in $\gambleson$.
It is obvious from Proposition~\ref{prop:lexicographic D induces linear prevision} and Lemma~\ref{lem:from:kernel:to:allgambles}\ref{it:from:kernel:to:allgambles:positive} that these linear functionals are normalised and positive, and therefore expectation operators on $\gambleson$.
Indeed each $\E[k]$ is the expectation operator associated with the mass function $\mf[k]$ defined by $\mf[k]\group{\rval}\coloneqq\E[k](\indset{\rval})$ for all $\rval$ in $\ps$.
In this way, $\mf\coloneqq(\mf[1],\dots,\mf[\ell])$ defines a lexicographic probability system.

We now prove that $\mf$ has no non-trivial Savage-null events, using Proposition~\ref{prop:Savage-null}.
Assume \emph{ex absurdo} that there is some $\rval^*$ in $\ps$ such that $\mf[k]\group{\rval^*}=\E[k]\group{\indset{\rval^*}}=0$ for all $k$ in $\set{1,\dots,\ell}$.
Then $\indset{\rval^*}\in\ker\Gamma_1=\ker\Lambda_1$ and $\indset{\rval^*}\in\Gamma_k$ for all $k$ in $\set{2,\dots,\ell}$.
Invoke Lemma~\ref{lem:from:kernel:to:allgambles}\ref{it:from:kernel:to:allgambles:kernels} to find that $\indset{\rval^*}\in\ker\Lambda_1\cap\ker\Gamma_2=\ker\Lambda_2$.
Repeated application of this same lemma eventually leads us to conclude that in $\indset{\rval^*}\in\ker\Lambda_{\ell-1}$ and $\indset{\rval^*}\in\ker\Lambda_\ell$.
Since also $\indset{\rval^*}\in\sodv$ and hence $\indset{\rval^*}\in\sodv\cap\ker\Lambda_{\ell-1}$ [Axiom~\ref{coh sodv 2: positive vectors are desirable}], Equation~\eqref{eq:guaranteed:to:stop} implies that $\Lambda_\ell\group{\indset{\rval^*}}>0$, a contradiction.

It now only remains to prove that $\prel[\sodv]$ is the lexicographic ordering with respect to \emph{this} lexicographic probability system, or in other words that
\begin{equation*}
f\in\sodv
\iff
0\lo(\E[1]\group{f},\dots,\E[\ell]\group{f})
\text{ for all $f$ in $\gambleson$.}
\end{equation*}

For necessity, assume that $f\in\sodv$.
Then $\E[1]\group{f}\geq0$ by the definition of $\Lambda_1$.
If $\E[1]\group{f}>0$, then we are done.
So assume that $\E[1]\group{f}=0$.
Then $f\in\ker\Lambda_1$ and $\Lambda_2\group{f}\geq0$ by Equation~\eqref{eq: separating hyperplane 1}.
Again, if $\Lambda_2\group{f}>0$, we can invoke Lemma~\ref{lem:from:kernel:to:allgambles}\ref{it:from:kernel:to:allgambles:samesign} to find that $\Gamma_2\group{f}>0$ and hence $\E[2]\group{f}>0$, and we are done.
So assume that $\Lambda_2\group{f}=0$.
Then $f\in\ker\Lambda_2$ and $\Lambda_3\group{f}\geq0$ by Equation~\eqref{eq: separating hyperplane 1}.
We can go on in this way, and we call $k$ the largest number for which $\E[j]\group{f}=0$ for all $j$ in $\set{1,\dots,k-1}$, or in other words, the smallest number for which $\E[k]\group{f}>0$.
Then $k\leq\ell$ by construction---see Equation~\eqref{eq:guaranteed:to:stop})---, whence indeed $0\lo(\E[1]\group{f},\dots,\E[\ell]\group{f})$.

For sufficiency, assume that $0\lo(\E[1]\group{f},\dots,\E[\ell]\group{f})$, meaning that there is some $k$ in $\set{1,\dots,\ell}$ for which $\E[j]\group{f}=0=\Gamma_j\group{f}$ for all $j$ in $\set{1,\dots,k-1}$ and $\E[k]\group{f}>0$, whence also $\Gamma_k\group{f}>0$.
So $f\in\ker\Gamma_j$ for all $j\in\set{1,\dots,k-1}$ and therefore repeated application of Lemma~\ref{lem:from:kernel:to:allgambles}\ref{it:from:kernel:to:allgambles:kernels} tells us that $f\in\ker\Lambda_j$ for all $j\in\set{1,\dots,k-1}$.
Since $\Gamma_k\group{f}>0$, we infer from Lemma~\ref{lem:from:kernel:to:allgambles}\ref{it:from:kernel:to:allgambles:samesign} that also $\Lambda_k\group{f}>0$, whence indeed $f\in\sodv$ by Equation~\eqref{eq: separating hyperplane 1}.
\end{proof}

\begin{lemma}\label{lemma:int of cl is int}
Consider any coherent set $\sodv$ of desirable gambles on a finite possibility space $\ps$, and consider any linear subspace $\Lambda\subseteq\gambleson$.
Then $\interior\group{\closure\group{\sodv\cap\Lambda}}\cap\sodv^c=\emptyset$, where $\interior$ is the topological interior and $\closure$ the topological closure.
\end{lemma}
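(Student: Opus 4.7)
The plan is to reduce the lemma to a standard fact from finite-dimensional convex analysis: taking the topological closure of a convex set does not enlarge its interior. Since $\gambleson$ is finite-dimensional (because $\ps$ is finite), $\Lambda$ is also finite-dimensional, and the relevant topology---as is clear from the way the lemma is invoked in the proof of Theorem~\ref{thm:lexico sodvs}, where $\Lambda$ plays the role of $\ker\Lambda_1$ or $\ker\Lambda_2$---is that of $\Lambda$ as a linear topological subspace. So all occurrences of $\interior$ and $\closure$ below are to be read relative to $\Lambda$.

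First I would observe that $\sodv$ is a convex cone by Axioms~\ref{coh sodv 3: positive scaling is desirable} and~\ref{coh sodv 4: sum is desirable}, whence $\sodv\cap\Lambda$, being the intersection of two convex sets, is itself convex. Then I would invoke the classical result that for any convex subset $C$ of a finite-dimensional linear space $\Lambda$, one has $\interior\group{\closure C}=\interior C$ (see, for instance, Rockafellar's \emph{Convex Analysis}, Theorem~6.3). Applied to $C=\sodv\cap\Lambda$, this gives
\begin{equation*}
\interior\group{\closure\group{\sodv\cap\Lambda}}
=\interior\group{\sodv\cap\Lambda}
\subseteq\sodv\cap\Lambda
\subseteq\sodv,
\end{equation*}
which is manifestly disjoint from $\sodv^c$.

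The only subtlety worth noting concerns the case in which $\interior\group{\sodv\cap\Lambda}$ happens to be empty in $\Lambda$: then $\sodv\cap\Lambda$ is contained in a proper affine subspace of $\Lambda$, and so is its closure, whence $\interior\group{\closure\group{\sodv\cap\Lambda}}=\emptyset$ and the statement holds vacuously. I do not foresee any substantive obstacle; the argument should amount to one appeal to this classical convex analysis result, preceded only by the brief remark that $\sodv\cap\Lambda$ is convex.
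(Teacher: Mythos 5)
Your proposal is correct and takes essentially the same route as the paper: both reduce the claim to the classical finite-dimensional fact that taking the closure of a convex set does not enlarge its (relative) interior, which the paper invokes in the form $\relinterior\group{\closure\group{C}}=\relinterior\group{C}$ (Brøndsted) for $C=\sodv$ and you invoke in the equivalent interior form for $C=\sodv\cap\Lambda$. The only difference in execution is that you apply the fact directly to $\sodv\cap\Lambda$ in the subspace topology of $\Lambda$—which is indeed the reading needed when the lemma is used with $\mathcal{B}=\ker\Lambda_{k-1}$ in Theorem~\ref{thm:lexico sodvs}—whereas the paper applies it to $\sodv$ in the ambient space and then passes to $\sodv\cap\Lambda$ by monotonicity of $\interior$ and $\closure$.
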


\begin{proof}
We first prove $\interior\group{\closure\group{\sodv}}\cap\sodv^c=\emptyset$.
To show that, we will use the fact that $\sodv$, and therefore also $\closure\group{\sodv}$, is a convex set.
Since the interior of a convex set is always included in the relative interior $\relinterior$ of that convex set~\citep[see][Section~1.3]{Brondsted1983}, we find that $\interior\group{\closure\group{\sodv}}\subseteq\relinterior\group{\closure\group{\sodv}}$.
A well-known result~\cite[Theorem~3.4(d)]{Brondsted1983} states that $\relinterior\group{\closure\group{C}}=\relinterior\group{C}$ for any convex set $C$ in a finite-dimensional vector space, whence $\interior\group{\closure\group{\sodv}}\subseteq\relinterior\group{\sodv}$.
But $\relinterior\group{\sodv}$ is a subset of $\sodv$, so $\interior\group{\closure\group{\sodv}}\subseteq\sodv$, and hence indeed $\interior\group{\closure\group{\sodv}}\cap\sodv^c=\emptyset$.

Now consider $\sodv\cap\Lambda$, a subset of $\sodv$.
Since both $\closure$ and $\interior$ respect set inclusion, we find that $\interior\group{\closure\group{\sodv\cap\Lambda}}\subseteq\interior\group{\closure\group{\sodv}}\subseteq\sodv$, whence indeed $\interior\group{\closure\group{\sodv\cap\Lambda}}\cap\sodv^c=\emptyset$.
\end{proof}

\begin{lemma}\label{lem:from:kernel:to:allgambles}
Consider a non-zero real linear functional $\Lambda_1$ on the $n$-dimensional real vector space $\gambleson$, and a sequence of non-zero real linear functionals $\Lambda_k$ defined on the $n-k+1$-dimensional real vector space $\ker\Lambda_{k-1}$ for all $k$ in $\set{2,\dots,\ell}$, where $\ell\in\set{2,\dots,n}$.
Assume that all $\Lambda_k$ are positive in the sense that $\group{\forall f\in\nonneggambles\cap\domain\Lambda_k}\group{\Lambda_k(f)\geq0}$, for all $k\in\set{1,\dots,\ell}$. Then for each $k$ in $\set{2,\dots,\ell}$ the real linear functional $\Lambda_k$ on $\ker\Lambda_{k-1}$ can be extended to a real linear functional\/ $\Gamma_k$ on $\gambleson$ with the following properties:
\begin{enumerate}[label=\upshape(\roman*),leftmargin=*]
\item\label{it:from:kernel:to:allgambles:positive} For all $f$ in $\nonneggambles$: $\Gamma_k\group{f}\geq0$;
\item\label{it:from:kernel:to:allgambles:nonzero} $\Gamma_k(1)>0$;
\item\label{it:from:kernel:to:allgambles:kernels} $\ker\Gamma_k\cap\ker\Lambda_{k-1}=\ker\Lambda_k$;
\item\label{it:from:kernel:to:allgambles:samesign} For all $f$ in $\ker\Lambda_{k-1}$: $\Gamma_k\group{f}>0\iff\Lambda_k\group{f}>0$.
\end{enumerate}
\end{lemma}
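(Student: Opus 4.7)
The plan is to first dispose of properties~\ref{it:from:kernel:to:allgambles:kernels} and~\ref{it:from:kernel:to:allgambles:samesign}, which are automatic for any linear extension $\Gamma_k$ of $\Lambda_k$ from $\ker\Lambda_{k-1}$ to $\gambleson$: since $\Gamma_k(f)=\Lambda_k(f)$ for every $f\in\ker\Lambda_{k-1}$, the two functionals have the same sign on $\ker\Lambda_{k-1}$ (property~\ref{it:from:kernel:to:allgambles:samesign}) and the same zero set there, namely $\ker\Lambda_k$ (property~\ref{it:from:kernel:to:allgambles:kernels}). Property~\ref{it:from:kernel:to:allgambles:nonzero} then follows from~\ref{it:from:kernel:to:allgambles:positive} together with the fact that any extension $\Gamma_k$ of the non-zero $\Lambda_k$ is itself non-zero: $1$ is an order unit in $\gambleson$---meaning that $-\norm{f}_\infty\cdot1\leq f\leq\norm{f}_\infty\cdot1$ for every gamble $f$---so a positive functional with $\Gamma_k(1)=0$ would necessarily satisfy $\abs{\Gamma_k(f)}\leq\norm{f}_\infty\Gamma_k(1)=0$ for every $f$, and hence vanish identically.

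The substantive task is therefore constructing $\Gamma_k$ satisfying property~\ref{it:from:kernel:to:allgambles:positive}. I would derive this from the following general finite-dimensional positive-extension principle, proved by induction on the codimension: any linear functional on a subspace $U$ of a finite-dimensional ordered vector space $V$ that is non-negative on $U\cap P$ (with $P$ the positive cone) extends to a positive linear functional on $V$. Here $U=\ker\Lambda_{k-1}$, $V=\gambleson$ and $P=\nonneggambles$, so the hypothesis of the principle is exactly the positivity assumption made on $\Lambda_k$.

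For the one-dimensional inductive step, extending a positive functional $\lambda\colon U\to\reals$ to $U\oplus\Span\set{v_0}$ for some $v_0\in V\setminus U$ amounts to choosing $c=\lambda(v_0)$ in the interval
\begin{equation*}
\sup\cset[\big]{-\lambda(u)/\beta}{u\in U,\beta>0,u+\beta v_0\in P}
\leq c\leq
\inf\cset[\big]{\lambda(u')/\gamma}{u'\in U,\gamma>0,u'-\gamma v_0\in P},
\end{equation*}
and feasibility reduces to the inequality $-\lambda(u)/\beta\leq\lambda(u')/\gamma$ whenever both sides are defined. This follows from the positivity of $\lambda$ on $U\cap P$ via the observation that $\gamma(u+\beta v_0)+\beta(u'-\gamma v_0)=\gamma u+\beta u'$ lies in $U\cap P$, being both in $U$ and a sum of elements of the convex cone $P$; so by hypothesis $\lambda(\gamma u+\beta u')\geq0$, which is exactly the required inequality. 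Iterating this one-dimensional extension then yields the required $\Gamma_k$ on all of $\gambleson$.

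The main (mild) obstacle is the Hahn--Banach-style bookkeeping: handling vacuously the cases where one of the sup/inf above ranges over an empty set, and checking that the supremum and infimum are both finite---which again follows from the positivity hypothesis, since an unbounded sequence would yield a recession direction $w$ of the polyhedral set $\set{u\in U:u+v_0\in P}$ lying in $U\cap P$ on which $\lambda$ is strictly negative, contradicting the assumption that $\lambda\geq0$ on $U\cap P$.
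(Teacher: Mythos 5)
Your proposal is correct, and it takes a genuinely different route from the paper's. The paper does not construct a literal extension: it applies the separating hyperplane theorem (Theorem~\ref{thm:separating}) in the quotient space $\qs$ to a finite family consisting of the classes of the indicators $\indset{\rval}$ and of a gamble $h_k$ with $\Lambda_k(h_k)>0$, and the resulting $\Gamma_k$ agrees with $\Lambda_k$ on $\ker\Lambda_{k-1}$ only up to a positive factor---which is exactly why properties~\ref{it:from:kernel:to:allgambles:kernels} and~\ref{it:from:kernel:to:allgambles:samesign} have to be argued there, and why verifying the strict-separation hypothesis uses the positivity of all the earlier functionals $\Lambda_1,\dots,\Lambda_{k-1}$ through the nested kernels. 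You instead produce a genuine positive extension by iterated one-dimensional Kantorovich/Hahn--Banach-type steps, which makes~\ref{it:from:kernel:to:allgambles:kernels} and~\ref{it:from:kernel:to:allgambles:samesign} trivial, only uses the positivity of $\Lambda_k$ itself on $\nonneggambles\cap\ker\Lambda_{k-1}$, and your order-unit argument for~\ref{it:from:kernel:to:allgambles:nonzero} is fine; the feasibility inequality and the recession-direction argument for finiteness of the supremum and infimum are also correct. One caution: the auxiliary ``positive-extension principle'' is overstated---for an arbitrary closed cone $P$ in a finite-dimensional space it can fail (without a majorising or interior-point condition, positive extendability amounts to closedness of $P^*+U^\perp$, which can break down for non-polyhedral cones such as the Lorentz cone), and indeed your own finiteness step leans on polyhedrality, since for a general closed convex set unboundedness of a linear functional only yields a recession direction on which it is nonpositive, not strictly negative. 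So you should state and prove the principle for polyhedral cones (or simply for $\nonneggambles$ with finite $\ps$, which is all that is needed here); with that rephrasing your argument is complete, and arguably more elementary and self-contained than the paper's, at the price of not reusing the separating-hyperplane machinery the paper already has in place.
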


\begin{proof}
Fix any $k$ in $\set{2,\dots,\ell}$. 
Since the real functional $\Lambda_k$ on the $n-k+1$-dimensional real vector space $\ker\Lambda_{k-1}$ is non-zero, there is some $h_k$ in $\ker\Lambda_{k-1}$ such that $\Lambda_k\group{h_k}>0$.
We will consider the quotient space $\qs$, a $k$-dimensional vector space whose elements $\ec[f]=f+\ker\Lambda_k$ are the affine subspaces through $f$, parallel to the subspace $\ker\Lambda_k$, for $f\in\gambleson$.
We first show that it follows from Theorem~\ref{thm:separating} that there is a non-zero linear functional $\tilde{\Gamma}_k$ on $\qs$ such that
\begin{multline}\label{eq:separating hyperplane 2}
\tilde{\Gamma}_k\group{u}\leq0\text{ for all $u$ in $\mathcal{W}_k^2\coloneqq\cset[\big]{\ec[{-\indset{\rval}}]}{\rval\in\ps_k}$, and }\\
\tilde{\Gamma}_k\group{u}>0\text{ for all $u$ in $\mathcal{W}_k^1\coloneqq\set{\ec[h_k]}\cup\cset[\big]{\ec[{\indset{\rval}}]}{\rval\in\ps_k}$,}
\end{multline}
where we let $\mathcal{X}_k\coloneqq\cset{\rval\in\ps}{\indset{\rval}\notin\ker\Lambda_k}\subseteq\ps$.
The set $\mathcal{X}_k$ is non-empty: since $\ker\Lambda_k$ is $n-k$-dimensional, at most $n-k$ of the linearly independent indicators $\indset{\rval}$, $\rval\in\ps$ may lie in $\ker\Lambda_k$, so $\abs{\ps_k}\geq k$.
To show that we can apply Theorem~\ref{thm:separating}, we prove that the condition for it is satisfied: $\sum_{i=1}^n\lambda_iw_i^1-\sum_{k=1}^m\mu_kw_k^2\neq0$ for all $m$ and $n$ in $\nats$, all $\lambda_1$, \dots, $\lambda_m$ in $\nonnegreals$ with $\lambda_i>0$ for at least one $i$ in $\set{1,\dots,m}$, all $\mu_1$, \dots, $\mu_n$ in $\nonnegreals$, all $w_1^1$, \dots, $w_n^1$ in $\mathcal{W}_k^1$, and all $w_1^2$, \dots, $w_m^2$ in $\mathcal{W}_k^2$.
Since $\mathcal{W}_k^1$ and $\mathcal{W}_k^2$ are finite, it is not difficult to see that it suffices to consider $\sum_{i=1}^n\lambda_iw_i^1=\lambda\ec[h_k]+\sum_{\rval\in\ps_k}\lambda_\rval\ec[{\indset{\rval}}]$ and $\sum_{j=1}^m\mu_jw_j^2=-\sum_{\rval\in\ps_k}\mu_\rval\ec[{\indset{\rval}}]$.
So assume \emph{ex absurdo} that $\lambda\ec[h_k]+\sum_{\rval\in\ps_k}\group{\lambda_\rval+\mu_\rval}\ec[{\indset{\rval}}]=0$, or equivalently, that $\lambda h_k+\sum_{\rval\in\ps_k}\group{\lambda_\rval+\mu_\rval}\indset{\rval}\in\ker\Lambda_k$ for some $\mu_\rval\geq0$, $\lambda_\rval\geq0$ and $\lambda\geq0$ for all $\rval$ in $\ps_k$, where $\lambda$ or at least one of $\cset{\lambda_\rval}{\rval\in\ps_k}$ are positive.
Let $\ps_k'\coloneqq\cset{\rval\in\ps_k}{\lambda_\rval+\mu_\rval>0}$ and $g\coloneqq\sum_{\rval\in\ps_k'}(\lambda_\rval+\mu_\rval)\indset{\rval}$, then we know that $\lambda h_k+g\in\ker\Lambda_k$.

There are now a number of possibilities.
The first is that $\lambda=0$, whence $\ps_k'\neq\emptyset$ and therefore $g\in\ker\Lambda_k\subseteq\dots\subseteq\ker\Lambda_1$.
This implies that $0=\Lambda_1(g)=\sum_{\rval\in\ps_k'}(\lambda_x+\mu_x)\Lambda_1\group{\indset{\rval}}$.
Since all $\indset{\rval}\svoi0$ and $\Lambda_1$ is positive, we find that $\indset{\rval}\in\ker\Lambda_1=\domain\Lambda_2$ for all $\rval$ in $\ps_k'$.
This in turn allows us to conclude that $0=\Lambda_2(g)=\sum_{\rval\in\ps_k'}(\lambda_x+\mu_x)\Lambda_2\group{\indset{\rval}}$.
Since all $\indset{\rval}\svoi0$ and $\Lambda_2$ is positive, we find that $\indset{\rval}\in\ker\Lambda_2=\domain\Lambda_3$ for all $\rval$ in $\ps_k'$.
We can go on in this way until we eventually conclude that $0=\Lambda_k(g)=\sum_{\rval\in\ps_k'}(\lambda_x+\mu_x)\Lambda_k\group{\indset{\rval}}$.
Since all $\indset{\rval}\svoi0$ and $\Lambda_k$ is positive, we find that $\indset{\rval}\in\ker\Lambda_k$ for all $\rval$ in $\ps_k'$, a contradiction.

The second possibility is that $\lambda>0$.
If now $\ps_k'=\emptyset$, we find that $\lambda h_k\in\ker\Lambda_k$, whence $\lambda\Lambda_k(h_k)=0$, a contradiction.
If $\ps_k'\neq\emptyset$, we find that $\lambda h_k+g\in\ker\Lambda_k\subseteq\dots\subseteq\ker\Lambda_1$. Since $h_k\in\ker\Lambda_{k-1}\subseteq\dots\subseteq\ker\Lambda_1$, this implies that $g\in\ker\Lambda_{k-1}\subseteq\dots\subseteq\ker\Lambda_1$ too.
This implies that $0=\Lambda_1(g)=\sum_{\rval\in\ps_k'}(\lambda_x+\mu_x)\Lambda_1\group{\indset{\rval}}$.
Since all $\indset{\rval}\svoi0$ and $\Lambda_1$ is positive, we find that $\indset{\rval}\in\ker\Lambda_1=\domain\Lambda_2$ for all $\rval$ in $\ps_k'$.
This in turn allows us to conclude that $0=\Lambda_2(g)=\sum_{\rval\in\ps_k'}(\lambda_x+\mu_x)\Lambda_2\group{\indset{\rval}}$.
Since all $\indset{\rval}\svoi0$ and $\Lambda_2$ is positive, we find that $\indset{\rval}\in\ker\Lambda_2=\domain\Lambda_3$ for all $\rval$ in $\ps_k'$.
We can go on in this way until we eventually conclude that $0=\Lambda_{k-1}(g)=\sum_{\rval\in\ps_k'}(\lambda_x+\mu_x)\Lambda_{k-1}\group{\indset{\rval}}$.
Since all $\indset{\rval}\svoi0$ and $\Lambda_{k-1}$ is positive, we find that $\indset{\rval}\in\ker\Lambda_{k-1}=\domain\Lambda_k$ for all $\rval$ in $\ps_k'$.
This now allows us to rewrite $\lambda h_k+g\in\ker\Lambda_k$ as $0=\Lambda_k(\lambda h_k+g)=\lambda\Lambda_k(h_k)+\sum_{\rval\in\ps_k'}(\lambda_\rval+\mu_\rval)\Lambda_k(\indset{\rval})$.
Since all $\indset{\rval}\svoi0$ and $\Lambda_k$ is positive, this implies that $\lambda\Lambda_k(h_k)\leq0$, a contradiction.
We conclude that, indeed, there is a non-zero linear functional $\tilde{\Gamma}_k$ on $\qs$ that satisfies Equation~\eqref{eq:separating hyperplane 2}.

We now define the new real linear functional $\Gamma_k$ on $\gambleson$ by letting
\begin{equation*}
\Gamma_k\group{f}\coloneqq\tilde{\Gamma}_k\group{\ec[f]}
\text{ for all $f$ in $\gambleson$.}
\end{equation*}
Observe that, since $f=\sum_{x\in\ps}f(x)\indset{x}$, this leads to
\begin{equation*}
\Gamma_k(f)
=\sum_{x\in\ps}f(x)\tilde{\Gamma}_k\group[\big]{\ec[\indset{x}]}
=\sum_{x\in\ps_k}f(x)\tilde{\Gamma}_k\group[\big]{\ec[\indset{x}]},
\end{equation*}
where the second equality follows from $\indset{x}\in\ker\Lambda_k$, and therefore $\ec[\indset{x}]=0$, for all $x\in\ps\setminus\ps_k$.
If we also take into account Equation~\eqref{eq:separating hyperplane 2}, this proves in particular that \ref{it:from:kernel:to:allgambles:positive} and~\ref{it:from:kernel:to:allgambles:nonzero} hold.

For the rest of the proof, consider any $f$ in $\ker\Lambda_{k-1}$ and $\lambda\coloneqq\Lambda_k\group{f}/\Lambda_k\group{h_k}$, a well-defined real number because $\Lambda_k(h_k)>0$.
Then $0=\Lambda_k\group{f}-\lambda\Lambda_k\group{h_k}=\Lambda_k\group{f-\lambda h_k}$, so $f-\lambda h_k\in\ker\Lambda_k$.
As a result, $\ec[f]=\ec[{\lambda h_k}]$ and therefore $\Gamma_k(f)=\tilde{\Gamma}_k(\ec[f])=\tilde{\Gamma}_k(\ec[{\lambda h_k}])=\lambda\tilde{\Gamma}_k(\ec[h_k])$.
Substituting back for $\lambda$, we get the equality:
\begin{equation*}
\Gamma_k(f)\Lambda_k\group{h_k}=\tilde{\Gamma}_k(\ec[h_k])\Lambda_k\group{f}.
\end{equation*}
Since both $\Lambda_k\group{h_k}>0$ and $\tilde{\Gamma}_k(\ec[h_k])>0$ [by Equation~\eqref{eq:separating hyperplane 2}], we see that $\Gamma_k(f)$ and $\Lambda_k\group{f}$ are either both zero, both (strictly) positive, or both (strictly) negative.
This proves~\ref{it:from:kernel:to:allgambles:kernels} and~\ref{it:from:kernel:to:allgambles:samesign}.
\end{proof}

We conclude that the sets of desirable options in $\cohlexisodvs$ are in a one-to-one correspondence with the lexicographic probability systems that have no non-trivial Savage-null events.
This is, of course, the reason why we have called the coherent sets of desirable options in $\cohlexisodvs\coloneqq\cset{\sodv\in\allcohsodvs}{\Posi\group{\sodv^c}=\sodv^c}$ \emph{lexicographic}.

Lexicographic probability systems can now also be related to specific types of choice functions, through Proposition~\ref{prop:CD coherent iff D lexicographic}: given a coherent set of desirable options $\sodv$, the most conservative coherent choice function $\cf[\sodv]$ whose binary choices are represented by $\sodv[\cf]=\sodv$ satisfies the convexity axiom~\ref{coh cf 5: convexity} if and only if $\sodv$ is a lexicographic set of desirable options.
We will call $\alllexicohcfs\coloneqq\cset{\cf[\sodv]}{\sodv\in\alllexicohsodvs}$ the set of \emph{lexicographic choice functions}.

Looking first at the most conservative coherent choice function that corresponds to $\sodv$ and then checking whether it is `convex', leads rather restrictively to lexicographic choice functions, and is only possible for lexicographic $\sodv$: convexity and choice based on Walley-Sen maximality are only compatible for lexicographic binary choice.
But suppose we turn things around somewhat, first restrict our attention to all `convex' coherent choice functions from the outset, and then look at the most conservative such choice function that makes the same binary choices as present in some given $\sodv$:
\begin{equation*}
\inf\cset{\cf\in\allcohcfs}
{\cf\text{ satisfies Axiom~\ref{coh cf 5: convexity} and }\sodv[\cf]=\sodv}.
\end{equation*}
We infer from Proposition~\ref{prop:inf of C5} that this infimum is still `convex' and coherent.
It will, of course, no longer be lexicographic, unless $\sodv$ is.
The following proposition tells us it still is an infimum of lexicographic choice functions. 

\begin{proposition}\label{prop:justification of lexicographic choice functions}
Consider an arbitrary coherent set of desirable options $\sodv$.
The most conservative of all coherent choice function $\cf$ that satisfies Axiom~\ref{coh cf 5: convexity} and $\sodv[\cf]=\sodv$ is the infimum of all lexicographic choice functions $\cf[\sodv']$ with $\sodv'$ in $\cohlexisodvs$ such that $\sodv\subseteq\sodv'$:
\begin{equation*}
\inf\cset{\cf\in\allcohcfs}
{\cf\text{ satisfies Axiom~\ref{coh cf 5: convexity} and }\sodv[\cf]=\sodv}
=
\inf\cset{\cf[\sodv']}
{\sodv'\in\cohlexisodvs\text{ and }\sodv\subseteq\sodv'}.
\end{equation*}
\end{proposition}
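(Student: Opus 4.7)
The plan is to prove the two $\nmi$-inequalities separately. Write $S_1$ for the set of coherent choice functions $\cf$ satisfying Axiom~\ref{coh cf 5: convexity} with $\sodv[\cf]=\sodv$, and $S_2$ for the family $\cset{\cf[\sodv']}{\sodv'\in\cohlexisodvs\text{ and }\sodv\subseteq\sodv'}$. For the easy direction $\inf S_1\nmi\inf S_2$ it suffices to show $\inf S_2\in S_1$. Coherence and Axiom~\ref{coh cf 5: convexity} are both preserved under arbitrary non-empty infima---by~\citet[Proposition~3]{vancamp2017} and by Proposition~\ref{prop:inf of C5} respectively---so only the identity $\sodv[\inf S_2]=\sodv$ needs checking. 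A short computation on binary option sets gives $\sodv[\inf S_2]=\bigcap\cset{\sodv'}{\sodv'\in\cohlexisodvs\text{ and }\sodv\subseteq\sodv'}$, and combining the dual atomicity $\sodv=\bigcap\maxcohsodvsdom{\sodv}$ from Proposition~\ref{prop: sodvs are dually atomic} with the inclusion $\maxcohsodvs\subseteq\cohlexisodvs$ established just before the definition of lexicographic sets collapses this intersection back to $\sodv$.

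For the reverse direction $\inf S_2\nmi\inf S_1$, it suffices to show $\inf S_2\nmi\cf$ for every $\cf\in S_1$. Fix such a $\cf$, an $\os\in\cfdom$, and some $f\in\cf\group{\os}$; applying Axiom~\ref{coh cf 4b: independence} with translation by $-f$, I reduce to $f=0\in\os$. The task becomes producing a lexicographic $\sodv'\supseteq\sodv$ with $0\in\cf[\sodv']\group{\os}$, which by Proposition~\ref{prop: CD in terms of intersection} amounts to finding such a $\sodv'$ disjoint from the convex cone $\cone\coloneqq\Posi\group{\os\setminus\set{0}}$. A preliminary substep is to verify $\sodv\cap\cone=\emptyset$: if $g=\sum_i\lambda_i u_i\in\sodv$ with $u_i\in\os\setminus\set{0}$ and $\lambda_i>0$, then rescaling through Axiom~\ref{coh sodv 3: positive scaling is desirable} places $g$ in $\ch\group{\os}$, Axiom~\ref{coh cf 5: convexity} transports $0\in\cf\group{\os}$ to $0\in\cf\group{\os\cup\set{g}}$, and Sen's $\alpha$ in the form of Equation~\eqref{eq:senalpha}---applied to $\set{0,g}\subseteq\os\cup\set{g}$ together with $g\in\sodv[\cf]$---produces a contradiction.

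The core step, and the main obstacle, is to extend $\sodv$ to a lexicographic $\sodv'$ disjoint from $\cone$. Rather than iterating the separating hyperplane theorem as in the converse direction of Theorem~\ref{thm:lexico sodvs}, I propose a Zorn-style argument. Let $\mathcal{F}\coloneqq\cset{\tilde\sodv\in\allcohsodvs}{\sodv\subseteq\tilde\sodv\text{ and }\tilde\sodv\cap\cone=\emptyset}$; this family is non-empty (it contains $\sodv$, by the preliminary substep) and closed under unions of chains, so Zorn's lemma yields a maximal element $\sodv'$. The key claim is that $\sodv'$ is lexicographic, i.e.\ that $\sodv'^c$ is a convex cone. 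To see this, take $v_1,v_2\in\sodv'^c$ with $\lambda_1,\lambda_2>0$ and suppose for contradiction that $w\coloneqq\lambda_1 v_1+\lambda_2 v_2\in\sodv'$; the case $v_i=0$ is handled directly through Axiom~\ref{coh sodv 3: positive scaling is desirable}, so assume $v_1,v_2\neq0$. Maximality of $\sodv'$ forces that $\Posi\group{\sodv'\cup\set{v_i}}\notin\mathcal{F}$, and an easy unpacking of this condition (using coherence of $\sodv'$) shows it is equivalent to: either $-v_i\in\sodv'$ or $v_i+d_i\in\cone$ for some $d_i\in\sodv'\cup\set{0}$. A case split over the four combinations of these alternatives---invoking Axioms~\ref{coh sodv 1: 0 is not desirable}, \ref{coh sodv 3: positive scaling is desirable} and~\ref{coh sodv 4: sum is desirable} together with $\sodv'\cap\cone=\emptyset$---derives a contradiction in every case. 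Hence $\sodv'$ is lexicographic, and since $\os\setminus\set{0}\subseteq\cone\subseteq\sodv'^c$, Proposition~\ref{prop: CD in terms of intersection} yields $0\in\cf[\sodv']\group{\os}$, completing the proof.
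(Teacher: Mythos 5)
Your proposal is correct, and while your easy direction is essentially the paper's (showing that $\inf\cset{\cf[\sodv']}{\sodv'\in\cohlexisodvs,\ \sodv\subseteq\sodv'}$ is coherent, satisfies Axiom~\ref{coh cf 5: convexity}, and has $\sodv$ as its binary trace, via Proposition~\ref{prop:inf of C5}, Proposition~\ref{prop: CD in terms of intersection}, dual atomicity and $\maxcohsodvs\subseteq\cohlexisodvs$), your hard direction takes a genuinely different route. Both arguments start from the same preliminary fact, proved the same way (Axiom~\ref{coh cf 5: convexity} plus Sen's $\alpha$ show that $\sodv$ is disjoint from the positive hull of the option set), but from there the paper constructs an explicit lexicographic probability system by iterating the separating hyperplane theorem (Theorem~\ref{thm:separating}) through successive kernels and then passes through Lemma~\ref{lem:from:kernel:to:allgambles} and Theorem~\ref{thm:lexico sodvs} to obtain the dominating lexicographic $\sodv^*$ avoiding $\os$; you instead take a Zorn-maximal coherent extension $\sodv'\supseteq\sodv$ disjoint from $\cone=\Posi\group{\os\setminus\set{0}}$ and verify directly that maximality forces $\sodv'^c$ to be a convex cone. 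Your dichotomy and case split do close: non-coherence of $\Posi\group{\sodv'\cup\set{v_i}}$ is indeed equivalent to $-v_i\in\sodv'$, meeting $\cone$ reduces by scaling to $v_i+d_i\in\cone$ with $d_i\in\sodv'\cup\set{0}$, and then $-v_1,-v_2\in\sodv'$ gives $0\in\sodv'$; $-v_1\in\sodv'$ together with $w\in\sodv'$ gives $v_2\in\sodv'$, contradicting $v_2\in\sodv'^c$; and $v_1+d_1,v_2+d_2\in\cone$ gives $w+\lambda_1d_1+\lambda_2d_2\in\sodv'\cap\cone$. What each approach buys: yours is shorter and more elementary in that it bypasses the finite-dimensional separation machinery entirely (so it does not rely on $\ps$ being finite, nor on Theorem~\ref{thm:lexico sodvs}), at the price of being non-constructive (Zorn's lemma) and of not exhibiting the lexicographic probability system behind $\sodv'$; the paper's construction is explicit in finite dimensions and keeps the representation in terms of lexicographic probability systems in view, reusing the machinery already developed for Theorem~\ref{thm:lexico sodvs}.
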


\begin{proof}
Denote the choice function on the left-hand side by $\cf[\mathrm{left}]$, and the one on the right-hand side by $\cf[\mathrm{right}]$.
Both are coherent, and so by Axiom~\ref{coh cf 4b: independence} completely characterised by the option sets from which $0$ is chosen.
Consider any $\os$ in $\cfdomo$, then we have to show that $0\in\cf[\mathrm{left}]\group{\set{0}\cup\os}\iff0\in\cf[\mathrm{right}]\group{\set{0}\cup\os}$.

For the direct implication, we assume that $0\in\cf[\mathrm{left}]\group{\set{0}\cup\os}$, meaning that there is some $\cf^*$ in $\allcohcfs$ that satisfies Axiom~\ref{coh cf 5: convexity}, $\sodv[\cf^*]=\sodv$ and $0\in\cf^*\group{\set{0}\cup\os}$.
We have to prove that there is some $\sodv^*$ in $\cohlexisodvs$ such that $\sodv\subseteq\sodv^*$ and $\sodv^*\cap\os=\emptyset$ [by Proposition~\ref{prop: CD in terms of intersection}], and we will do so by constructing a suitable lexicographic probability system, by a repeated application of an appropriate version of the separating hyperplane theorem [Theorem~\ref{thm:separating}], as in the proof of Theorem~\ref{thm:lexico sodvs}.

To prepare for this, we prove that $\Posi\group{\set{0}\cup\os}\cap\sodv=\emptyset$.
Indeed, assume \emph{ex absurdo} that $\Posi\group{\set{0}\cup\os}\cap\sodv\neq\emptyset$, so there is some $f\in\sodv$ such that $f\in\Posi\group{\set{0}\cup\os}$.
Then there is some $\lambda$ in $\posreals$ such that $g\coloneqq\lambda f\in\ch\group{\set{0}\cup\os}$.
Let $\os'\coloneqq\os\cup\set{g}$, so $\set{0}\cup\os'\subseteq\ch\group{\set{0}\cup\os}$, whence $0\in\cf^*\group{\set{0}\cup\os'}$ by Axiom~\ref{coh cf 5: convexity}, if we recall that $0\in\cf^*\group{\set{0}\cup\os}$.
But $f\in\sodv$ implies that $g\in\sodv$, and since $\sodv[\cf^*]=\sodv$, also that $g\in\sodv[\cf^*]$, or equivalently, $0\in\rf^*\group{\set{0,g}}$, by Proposition~\ref{prop: CD in terms of intersection}.
Version~\eqref{eq:senalpha} of Axiom~\ref{coh cf 3a: alpha} then guarantees that $0\in\rf^*\group{\set{0}\cup\os'}$, a contradiction.

It follows from this observation that we can apply Theorem~\ref{thm:separating} to show that there is some non-zero linear functional $\Lambda_1$ on $\gambles$ such that
\begin{equation}\label{eq:first:separation}
\Lambda_1\group{f}\leq0
\text{ for all $f$ in $\Posi\group{\set{0}\cup\os}$ and }
\Lambda_1\group{f}\geq0
\text{ for all $f$ in $\sodv$}.
\end{equation}
[Apply Theorem~\ref{thm:separating} with $\mathcal{B}=\gambleson$, $\mathcal{W}_2=\Posi\group{\set{0}\cup\os}$ and $\mathcal{W}_1=\sodv\cup\set{0}$, then $\interior\group{\mathcal{W}_1}\cap\mathcal{W}_2=\emptyset$ since $\interior\group{\mathcal{W}_1}\subseteq\sodv$, and $0\in\mathcal{W}_1\cap\mathcal{W}_2$.]
Its kernel $\ker\Lambda_1$ is an $n-1$-dimensional linear space, where $n$ is the dimension of $\gambleson$---the cardinality of $\ps$.
Since both $\sodv$ and $\ker\Lambda_1$ are convex cones, their intersection $\ker\Lambda_1\cap\sodv$ is either empty or a convex cone.
When $\ker\Lambda_1\cap\sodv=\emptyset$, we let $\ell\coloneqq1$, and stop.

When $\ker\Lambda_1\cap\sodv\neq\emptyset$, it follows from the same version of the separating hyperplane theorem that there is some non-zero linear functional $\Lambda_2$ on $\ker\Lambda_1$ such that
\begin{equation*}
\Lambda_2\group{f}\leq0
\text{ for all $f$ in $\ker\Lambda_1\cap\Posi\group{\set{0}\cup\os}$ and }
\Lambda_2\group{f}\geq0
\text{ for all $f$ in $\ker\Lambda_1\cap\sodv$}.
\end{equation*}
[Apply Theorem~\ref{thm:separating} with $\mathcal{B}=\ker\Lambda_1$, $\mathcal{W}_2=\Posi\group{\set{0}\cup\os}\cap\ker\Lambda_1$ and $\mathcal{W}_1=\group{\ker\Lambda_1\cap\sodv}\cup\set{0}$, then $\interior\group{\mathcal{W}_1}\cap\mathcal{W}_2=\emptyset$ since $\mathcal{W}_2\subseteq\Posi\group{\set{0}\cup\os}$ and $\interior\group{\mathcal{W}_1}\subseteq\sodv$, and $0\in\mathcal{W}_1\cap\mathcal{W}_2$.] $\ker\Lambda_2$ is a $n-2$-dimensional linear space.
As before, $\sodv\cap\ker\Lambda_2$ is either empty or a non-empty convex cone.
If it is empty, let $\ell\coloneqq2$; otherwise, repeat the same procedure over and over again, leading to successive non-zero linear functionals $\Lambda_k$ on $\ker\Lambda_{k-1}$ such that
\begin{equation}\label{eq:successive:separation}
\Lambda_k\group{f}\leq0
\text{ for all $f$ in $\ker\Lambda_{k-1}\cap\Posi\group{\set{0}\cup\os}$ and }
\Lambda_k\group{f}\geq0
\text{ for all $f$ in $\ker\Lambda_{k-1}\cap\sodv$},
\end{equation}
until eventually we get to the first $k$ such that $\sodv\cap\ker\Lambda_k=\emptyset$, and then let $\ell\coloneqq k$ and stop.
We are guaranteed to stop after at most $n$ repetitions, since $\ker\Lambda_n$ is the $0$-dimensional linear space $\set{0}$, for which $\sodv\cap\ker\Lambda_n=\emptyset$.
For the last functional $\Lambda_\ell$, we have that
\begin{equation}\label{eq:final:separation}
\Lambda_\ell\group{f}>0\text{ for all $f$ in $\sodv\cap\ker\Lambda_{\ell-1}$.}
\end{equation}
To see this, recall that by construction $\Lambda_\ell\group{f}\geq0$ for all $f$ in $\sodv\cap\ker\Lambda_{\ell-1}$, and that $\sodv\cap\ker\Lambda_\ell=\emptyset$.

In this fashion we obtain $\ell$ linear functionals $\Lambda_1$, \dots, $\Lambda_\ell$, each defined on the kernel of the previous functional---except for the domain $\gambleson$ of $\Lambda_1$.
We now show that we can turn the $\Lambda_1$, \dots, $\Lambda_\ell$ into expectation operators: positive and normalised linear functionals on the linear space $\gambleson$.
Indeed, consider their respective extensions $\Gamma_2$, \dots, $\Gamma_\ell$ to $\gambleson$ from Lemma~\ref{lem:from:kernel:to:allgambles}, and let $\Gamma_1\coloneqq\Lambda_1$.
They satisfy $\Gamma_k(1)>0$ for all $k$ in $\set{1,\dots,\ell}$; see Proposition~\ref{prop:lexicographic D induces linear prevision} and Lemma~\ref{lem:from:kernel:to:allgambles}\ref{it:from:kernel:to:allgambles:nonzero}.
Now consider the real linear functionals on $\gambleson$ defined by $\E[k]\group{f}\coloneqq\Gamma_k\group{f}/\Gamma_k(1)$ for all $k$ in $\set{1,\dots,\ell}$ and $f$ in $\gambleson$.
It is obvious from Lemma~\ref{lem:from:kernel:to:allgambles}\ref{it:from:kernel:to:allgambles:positive} that these linear functionals are normalised and positive, and therefore expectation operators on $\gambleson$.
Indeed each $\E[k]$ is the expectation operator associated with the mass function $\mf[k]$ defined by $\mf[k]\group{\rval}\coloneqq\E[k](\indset{\rval})$ for all $\rval$ in $\ps$.
In this way, $\mf\coloneqq(\mf[1],\dots,\mf[\ell])$ defines a lexicographic probability system.

We now prove that $\mf$ has no non-trivial Savage-null events, using Proposition~\ref{prop:Savage-null}.
Assume \emph{ex absurdo} that there is some $\rval^*$ in $\ps$ such that $\mf[k]\group{\rval^*}=\E[k]\group{\indset{\rval^*}}=0$ for all $k$ in $\set{1,\dots,\ell}$.
Then $\indset{\rval^*}\in\ker\Gamma_1=\ker\Lambda_1$ and $\indset{\rval^*}\in\Gamma_k$ for all $k$ in $\set{2,\dots,\ell}$.
Invoke Lemma~\ref{lem:from:kernel:to:allgambles}\ref{it:from:kernel:to:allgambles:kernels} to find that $\indset{\rval^*}\in\ker\Lambda_1\cap\ker\Gamma_2=\ker\Lambda_2$.
Repeated application of this same lemma eventually leads us to conclude that in $\indset{\rval^*}\in\ker\Lambda_{\ell-1}$ and $\indset{\rval^*}\in\ker\Lambda_\ell$.
Since also $\indset{\rval^*}\in\sodv$ and hence $\indset{\rval^*}\in\sodv\cap\ker\Lambda_{\ell-1}$ [Axiom~\ref{coh sodv 2: positive vectors are desirable}], Equation~\eqref{eq:final:separation} implies that $\Lambda_\ell\group{\indset{\rval^*}}>0$, a contradiction.

If we now let $\sodv^*\coloneqq\cset{f\in\gambleson}{0\lo(\E[1]\group{f},\dots,\E[\ell]\group{f})}$, then $\sodv^*\in\cohlexisodvs$ by Theorem~\ref{thm:lexico sodvs}.
If we can show that $\sodv\subseteq\sodv^*$ and $\os\cap\sodv^*=\emptyset$, we are done.
So first, consider any $f$ in $\sodv$.
Then $\Lambda_1(f)\geq0$ by Equation~\eqref{eq:first:separation}.
If $\Lambda_1(f)>0$ then also $\E[1](f)>0$ by Lemma~\ref{lem:from:kernel:to:allgambles}\ref{it:from:kernel:to:allgambles:nonzero}, and therefore $f\in\sodv^*$.
If $\Lambda_1(f)=0$ then $\Lambda_2(f)\geq0$ by Equation~\eqref{eq:successive:separation}.
If $\Lambda_2(f)>0$ then also $\E[2](f)>0$ by Lemma~\ref{lem:from:kernel:to:allgambles}\ref{it:from:kernel:to:allgambles:nonzero}\&\ref{it:from:kernel:to:allgambles:samesign}, and therefore $f\in\sodv^*$.
We can go on in this way until we get to the first $k$ for which $\Lambda_k(f)>0$, and therefore also $\E[k](f)>0$ by Lemma~\ref{lem:from:kernel:to:allgambles}\ref{it:from:kernel:to:allgambles:nonzero}\&\ref{it:from:kernel:to:allgambles:samesign}, whence therefore $f\in\sodv^*$.
We are guaranteed to find such a $k$ because we infer from Equation~\eqref{eq:final:separation} that $\Lambda_\ell(f)>0$.
This shows that indeed $\sodv\subseteq\sodv^*$.

Secondly, consider any $f$ in $\os$.
Then $\Lambda_1(f)\leq0$ by Equation~\eqref{eq:first:separation}.
If $\Lambda_1(f)<0$ then also $\E[1](f)<0$ by Lemma~\ref{lem:from:kernel:to:allgambles}\ref{it:from:kernel:to:allgambles:nonzero}, and therefore $f\notin\sodv^*$.
If $\Lambda_1(f)=0$ then $\Lambda_2(f)\leq0$ by Equation~\eqref{eq:successive:separation}.
If $\Lambda_2(f)<0$ then also $\E[2](f)<0$ by Lemma~\ref{lem:from:kernel:to:allgambles}\ref{it:from:kernel:to:allgambles:nonzero}\&\ref{it:from:kernel:to:allgambles:samesign}, and therefore $f\notin\sodv^*$.
If we go on in this way, only two things can happen: either there is a first $k$ for which $\Lambda_k(f)<0$, and therefore also $\E[k](f)<0$ by Lemma~\ref{lem:from:kernel:to:allgambles}\ref{it:from:kernel:to:allgambles:nonzero}\&\ref{it:from:kernel:to:allgambles:samesign}, whence therefore $f\notin\sodv^*$.
Or we find that $\Lambda_k(f)\leq0$, and therefore also $\E[k](f)\leq0$ by Lemma~\ref{lem:from:kernel:to:allgambles}\ref{it:from:kernel:to:allgambles:nonzero}\&\ref{it:from:kernel:to:allgambles:samesign}, for all $k\in\set{1,\dots,\ell}$, whence again $f\notin\sodv^*$.
This shows that indeed $A\cap\sodv^*=\emptyset$.

For the converse implication, assume that $0\in\cf[\mathrm{right}]\group{\set{0}\cup\os}$.
We must prove that there is some $\tilde{\cf}$ in $\allcohcfs$ that satisfies Axiom~\ref{coh cf 5: convexity}, $\sodv[\tilde{\cf}]=\sodv$ and $0\in\tilde{\cf}\group{\set{0}\cup\os}$.
We claim that $\tilde{\cf}\coloneqq\cf[\mathrm{right}]$ does the job.
Because we know by assumption that $0\in\cf[\mathrm{right}]\group{\set{0}\cup\os}$, and from Propositions~\ref{prop:CD coherent iff D lexicographic} and~\ref{prop:inf of C5} that $\cf[\mathrm{right}]$ is coherent and satisfies Axiom~\ref{coh cf 5: convexity}, it only remains to prove that $\sodv[{\cf[\mathrm{right}]}]=\sodv$.
To this end, consider any $f$ in $\gambleson$ and recall the following equivalences:
\begin{align*}
f\in\sodv[{\cf[\mathrm{right}]}]
&\iff
0\in\rf[\mathrm{right}]\group{\set{0,f}}
&&\text{[Equation~\eqref{eq:binary:behaviour}]}\\
&\iff
\group{\forall\sodv'\in\cohlexisodvs}
\group{\sodv\subseteq\sodv'\then0\in\rf[\sodv']\group{\set{0,f}}}
&&\text{[definition of $\inf$]}\\
&\iff
\group{\forall\sodv'\in\cohlexisodvs}
\group{\sodv\subseteq\sodv'\then f\in\sodv'}
&&\text{[Proposition~\ref{prop: CD in terms of intersection}]}\\
&\iff
f\in\sodv,
&&\text{[Proposition~\ref{prop: sodvs are dually atomic} and $\maxcohsodvs\subseteq\cohlexisodvs$]}
\end{align*}
which completes the proof.
\end{proof}

As a consequence of this result, we also have that, for any coherent set of desirable options $\sodv$, 
\begin{equation*}
\inf\cset{\cf\in\allcohcfs}
{\cf\text{ satisfies Axiom~\ref{coh cf 5: convexity} and }\sodv\subseteq\sodv[\cf]}
=
\inf\cset{\cf[\sodv']}
{\sodv'\in\cohlexisodvs\text{ and }\sodv\subseteq\sodv'}.
\end{equation*}

\section{Discussion and future research}\label{sec:conclusions}

One of the advantages of lexicographic probability systems is that they are more informative than single probability measures, and that they allow us to deal with some of the issues that arise when conditioning on sets of probability zero.
This is also the underlying idea behind some imprecise probability models, such as sets of desirable gambles.
In this paper, we have investigated the connection between the two models, by means of the more general theory of coherent choice functions.
We have shown that lexicographic probability systems correspond to the convexity axiom that was considered by Seidenfeld et al.
when considering choice functions on horse lotteries.
The study of this axiom has led to the consideration of what we have called lexicographic sets of desirable gambles.

In addition, we have also discussed the connection between our notion of coherent choice functions on abstract vectors, and the earlier notion for horse lotteries, developed mostly by \citet{seidenfeld2010}.
We have proved that by defining choice functions on arbitary vector spaces---something which also proves useful when studying the implications of an indifference assessment~\citep{vancamp2017}---we can include choice functions on horse lotteries as a particular case.
This allows us in particular to formulate our results for that framework.
Note, nevertheless, that there are some differences between \citeauthor{seidenfeld2010}'s \citeyearpar{seidenfeld2010} approach and ours, due to the rationality axioms considered and also to the fact that they deal with possibly infinite (but closed) sets of options, whereas our model assumes that choices are always made between finite sets of alternatives.
It would be interesting to investigate the extent to which our results can be generalised to infinite option sets.

One of the advantages of~\citeauthor{seidenfeld2010}'s \citeyearpar{seidenfeld2010} approach is that it leads to a representation theorem, in the sense that any coherent choice function can be obtained as the infimum of an arbitrary family of more informative convex coherent choice functions that essentially correspond to probability mass functions.
Based on the results and conclusions derived here, it seems natural to wonder if a similar result can be established in our framework.
Unfortunately, the answer to this question is negative: it turns out that in addition to convexity we need another axiom, which we have called \emph{weak Archimedeanity}.
With this extra axiom, at least for binary possibility spaces, it turns out a similar representation result can be proved: every such choice function is an infimum of its \emph{lexicographic} dominating choice functions, showing the importance of a study of lexicographic choice functions also from another angle of perspective.
The observation that we need an Archimedean axiom is in agreement with~\citeauthor{seidenfeld2010}'s \citeyearpar{seidenfeld2010} need of their \emph{Archimedean axiom}, which is----unlike our \emph{weak} Archimedeanity---difficult to join with desirability.
We intend to report on these results elsewhere.

On the other hand, we would also like to combine our results with the discussion by \citet{vancamp2017}, and investigate indifference and conditioning for the special case of lexicographic choice functions. 
In particular, this should allow us to link our work with \citeauthor{blume1991}'s \citeyearpar{blume1991} discussion of conditioning lexicographic probabilities. 
Finally, it may be interesting to generalise our results in Section~\ref{sec:lexicographic} to lexicographic probability systems defined on infinite spaces.

\end{document}